\theoremstyle{definition}
\newtheorem{definition}{Definition}[section]
\theoremstyle{theorem}
\newtheorem{theorem}{Theorem}[section]
\theoremstyle{lemma}
\newtheorem{lemma}{Lemma}[section]
\newcommand{\citeeg}[1]{\citeauthor{#1}, \citeyear{#1}}
\icmltitlerunning{A Laplacian Framework for Option Discovery in Reinforcement Learning}
\DeclareMathOperator*{\argmax}{arg\,max}
\begin{document} 

\twocolumn[
\icmltitle{A Laplacian Framework for Option Discovery in Reinforcement Learning}


\begin{icmlauthorlist}
\icmlauthor{Marlos C. Machado}{equal}
\icmlauthor{Marc G. Bellemare}{to}
\icmlauthor{Michael Bowling}{equal}
\end{icmlauthorlist}

\icmlaffiliation{equal}{University of Alberta}
\icmlaffiliation{to}{Google DeepMind}

\icmlcorrespondingauthor{Marlos C. Machado}{machado@ualberta.ca}

\icmlkeywords{reinforcement learning, option discovery, proto-value functions}

\vskip 0.3in
]



\printAffiliationsAndNotice{}  

\begin{abstract}
Representation learning and option discovery are two of the biggest challenges in reinforcement learning (RL). Proto-value functions (PVFs) are a well-known approach for representation learning in MDPs. In this paper we address the option discovery problem by showing how PVFs implicitly define options. We do it by introducing \emph{eigenpurposes}, intrinsic reward functions derived from the learned representations. The options discovered from eigenpurposes traverse the principal directions of the state space. They are useful for multiple tasks because they are discovered without taking the environment's rewards into consideration. Moreover, different options act at different time scales, making them helpful for exploration. We demonstrate features of eigenpurposes in traditional tabular domains as well as in Atari 2600 games.

\end{abstract} 

\section{Introduction}
\label{introduction}

Two important challenges in reinforcement learning (RL) are the problems of representation learning and of automatic discovery of skills. Proto-value functions (PVFs) are a well-known solution for the problem of representation learning~\cite{Mahadevan05,Mahadevan07}; while the problem of skill discovery is generally posed under the options framework~\cite{Sutton99,Precup00}, which models skills as options.

In this paper, we tie together representation learning and option discovery by showing how PVFs implicitly define options. One of our main contributions is to introduce the concepts of \emph{eigenpurpose} and \emph{eigenbehavior}. Eigenpurposes are intrinsic reward functions that incentivize the agent to traverse the state space by following the principal directions of the learned representation. Each intrinsic reward function leads to a different \emph{eigenbehavior}, which is the optimal policy for that reward function. In this paper we introduce an algorithm for option discovery that leverages these ideas. The options we discover are task-independent because, as PVFs, the eigenpurposes are obtained without any information about the environment's reward structure. We first present these ideas in the tabular case and then show how they can be generalized to the function approximation case.

Exploration, while traditionally a separate problem from option discovery, can also be addressed through the careful construction of options~\cite{McGovern01,Simsek05,Solway14,Kulkarni16}. In this paper, we provide evidence that not all options capable of accelerating planning are useful for exploration. We show that options traditionally used in the literature to speed up planning hinder the agents' performance if used for random exploration during learning.  Our options have two important properties that allow them to improve exploration: (i)~they operate at different time scales, and (ii)~they can be easily sequenced. Having options that operate at different time scales allows agents to make finely timed actions while also decreasing the likelihood the agent will explore only a small portion of the state space. Moreover, because our options are defined across the whole state space, multiple options are available in every state, which allows them to be easily sequenced.

\section{Background}
\label{background}

We generally indicate random variables by capital letters (\emph{e.g.}, $R_t$), vectors by bold letters (\emph{e.g.}, $\bm{\theta}$), functions by lowercase letters (\emph{e.g.}, $v$), and sets by calligraphic font (\emph{e.g.}, $\mathscr{S}$).

\subsection{Reinforcement Learning}
In the RL framework \citep{Sutton98}, an agent aims to maximize cumulative reward by taking actions in an environment. These actions affect the agent's next state and the rewards it experiences. We use the MDP formalism throughout this paper. An MDP is a 5-tuple $\langle\mathscr{S}, \mathscr{A}, r, p, \gamma\rangle$. At time $t$ the agent is in state $s_t \in \mathscr{S}$ where it takes action $a_t \in \mathscr{A}$ that leads to the next state $s_{t+1} \in \mathscr{S}$ according to the transition probability kernel $p(s'|s, a)$, which encodes $\Pr(S_{t+1} = s' | S_{t} = s, A_{t} = a)$. The agent also observes a reward $R_{t+1} \sim r(s, a)$. The agent's goal is to learn a policy $\mu : \mathscr{S} \times \mathscr{A} \rightarrow [0,1]$ that maximizes the expected discounted return $G_t \doteq \mathbb{E}_{p,\mu} \big[\sum_{k=0}^{\infty} \gamma^k R_{t+k+1} | s_t\big]$, where $\gamma \in [0, 1)$ is the discount factor. 

It is common to use the policy improvement theorem~\cite{Bellman57} when learning to maximize $G_t$. One technique is to alternate between solving the Bellman equations for the \emph{action-value function} $q_{\mu_k}(s,a)$,
\begin{eqnarray*}
q_{\mu_k}(s,a) \!\!\!\!\! &\doteq& \!\!\!\! \mathbb{E}_{\mu_k, p} \big[G_t | S_t = s, A_t = a\big]\\
               \!\!\!\!\! &=& \!\!\!\! \sum_{s', r} p(s', r | s, a) \big[r + \gamma \sum_{a'} \mu_k(a'|s')q_{\mu_k}(s', a')\big]
\end{eqnarray*}
and making the next policy, $\mu_{k+1}$, greedy w.r.t. $q_{\mu_{k}}$,
\begin{eqnarray*}
\mu_{k+1} \doteq \argmax_{a \in \mathscr{A}} q_{\mu_{k}}(s,a),
\end{eqnarray*}
until converging to an optimal policy $\mu_*$.

Sometimes it is not feasible to learn a value for each state-action pair due to the size of the state space. Generally, this is addressed by parameterizing $q_\mu(s,a)$ with a set of weights $\bm{\theta} \in \mathbb{R}^n$ such that $q_\mu(s,a) \approx q_\mu(s, a, \bm{\theta})$. It is common to approximate $q_\mu$ through a linear function, \emph{i.e.}, $q_\mu(s, a, \bm{\theta}) = \bm{\theta}^\top \bm{\phi}(s,a)$, where $\bm{\phi}(s, a)$ denotes a linear feature representation of state $s$ when taking action $a$.

\subsection{The Options Framework}

The options framework extends RL by introducing temporally extended actions called \emph{skills} or \emph{options}. An option $\omega$ is a 3-tuple $\omega = \langle \mathcal{I}, \pi, \mathcal{T} \rangle$ where $\mathcal{I} \in \mathscr{S}$ denotes the option's initiation set, $\pi : \mathscr{A} \times \mathscr{S} \rightarrow [0,1]$ denotes the option's policy, and $\mathcal{T} \in \mathscr{S}$ denotes the option's termination set. After the agent decides to follow option $\omega$ from a state in $\mathcal{I}$, actions are selected according to $\pi$ until the agent reaches a state in $\mathcal{T}$. Intuitively, options are higher-level actions that extend over several time steps, generalizing MDPs to semi-Markov decision processes (SMDPs)~\cite{Puterman94}. 

Traditionally, options capable of moving agents to \emph{bottleneck} states are sought after. Bottleneck states are those states that connect different densely connected regions of the state space (\emph{e.g.}, doorways) \cite{Simsek04,Solway14}. They have been shown to be very efficient for planning as these states are the states most frequently visited when considering the \emph{shortest} distance between any two states in an MDP~\cite{Solway14}.

\subsection{Proto-Value Functions}

Proto-value functions (PVFs) are learned representations that capture large-scale temporal properties of an environment~\cite{Mahadevan05,Mahadevan07}. They are obtained by diagonalizing a diffusion model, which is constructed from the MDP's transition matrix. A diffusion model captures information flow on a graph, and it is commonly defined by the \emph{combinatorial graph Laplacian} matrix $L = D - A$, where $A$ is the graph's adjacency matrix and $D$ the diagonal matrix whose entries are the row sums of $A$. Notice that the adjacency matrix $A$ easily generalizes to a weight matrix $W$. PVFs are defined to be the eigenvectors obtained after the eigendecomposition of $L$. Different diffusion models can be used to generate PVFs, such as the \emph{normalized graph Laplacian} $L = D^{-\frac{1}{2}}(D - A)D^{-\frac{1}{2}}$, which we use in this paper.

\section{Option Discovery through the Laplacian}
\label{option_discovery}

PVFs capture the large-scale geometry of the environment, such as symmetries and bottlenecks. They are task independent, in the sense that they do not use information related to reward functions. Moreover, they are defined over the whole state space since each eigenvector induces a real-valued mapping over each state. We can imagine that options with these properties should also be useful. In this section we show how to use PVFs to discover options.

Let us start with an example. Consider the traditional 4-room domain depicted in Figure~\ref{fig:4room}. Gray squares represent walls and white squares represent accessible states. Four actions are available: \emph{up}, \emph{down}, \emph{right}, and \emph{left}. The transitions are deterministic and the agent is not allowed to move into a wall. Ideally, we would like to discover options that move the agent from room to room. Thus, we should be able to automatically distinguish between the different rooms in the environment. This is exactly what PVFs do, as depicted in Figure~\ref{fig:eigen_option} (left). Instead of interpreting a PVF as a basis function, we can interpret the PVF in our example as a desire to reach the highest point of the plot, corresponding to the centre of the room. 
Because the sign of an eigenvector is arbitrary, a PVF can also be interpreted as a desire to reach the lowest point of the plot, corresponding to the opposite room. In this paper we use the eigenvectors in both directions (\emph{i.e.}, both signs).

An \emph{eigenpurpose} formalizes the interpretation above by defining an intrinsic reward function. We can see it as defining a \emph{purpose} for the agent, that is, to maximize the discounted sum of these rewards. 

\theoremstyle{definition}
\begin{definition}[Eigenpurpose]
An \emph{eigenpurpose} is the intrinsic reward function $r_i^{\bf{e}}(s, s')$ of a proto-value function $\bf{e} \in \mathbb{R}^{|\mathscr{S}|}$ such that
\begin{eqnarray}
r_i^{\bf{e}}(s, s') &=& {\bf e}^\top (\bm{\phi}(s') - \bm{\phi}(s)),
\end{eqnarray}
where $\bm{\phi}(x)$ denotes the feature representation of state $x$.
\end{definition}

Notice that an eigenpurpose, in the tabular case, can be written as $r_i^{\bf e}(s, s') = {\bf e}[s'] - {\bf e}[s]$.

We can now define a new MDP to learn the option associated with the purpose, $\mathcal{M}_i^{\bf e} = \langle \mathscr{S}, \mathscr{A} \cup \{\bot\}, r_i^{\bf e}, p, \gamma \rangle$, where the reward function is defined as in (1) and the action set is augmented by the action \emph{terminate} ($\bot$), which allows the agent to leave $\mathcal{M}_i^{\bf e}$ without any cost. The state space and the transition probability kernel remain unchanged from the original problem. The discount rate can be chosen arbitrarily, although it impacts the timescale the option encodes.

With $\mathcal{M}_i^{\bf e}$ we define a new state-value function $v_\pi^{\bf e}(s)$, for policy $\pi$, as the expected value of the cumulative discounted intrinsic reward if the agent starts in state $s$ and follows policy $\pi$ until termination. Similarly, we define a new action-value function $q_\pi^{\bf e}(s, a)$ as the expected value of the cumulative discounted intrinsic reward if the agent starts in state $s$, takes action $a$, and then follows policy $\pi$ until termination. We can also describe the optimal value function for any eigenpurpose obtained through $\bf{e}$:
$$ v_*^{\bf e}(s) = \max_{\pi} v_\pi^{\bf e}(s) \ \ \ \ \ \mbox{and} \ \ \ \ \ \ q_*^{\bf e}(s, a) = \max_{\pi} q_\pi^{\bf e}(s,a). $$
These definitions naturally lead us to \emph{eigenbehaviors}.
\begin{definition}[Eigenbehavior]
An \emph{eigenbehavior} is a policy $\chi^{\bf e}: \mathscr{S} \rightarrow \mathscr{A}$ that is optimal with respect to the eigenpurpose $r_i^{\bf e}$, \emph{i.e.}, $\chi^{\bf e}(s) = \argmax_{a \in \mathscr{A}} q_{*}^{\bf e}(s, a)$. 
\end{definition}

\begin{figure}[t]
    \centering
    \begin{subfigure}[b]{0.3\columnwidth}
    \center
        \includegraphics[width=\columnwidth]{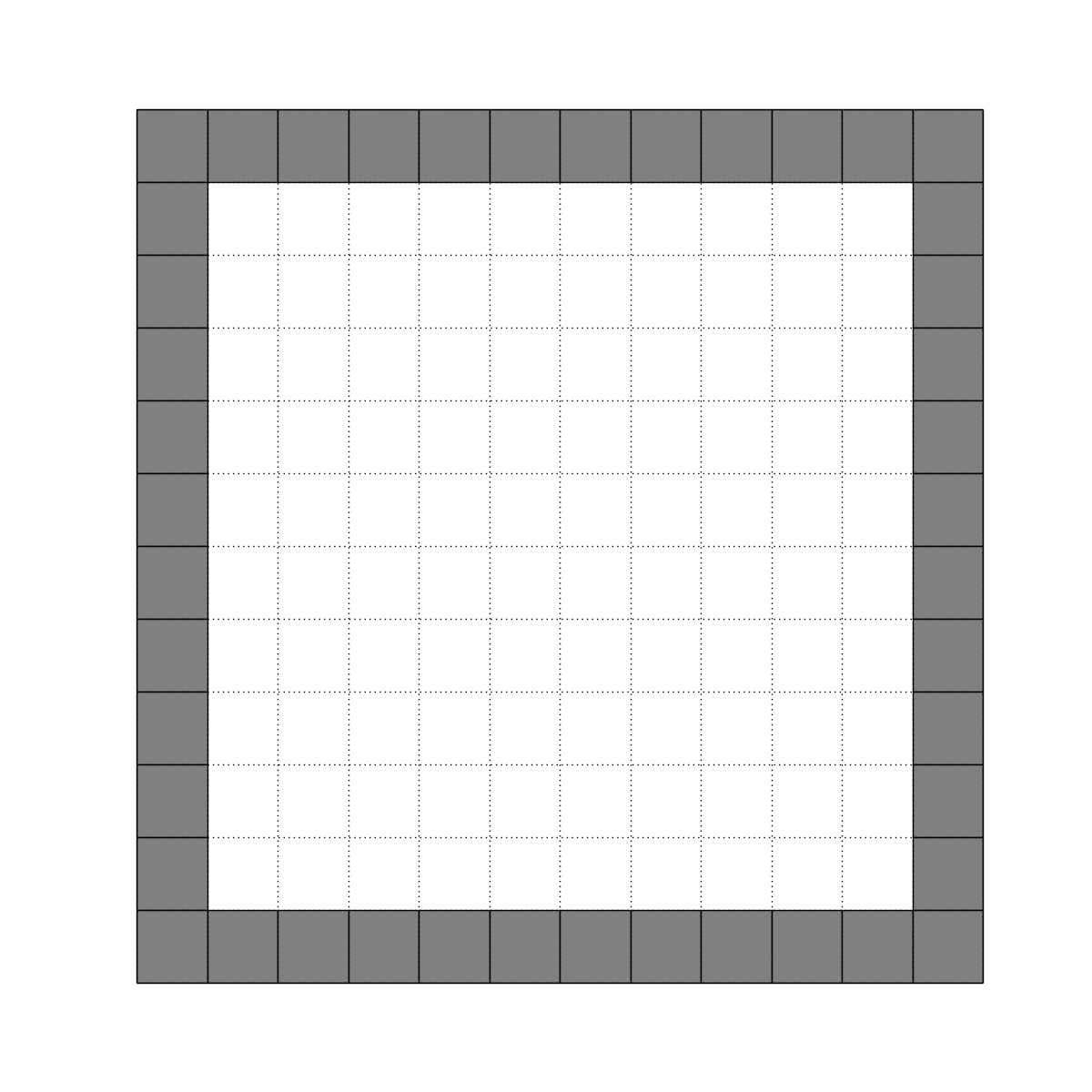}
        \caption{10$\times$10 grid}
        \label{fig:opengrid}
    \end{subfigure}
    ~
    \begin{subfigure}[b]{0.3\columnwidth}
    \center
        \raisebox{8.75mm}{\includegraphics[height=\columnwidth,angle=90]{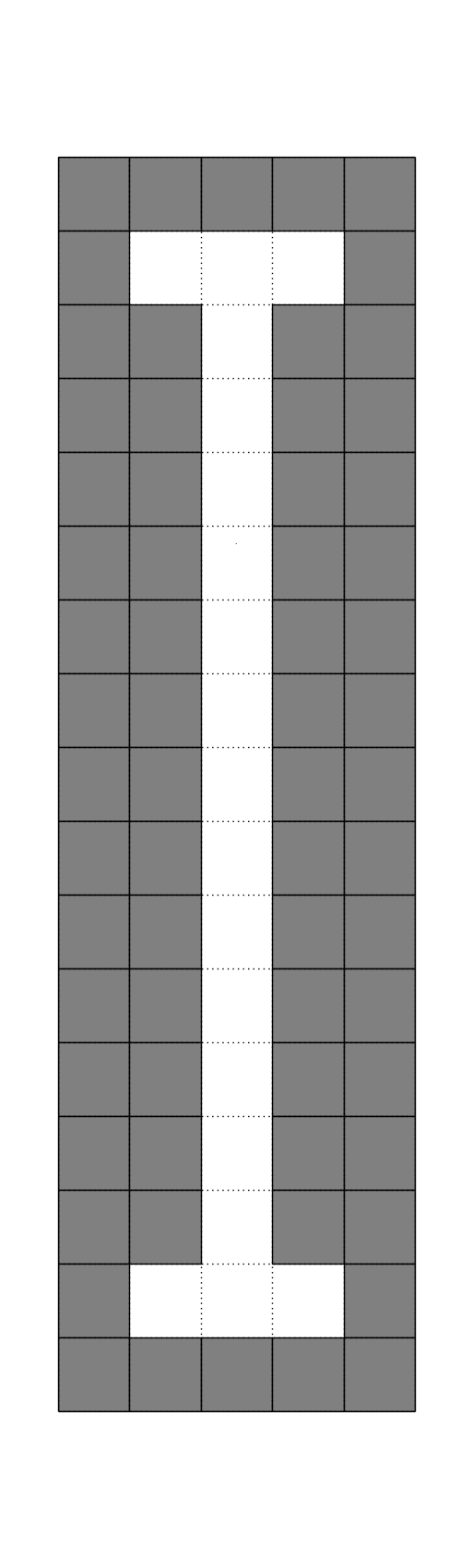}}
        \caption{I-Maze}
        \label{fig:corridor}
    \end{subfigure}
    ~
    \begin{subfigure}[b]{0.3\columnwidth}
    \center
        \includegraphics[width=\columnwidth]{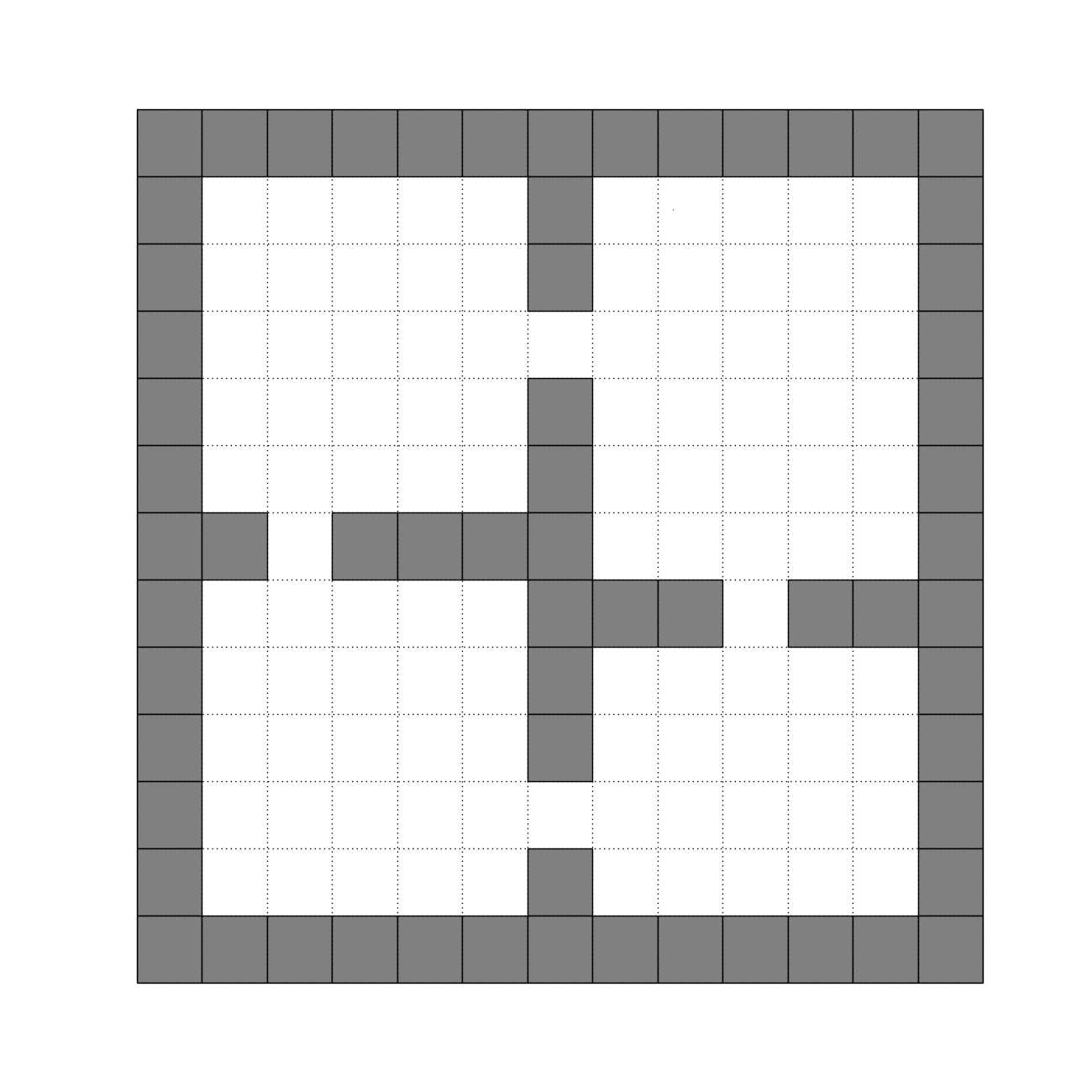}
        \caption{4-room domain}
        \label{fig:4room}
    \end{subfigure}
    \caption{Domains used for evaluation.}\label{fig:domains}
\end{figure}

Finding the optimal policy $\pi_*^{\bf e}$ now becomes a traditional RL problem, with a different reward function. Importantly, this reward function tends to be dense, avoiding challenging situations due to exploration issues. In this paper we use policy iteration to solve for an optimal policy. 

If each eigenpurpose defines an option, its corresponding eigenbehavior is the option's policy. Thus, we need to define the option's initiation and termination set. An option should be available in every state where it is possible to achieve its purpose, and to terminate when it is achieved.

When defining the MDP to learn the option, we augmented the agent's action set with the \emph{terminate} action, allowing the agent to interrupt the option anytime. We want options to terminate when the agent achieves its purpose, \emph{i.e.}, when it is unable to accumulate further positive intrinsic rewards. With the defined reward function, this happens when the agent reaches the state with largest value in the eigenpurpose (or a local maximum when $\gamma < 1$). Any subsequent reward will be negative. We are able to formalize this condition by defining $q_\chi(s,\bot) \doteq 0$ for all $\chi^{\bf e}$. When the terminate action is selected, control is returned to the higher level policy~\cite{Dietterich00}. An option following a policy $\chi^{\bf e}$ terminates when $q_\chi^{\bf e} (s, a) \leq 0$ for all $a \in \mathscr{A}$. We define the initiation set to be all states in which there exists an action $a \in \mathscr{A}$ such that $q_\chi^{\bf e} (s, a) > 0$. Thus, the option's policy is $\pi^{\bf e}(s) = \argmax_{a \in \mathscr{A} \cup \{\bot\}} q_\pi^{\bf e}(s,a)$. We refer to the options discovered with our approach as \emph{eigenoptions}. The eigenoption corresponding to the example at the beginning of this section is depicted in Figure~\ref{fig:eigen_option} (right).

\begin{figure}[t]
    \centering
    \begin{subfigure}[b]{0.49\columnwidth}
        \includegraphics[width=\columnwidth]{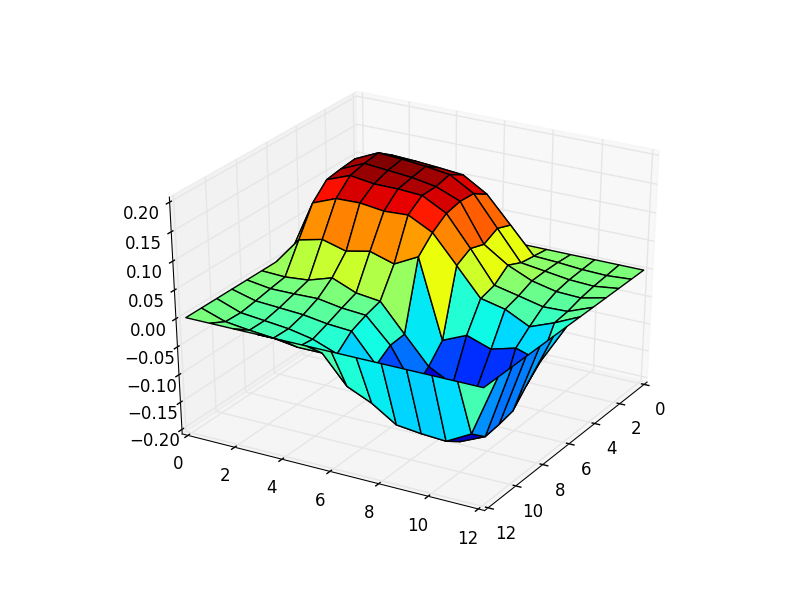}
    \end{subfigure}
    \begin{subfigure}[b]{0.49\columnwidth}
        \includegraphics[width=0.8\columnwidth]{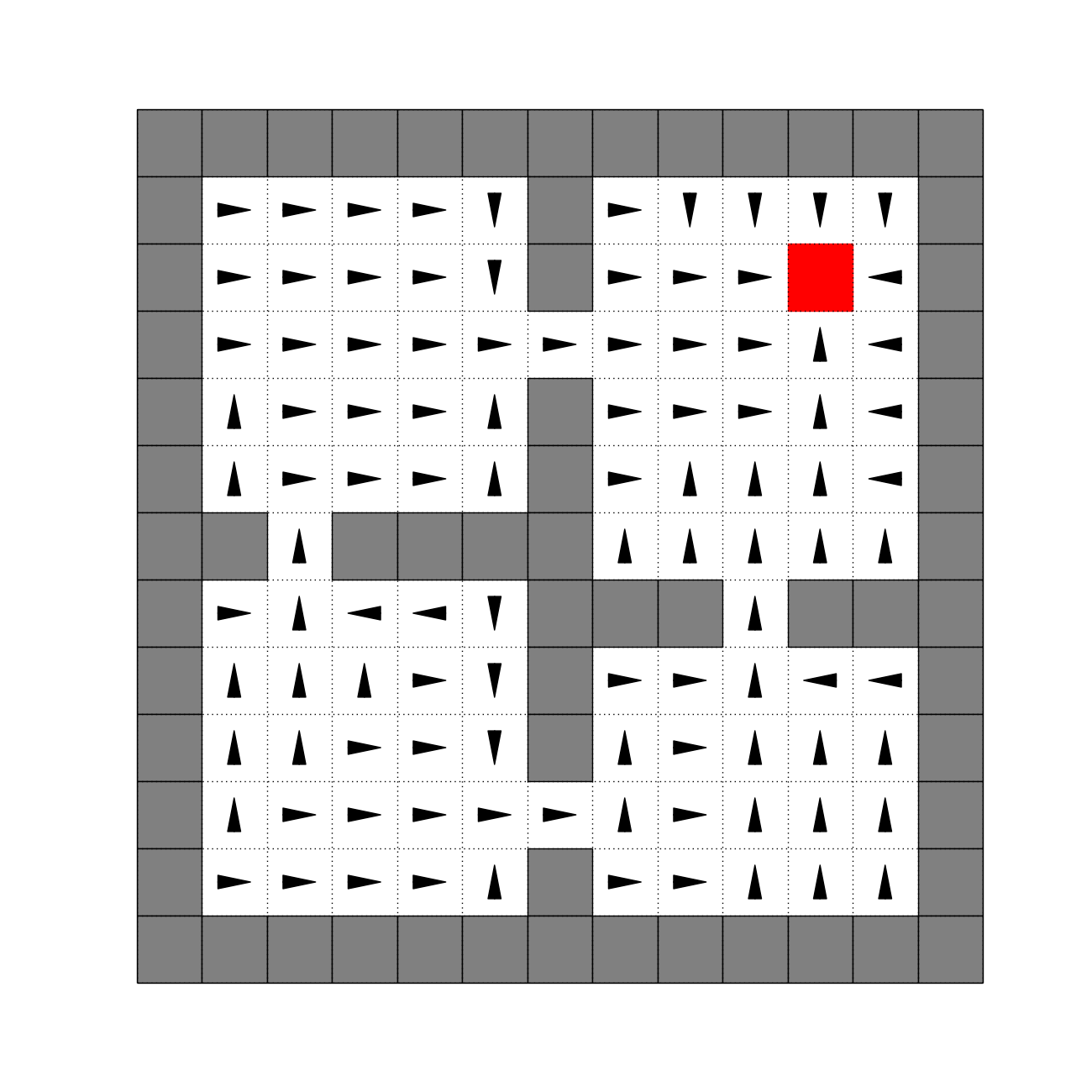}
    \end{subfigure}
    \caption{Second PVF~(left) and its corresponding option~(right) in the 4-room domain. Action \emph{terminate} is depicted in red (top right corner), other actions are depicted as arrows.}\label{fig:eigen_option}
\end{figure}

For any eigenoption, there is always at least one state in which it terminates, as we now show.

\begin{figure*}[t]
    \centering
    \begin{subfigure}[b]{0.24\textwidth}
    \center
        \includegraphics[width=0.8\columnwidth]{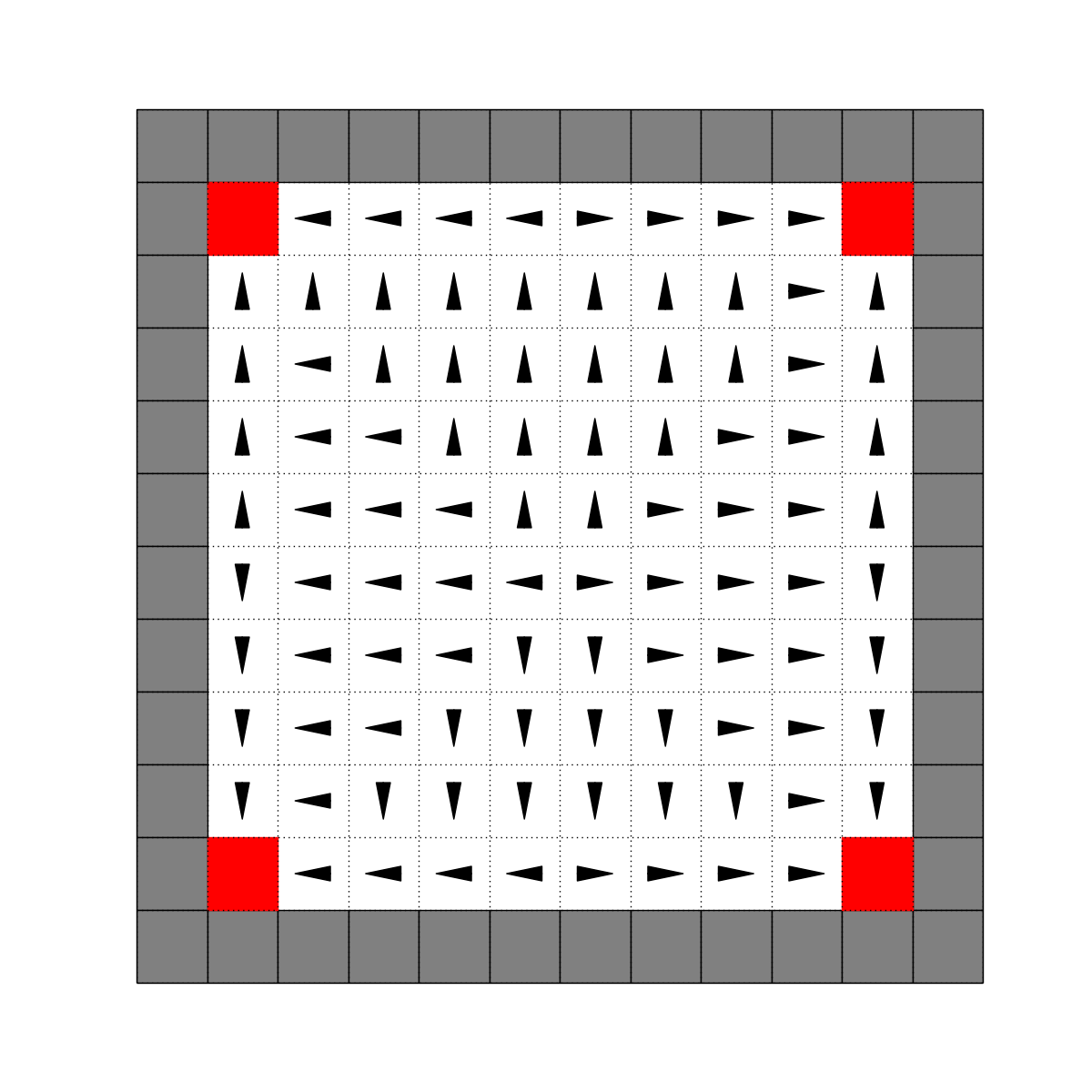}
    \end{subfigure}
    \begin{subfigure}[b]{0.24\textwidth}
    \center
        \includegraphics[width=0.8\columnwidth]{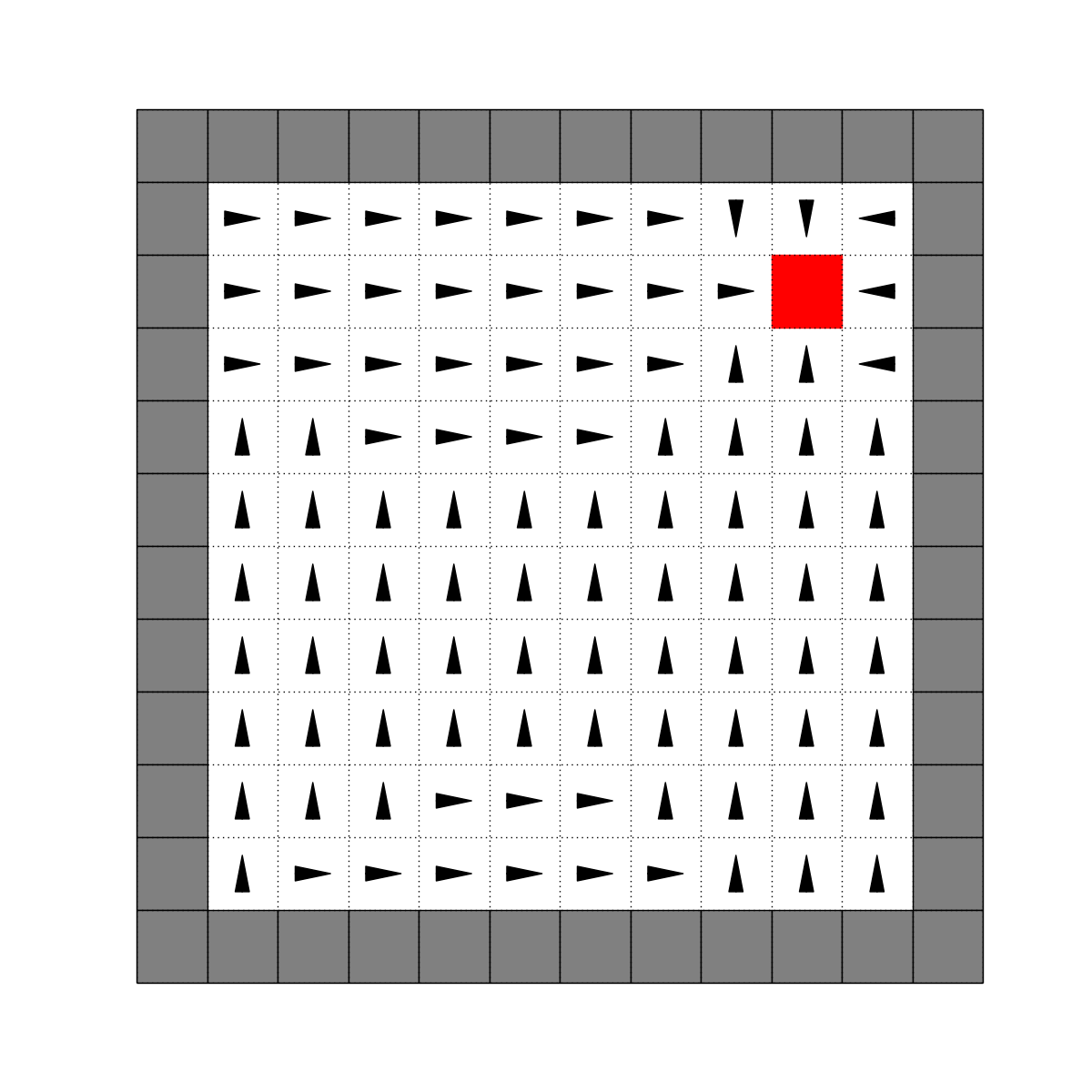}
    \end{subfigure}
    \begin{subfigure}[b]{0.24\textwidth}
    \center
        \includegraphics[width=0.8\columnwidth]{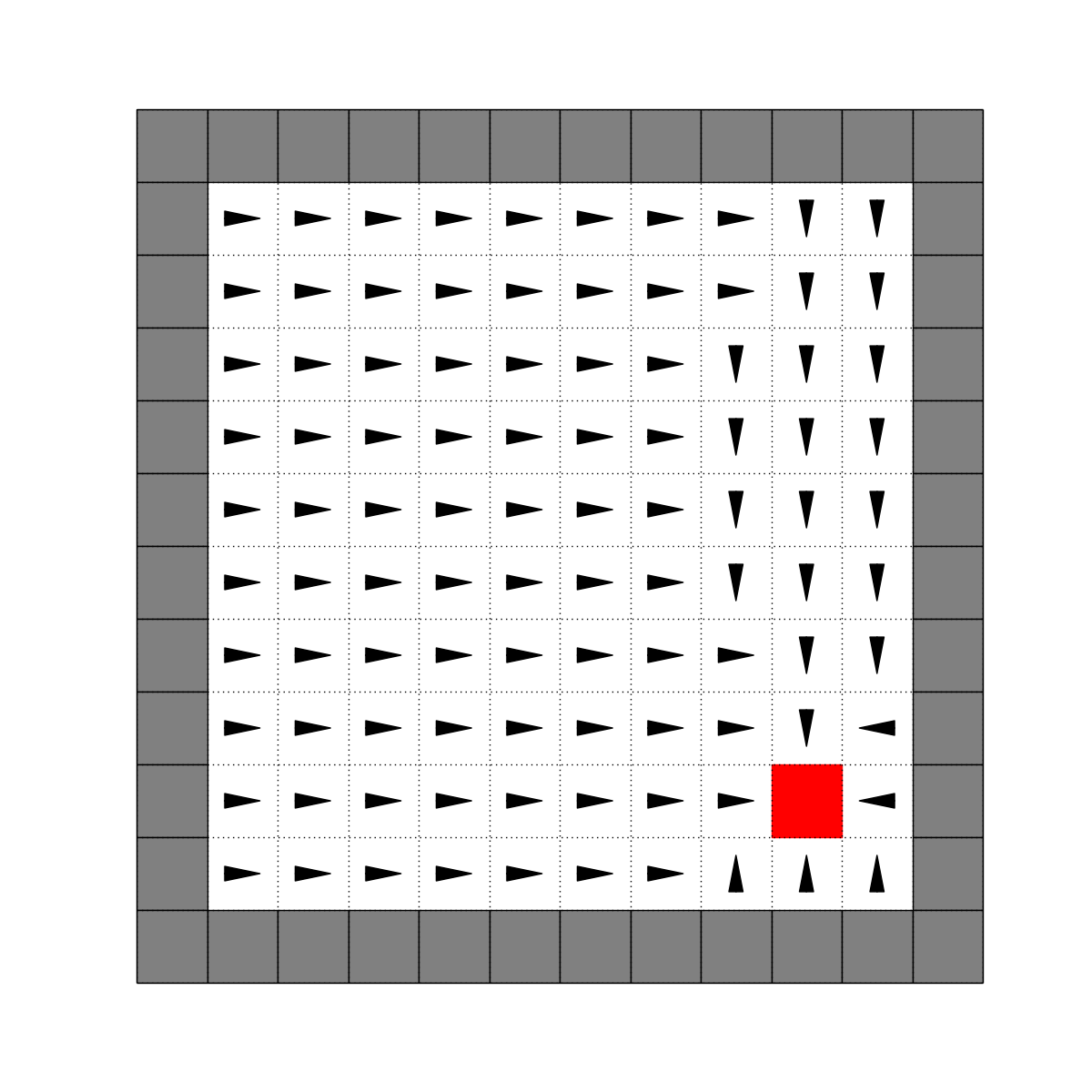}
    \end{subfigure}
    \begin{subfigure}[b]{0.24\textwidth}
    \center
        \includegraphics[width=0.8\columnwidth]{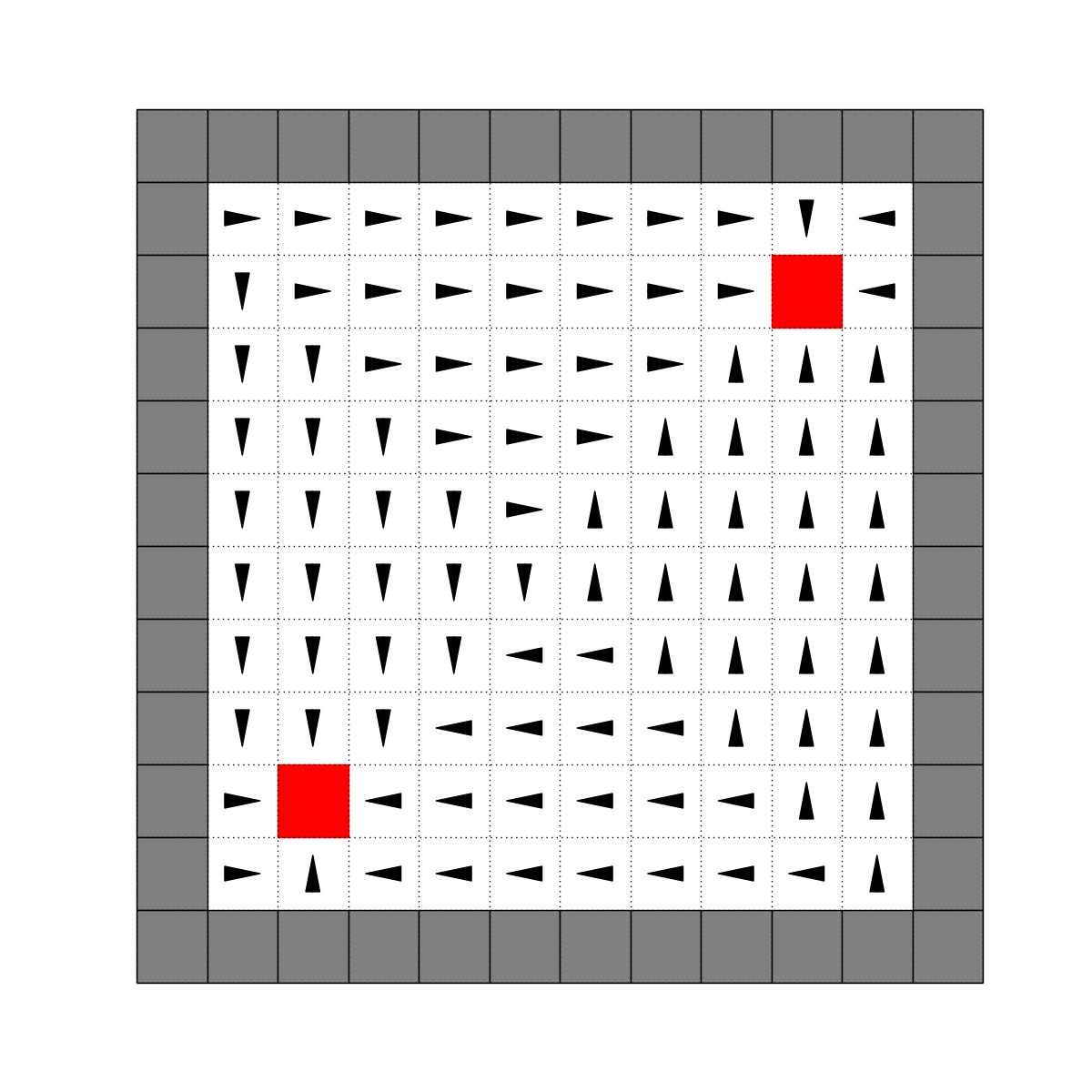}
    \end{subfigure}
    \caption{Options obtained from the four smallest eigenvectors in the 10$\times$10 grid. Action \emph{terminate} is depicted in red.}
    \label{fig:options_opengrid}
\end{figure*}

\begin{figure*}[t]
    \centering
    \begin{subfigure}[b]{0.24\textwidth}
    \center
        \includegraphics[height=0.8\columnwidth,angle=90]{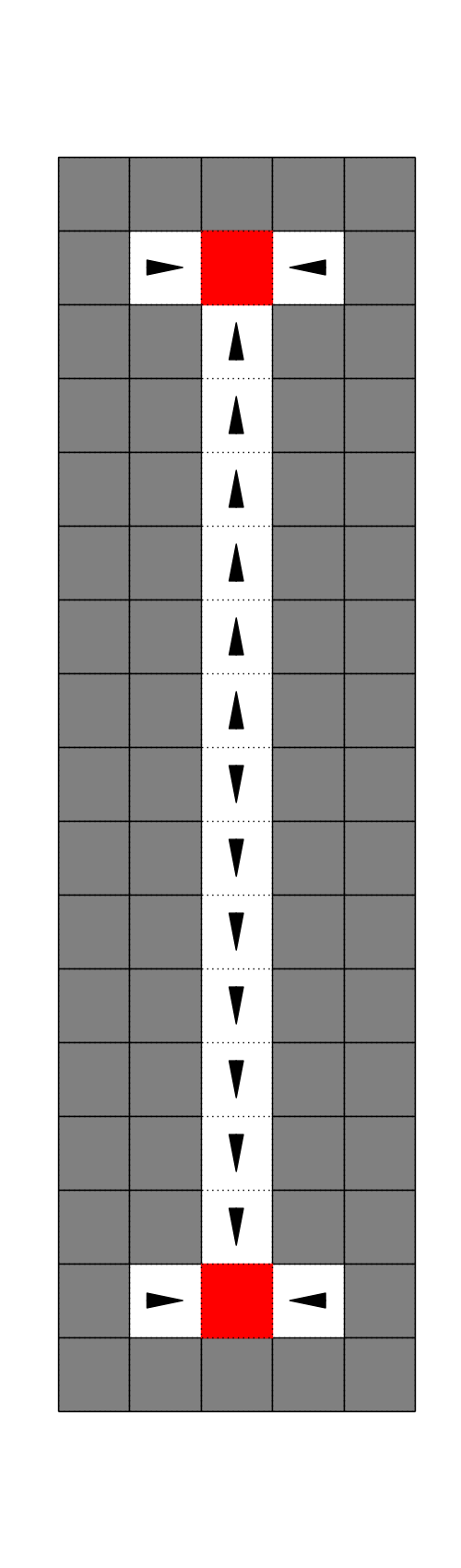}
    \end{subfigure}
    \begin{subfigure}[b]{0.24\textwidth}
    \center
        \includegraphics[height=0.8\columnwidth,angle=90]{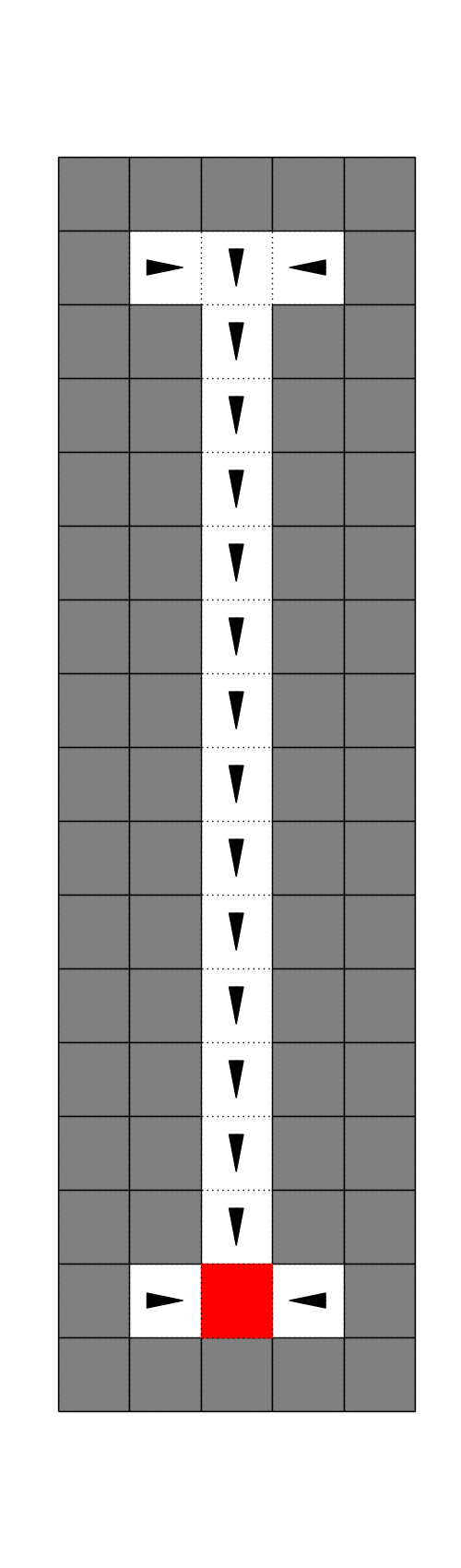}
    \end{subfigure}
    \begin{subfigure}[b]{0.24\textwidth}
    \center
        \includegraphics[height=0.8\columnwidth,angle=90]{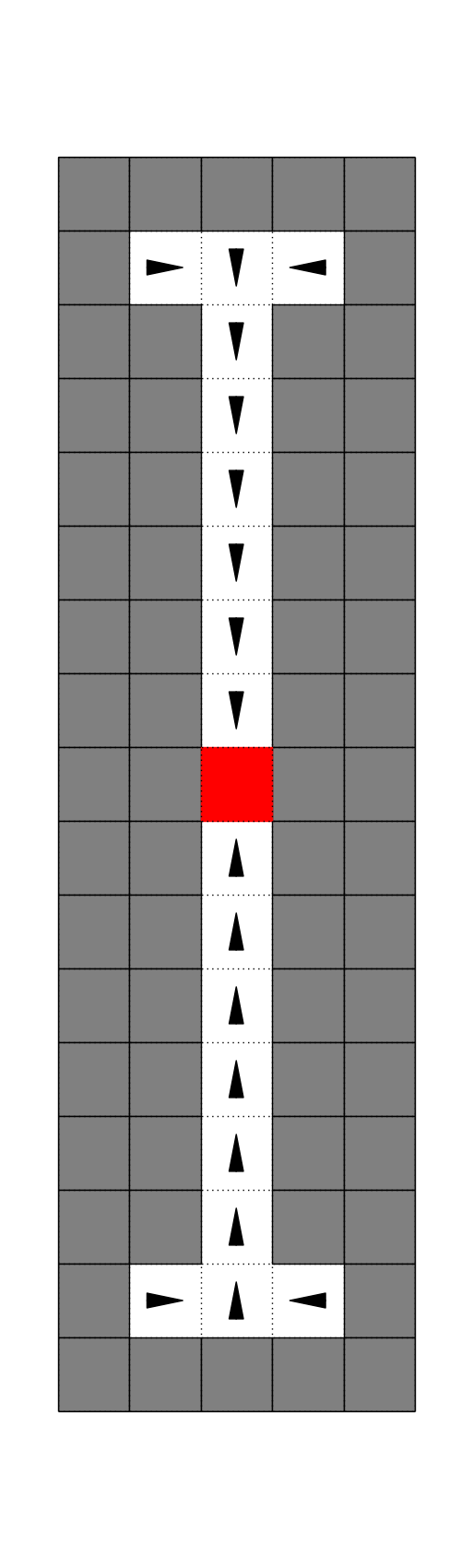}
    \end{subfigure}
    \begin{subfigure}[b]{0.24\textwidth}
    \center
        \includegraphics[height=0.8\columnwidth,angle=90]{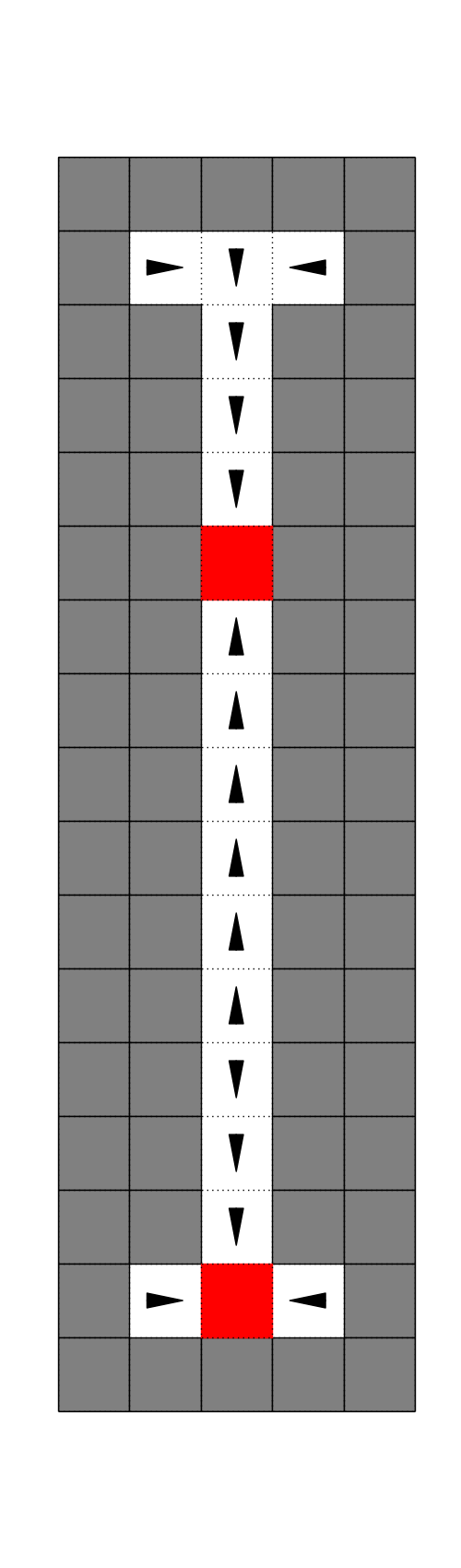}
    \end{subfigure}
    \caption{Options obtained from the four smallest eigenvectors in the I-Maze domain. Action \emph{terminate} is depicted in red.}
    \label{fig:options_longI}
\end{figure*}

\begin{theorem}[Option's Termination]
Consider an eigenoption $o = \langle\mathcal{I}_o, \pi_o, \mathcal{T}_o\rangle$ and $\gamma~<~1$. Then, in an MDP with finite state space, $\mathcal{T}_o$ is nonempty.
\end{theorem}
\begin{proof}
We can write the Bellman equation in the matrix form: ${\bf v} = {\bf r} + \gamma T \bf{v}$, where $\bf{v}$ is a \emph{finite} column vector with one entry per state encoding its value function. From (1) we have ${\bf r} = T\bf{w} - \bf{w}$ with ${\bf w} = \bm{\phi}(s)^\top \bf{e}$, where $\bf{e}$ denotes the eigenpurpose of interest. Therefore:
\begin{align*}
\bf{v} + \bf{w}                              &=      T{\bf w} + \gamma T {\bf v}\\
                                             &=      (1- \gamma) T {\bf w} + \gamma T ({\bf v + w})\\
                                             &=      (1 - \gamma) (I - \gamma T)^{-1} T \bf{w}.
\end{align*}
\begin{align*}
 ||\bf{v} + \bf{w}||_\infty                  &=      (1 - \gamma)||(I - \gamma T)^{-1} T \bf{w}||_\infty\\
 ||\bf{v} + \bf{w}||_\infty                  &\le    (1 - \gamma)||(I - \gamma T)^{-1} T||_\infty ||\bf{w}||_\infty\\
 ||\bf{v} + \bf{w}||_\infty                  &\le    (1 - \gamma) \frac{1}{(1-\gamma)} ||\bf{w}||_\infty\\
 ||\bf{v} + \bf{w}||_\infty                  &\le    ||\bf{w}||_\infty\\
\end{align*}
We can shift $\bf{w}$ by any finite constant without changing the reward, \emph{i.e.}, $T {\bf w \! - \! w} = T({\bf w} \! + \! \bm{\delta}) - ({\bf w} \! + \! \bm{\delta})$ because $T{\bf1}\bm{\delta} = \bf{1}\bm{\delta}$ since $\sum_j T_{i,j} \! = \! 1$. Hence, we can assume $\bf{w} \! \ge \!\bf{0}$. Let $s^* = \argmax_s {\bf w}_{s^*}$, so that ${\bf w}_{s^*} = ||{\bf w}||_\infty$. Clearly ${\bf v}_{s^*} \le \bf{0}$, otherwise $||{\bf v + w}||_\infty \ge |{\bf v}_{s^*} + {\bf w}_{s^*}| = {\bf v}_{s^*} + {\bf w}_{s^*} > {\bf w}_{s^*} = ||{\bf w}||_\infty$, arriving at a contradiction.
\end{proof}

This result is applicable in both the tabular and linear function approximation case. An algorithm that does not rely on knowing the underlying graph is provided in Section~\ref{sec:ale}.

\section{Empirical Evaluation}
\label{sec:experiments}

We used three MDPs in our empirical study (\emph{c.f.}~Figure~\ref{fig:domains}): an open room, an I-Maze, and the 4-room domain. Their transitions are deterministic and gray squares denote walls. Agents have access to four actions: \emph{up}, \emph{down}, \emph{right}, and \emph{left}. When an action that would have taken the agent into a wall is chosen, the agent's state does not change. We demonstrate three aspects of our framework:\footnote{\emph{Python} code can be found at: \\ \indent \indent \url{https://github.com/mcmachado/options}}
\begin{itemize}
\setlength\itemsep{0.1cm}
\item How the eigenoptions present specific purposes. Interestingly, options leading to bottlenecks are not the first ones we discover.
\item How eigenoptions improve exploration by reducing the expected number of steps required to navigate between any two states.
\item How eigenoptions help agents to accumulate reward faster. We show how few options may hurt the agents' performance while enough options speed up learning.
\end{itemize}

\subsection{Discovered Options}~\label{sec:discovered_options}

\vspace{-0.6cm}

In the PVF theory, the ``smoothest'' eigenvectors, corresponding to the smallest eigenvalues, are preferred~\cite{Mahadevan07}. The same intuition applies to eigenoptions, with the eigenpurposes corresponding to the smallest eigenvalues being preferred. Figures~\ref{fig:options_opengrid}, \ref{fig:options_longI}, and~\ref{fig:options_4room} depict the first eigenoptions discovered in the three domains used for evaluation.

\begin{figure*}[t]
    \centering
    \begin{subfigure}[b]{0.24\textwidth}
    \center
        \includegraphics[width=0.8\columnwidth]{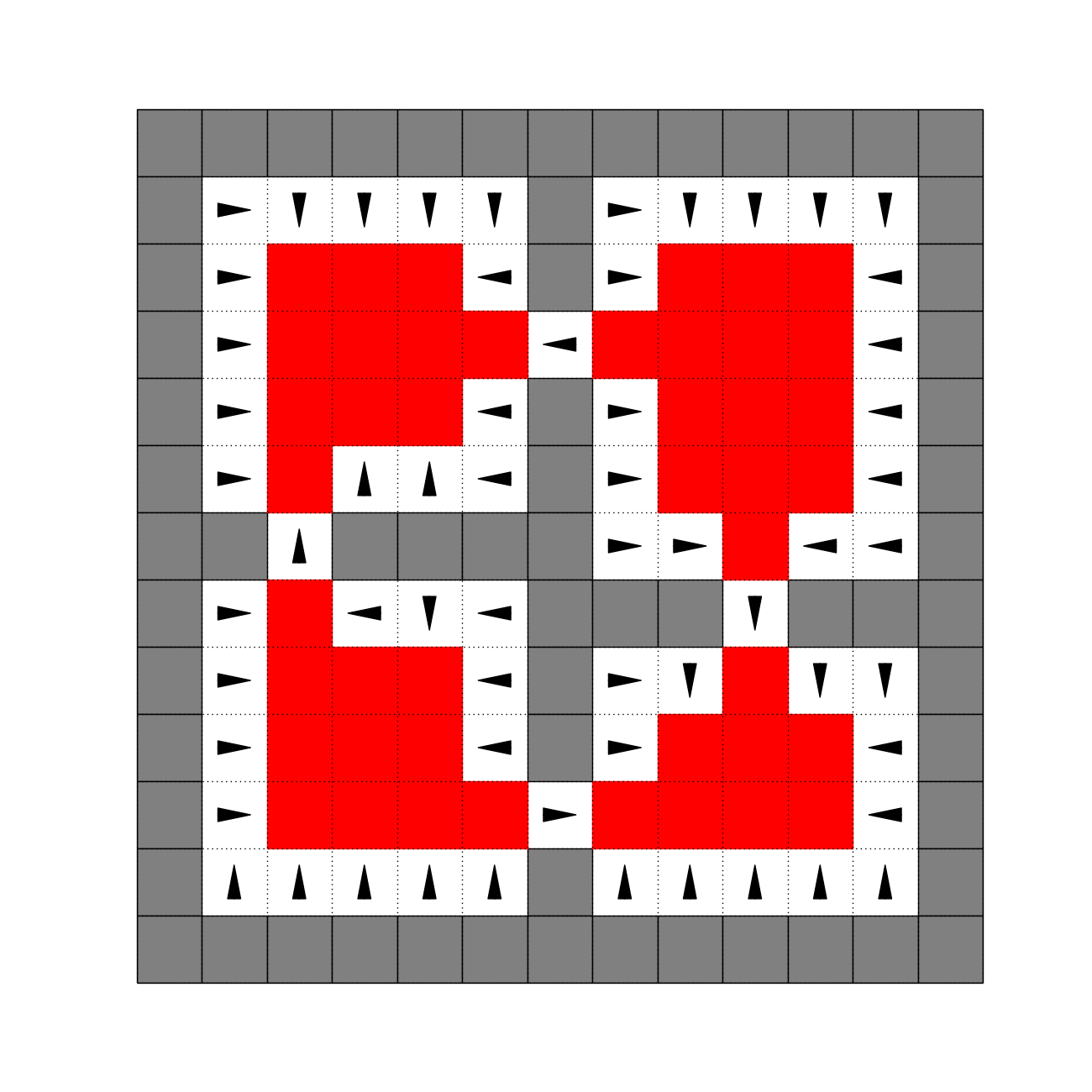}
    \end{subfigure}
    \begin{subfigure}[b]{0.24\textwidth}
    \center
        \includegraphics[width=0.8\columnwidth]{fig/options/4rooms_2}
    \end{subfigure}
    \begin{subfigure}[b]{0.24\textwidth}
    \center
        \includegraphics[width=0.8\columnwidth]{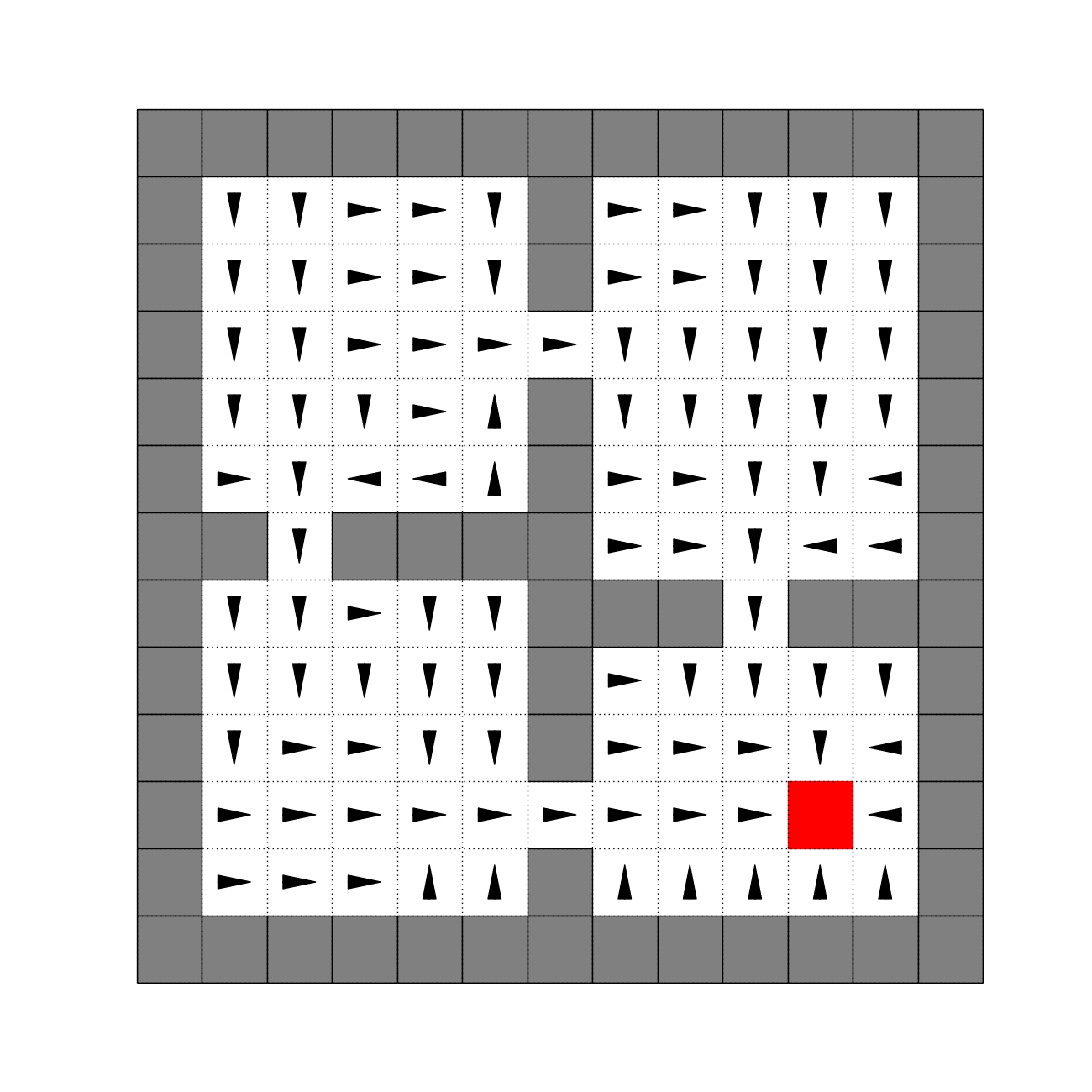}
    \end{subfigure}
    \begin{subfigure}[b]{0.24\textwidth}
    \center
        \includegraphics[width=0.8\columnwidth]{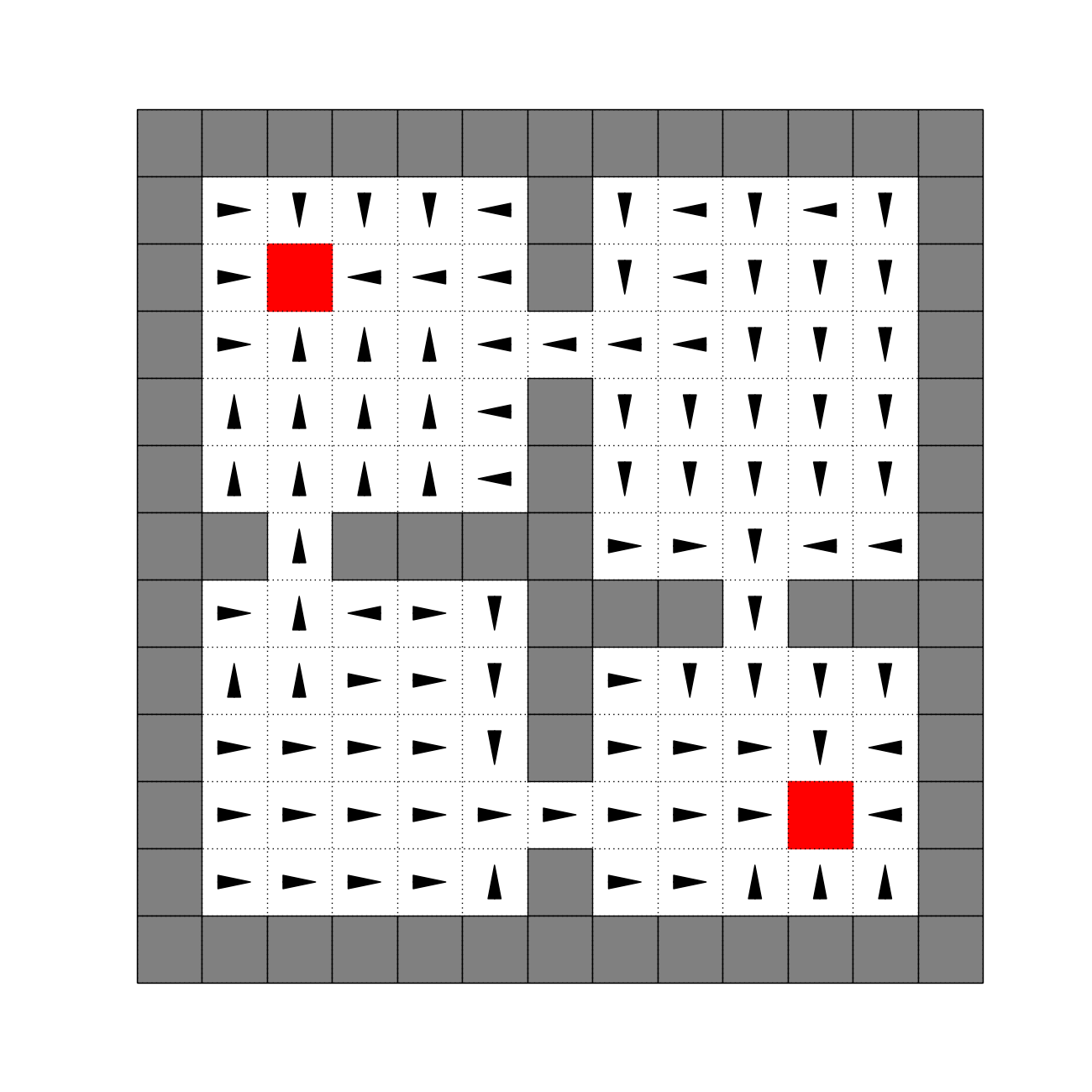}
    \end{subfigure}
    \caption{Options obtained from the four smallest eigenvectors in the 4-room domain. Action \emph{terminate} is depicted in red.}
    \label{fig:options_4room}
\end{figure*}

\begin{figure*}[t]
    \centering
    \begin{subfigure}[b]{0.2\textwidth}
    \center
        \includegraphics[width=\columnwidth]{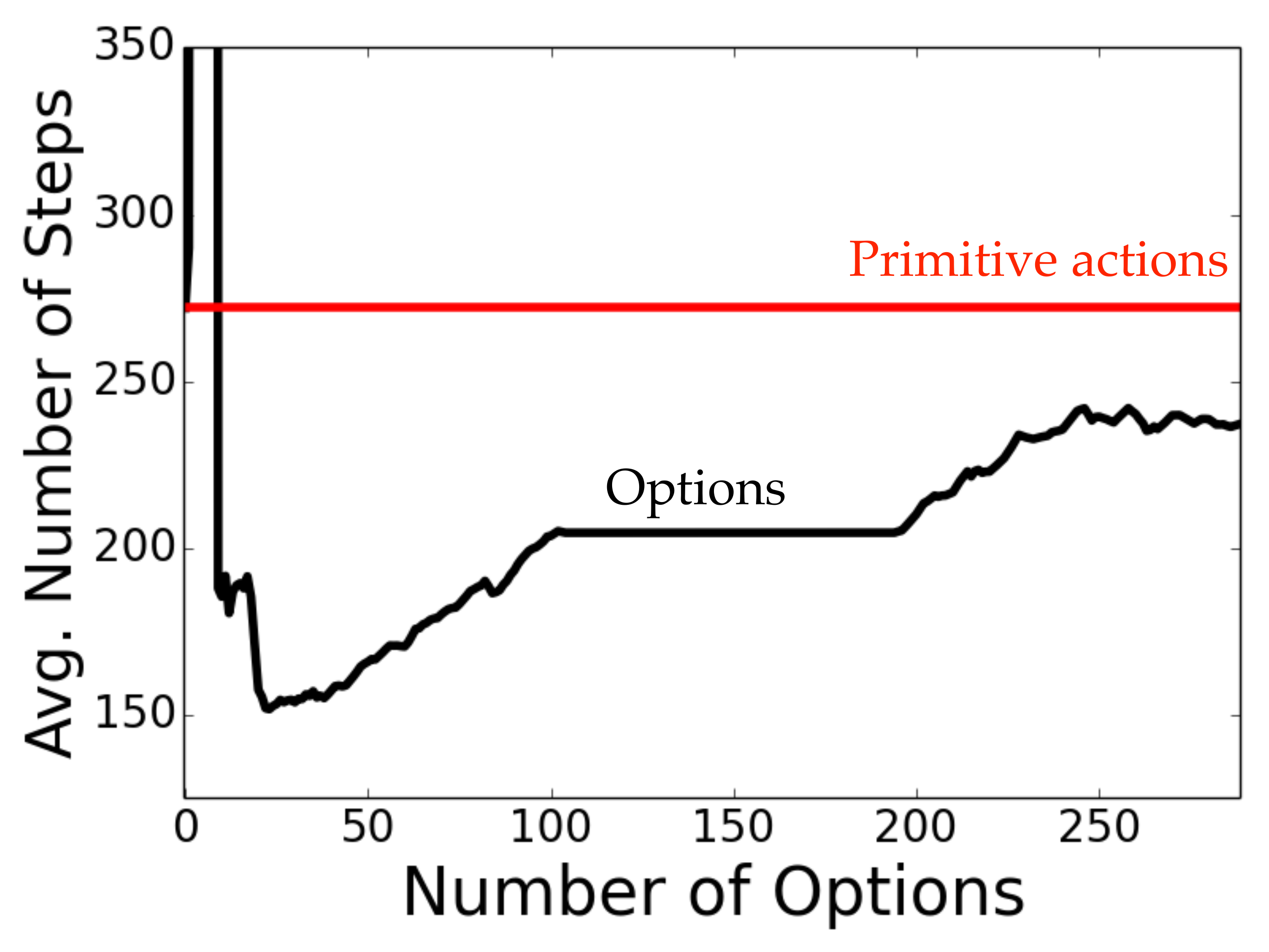}
        \caption{10$\times$10 grid}
    \end{subfigure}
    ~~~~~~~~
    \begin{subfigure}[b]{0.2\textwidth}
    \center
        \includegraphics[width=\columnwidth]{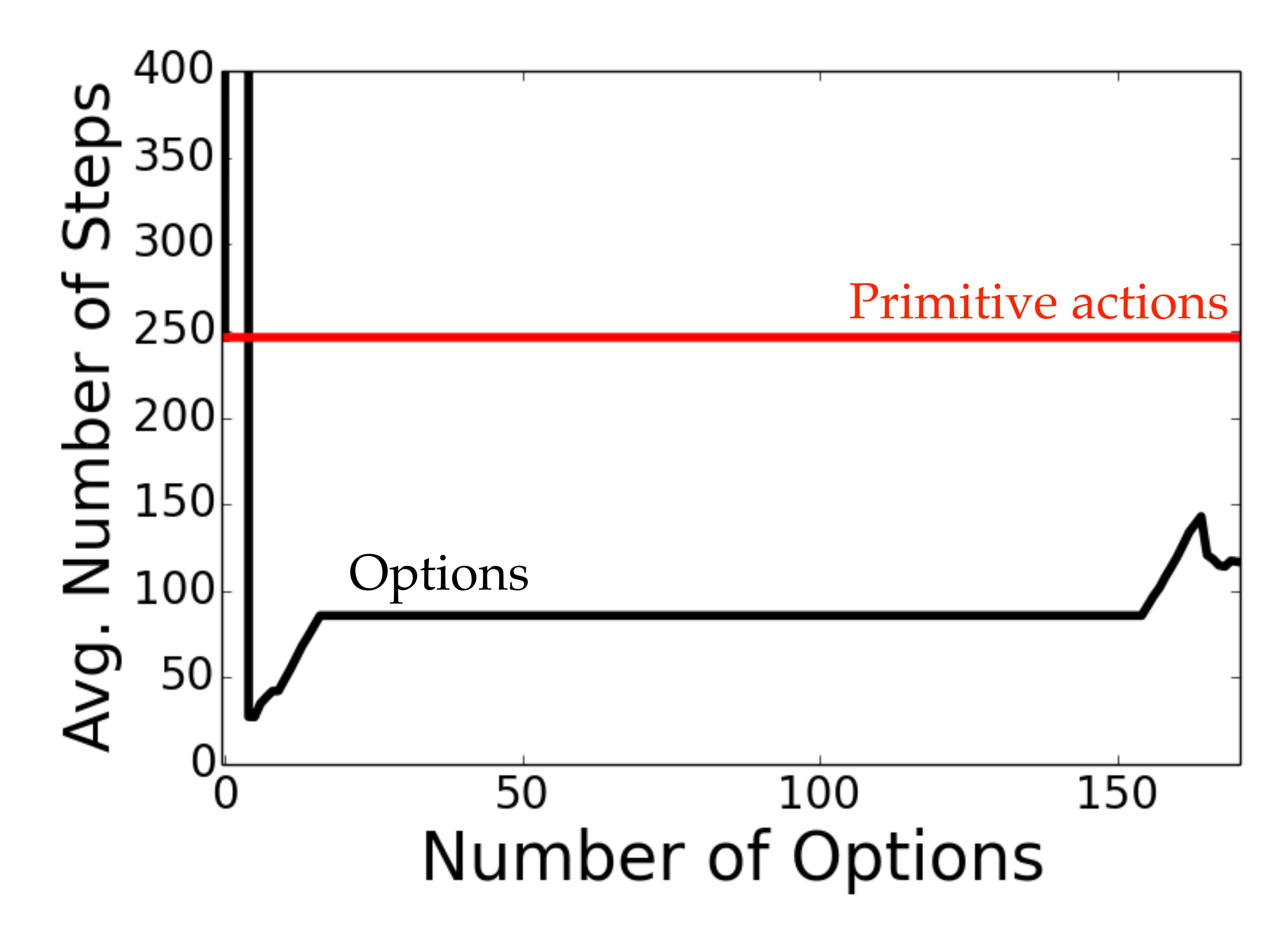}
        \caption{I-Maze}
    \end{subfigure}
    ~~~~~~~~
    \begin{subfigure}[b]{0.2\textwidth}
    \center
        \includegraphics[width=\columnwidth]{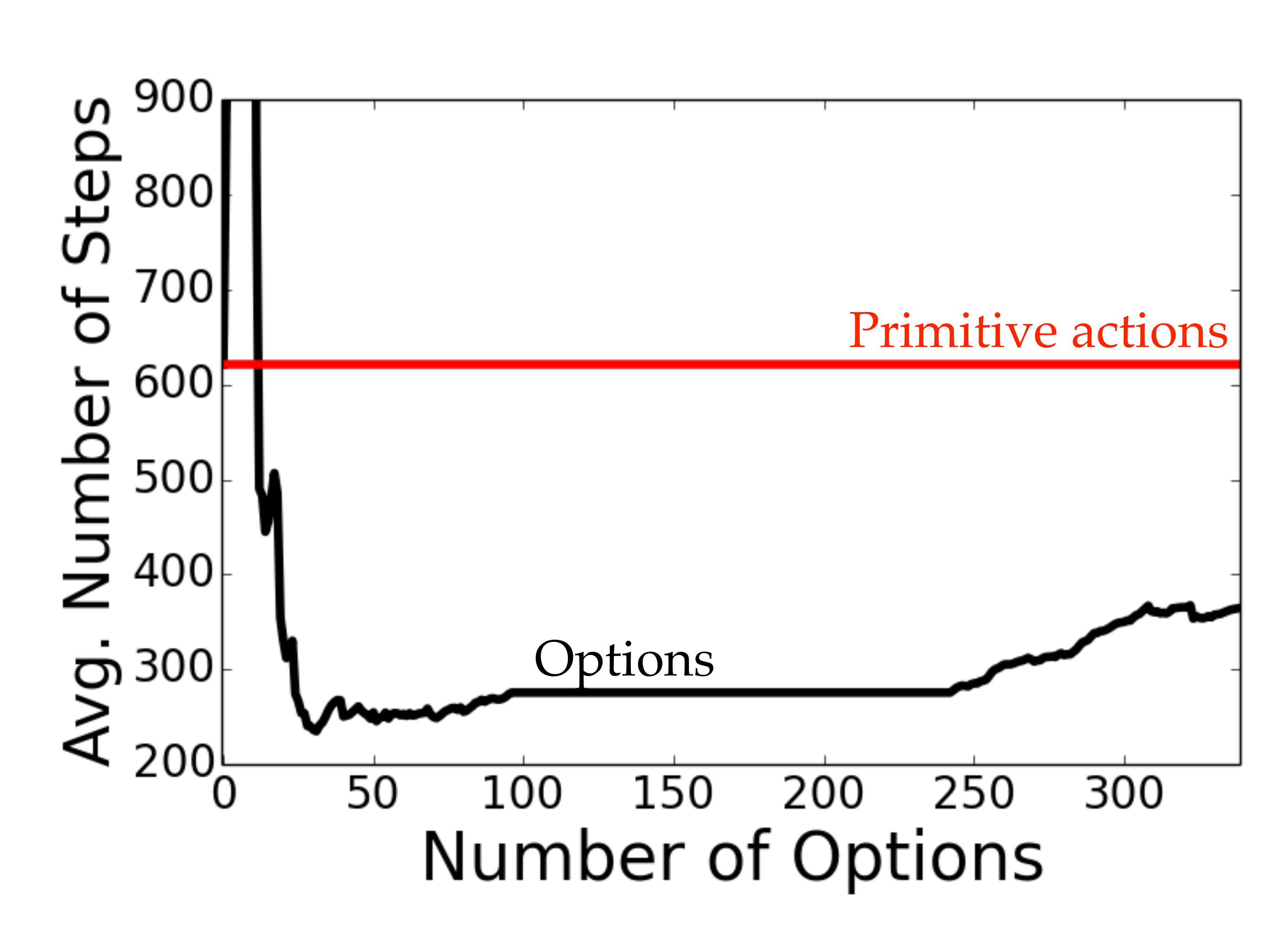}
        \caption{4-room domain}
    \end{subfigure}
    ~~~~~~~~
    \begin{subfigure}[b]{0.2\textwidth}
    \center
        \includegraphics[width=\columnwidth]{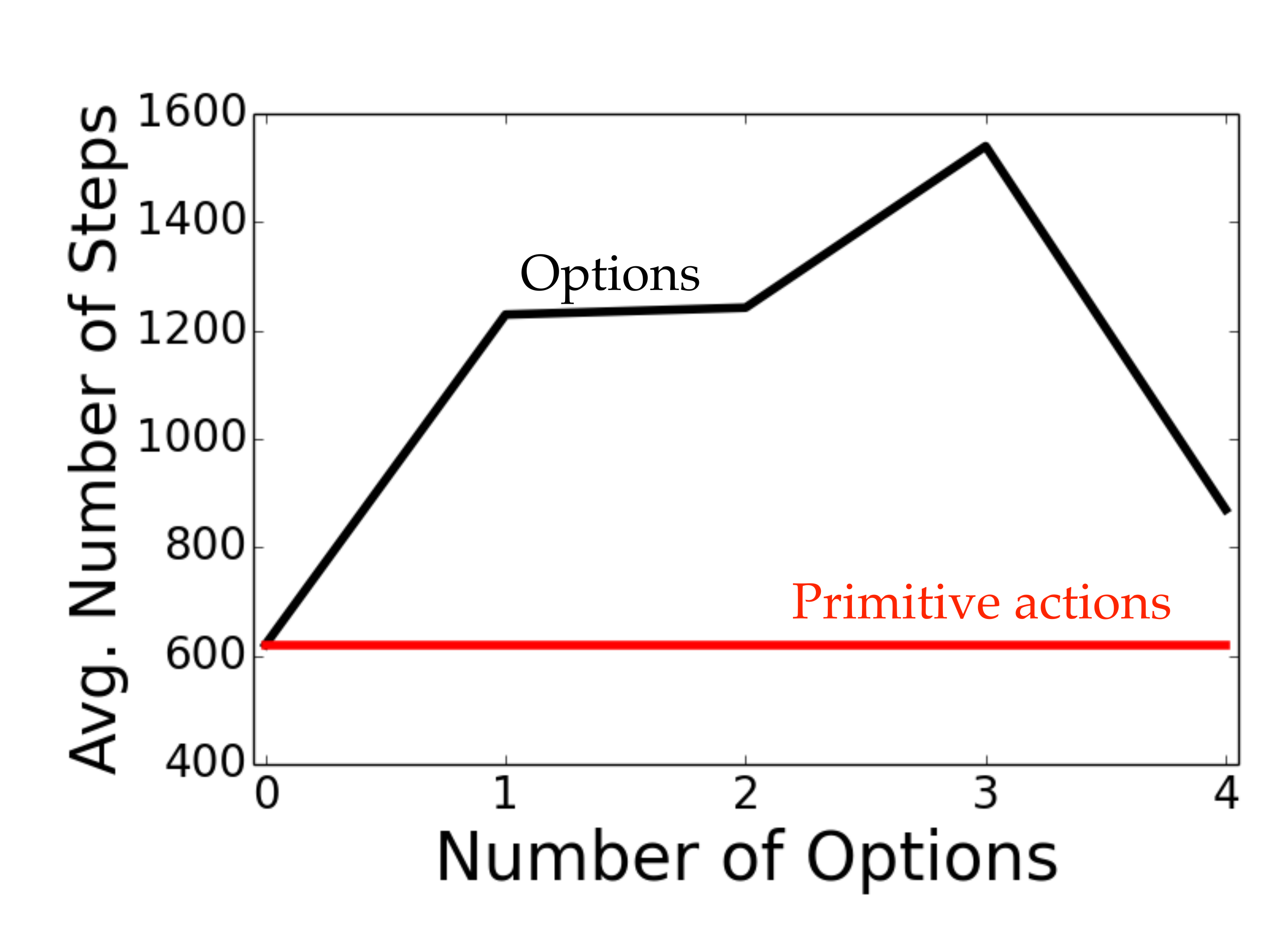}
        \caption{Bottleneck options}\label{fig:expl_bottlenecks}
    \end{subfigure}
    \caption{Expected number of steps between any two states when following a random walk. Figure~\ref{fig:expl_bottlenecks} shows the performance of options that look for doorways in the 4-room domain.}\label{fig:avg_distance}
\end{figure*}

Eigenoptions do not necessarily look for bottleneck states, allowing us to apply our algorithm in many environments in which there are no obvious, or meaningful, bottlenecks. We discover meaningful options in these environments, such as walking down a corridor, or going to the corners of an open room. Interestingly, doorways are not the first options we discover in the 4-room domain (the fifth eigenoption is the first to terminate at the entrance of a doorway). In the next sections we provide empirical evidence that eigenoptions are useful, and often more so than bottleneck options.

\subsection{Exploration}

A major challenge for agents to explore an environment is to be decisive, avoiding the dithering commonly observed in random walks~\cite{Machado16,Osband16}. Options provide such decisiveness by operating in a higher level of abstraction. Agents performing a random walk, when equipped with options, are expected to cover larger distances in the state space, navigating back and forth between subgoals instead of dithering around the starting state. However, options need to satisfy two conditions to improve exploration: (1) they have to be available in several parts of the state space, ensuring the agent always has access to many different options; and (2) they have to operate at different time scales. For instance, in the 4-room domain, it is unlikely an agent randomly selects enough primitive actions leading it to a corner if all options move the agent between doorways. An important result in this section is to show that it is very unlikely for an agent to explore the whole environment if it keeps going back and forth between similar high-level goals.

Eigenoptions satisfy both conditions. As demonstrated in Section~\ref{sec:discovered_options}, eigenoptions are often defined in the whole state space, allowing sequencing. Moreover, PVFs can be seen as a ``frequency'' basis, with different PVFs being associated with different frequencies~\cite{Mahadevan07}. The corresponding eigenoptions also operate at different frequencies, with the length of a trajectory until termination varying. This behavior can be seen when comparing the second and fourth eigenoptions in the $10 \times 10$ grid (Figure~\ref{fig:options_opengrid}). The fourth eigenoption terminates, on expectation, twice as often as the second eigenoption.

In this section we show that eigenoptions improve exploration. We do so by introducing a new metric, which we call \emph{diffusion time}. Diffusion time encodes the expected number of steps required to navigate between two states randomly chosen in the MDP while following a random walk. A small expected number of steps implies that it is more likely that the agent will reach all states with a random walk. We discuss how this metric can be computed in the Appendix.

\begin{figure*}[t]
    \centering
    \begin{subfigure}[b]{0.28\textwidth}
        \includegraphics[width=\columnwidth]{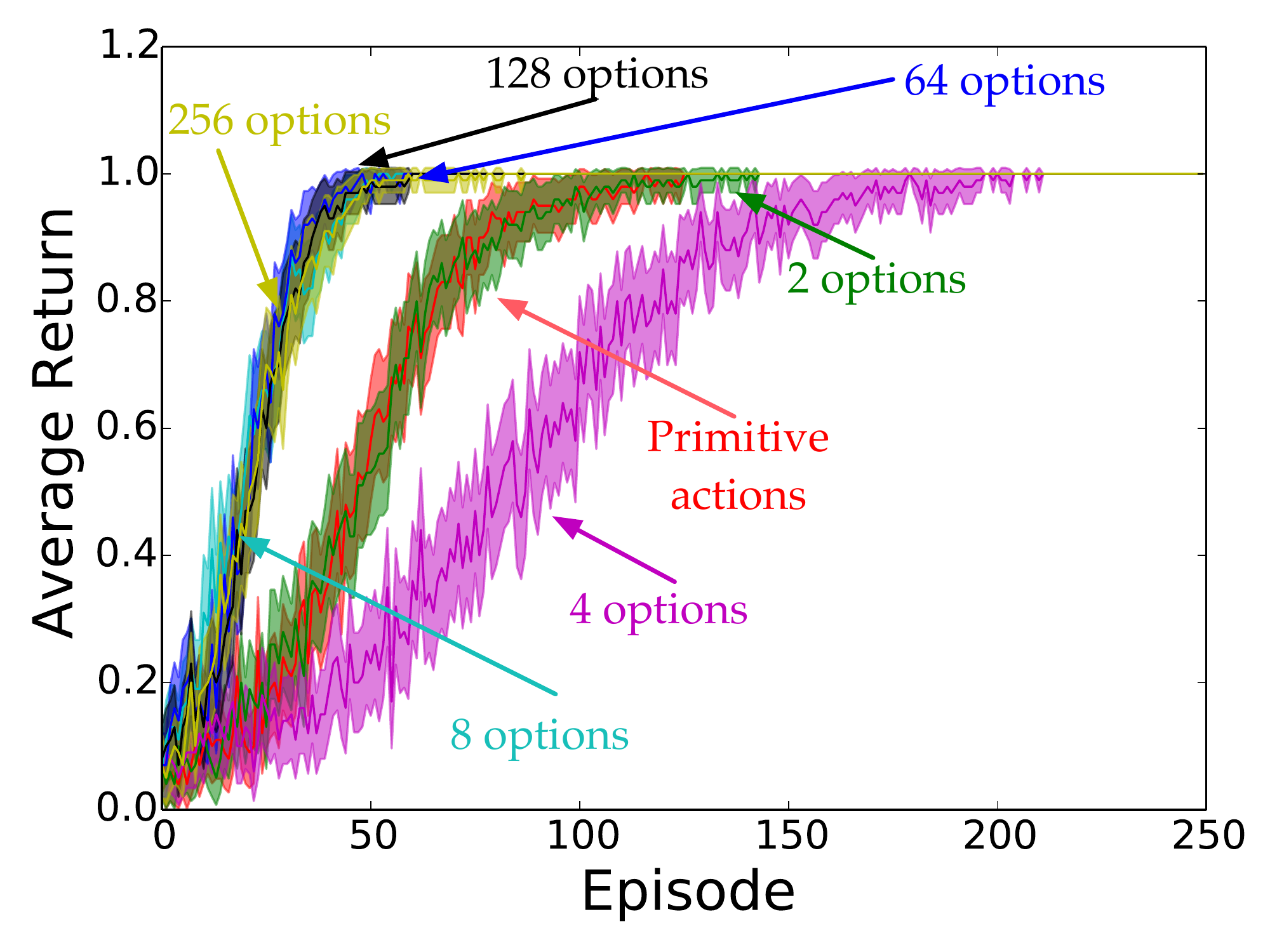}
        \caption{10$\times$10 grid}
    \end{subfigure}
    \begin{subfigure}[b]{0.28\textwidth}
        \includegraphics[width=\columnwidth]{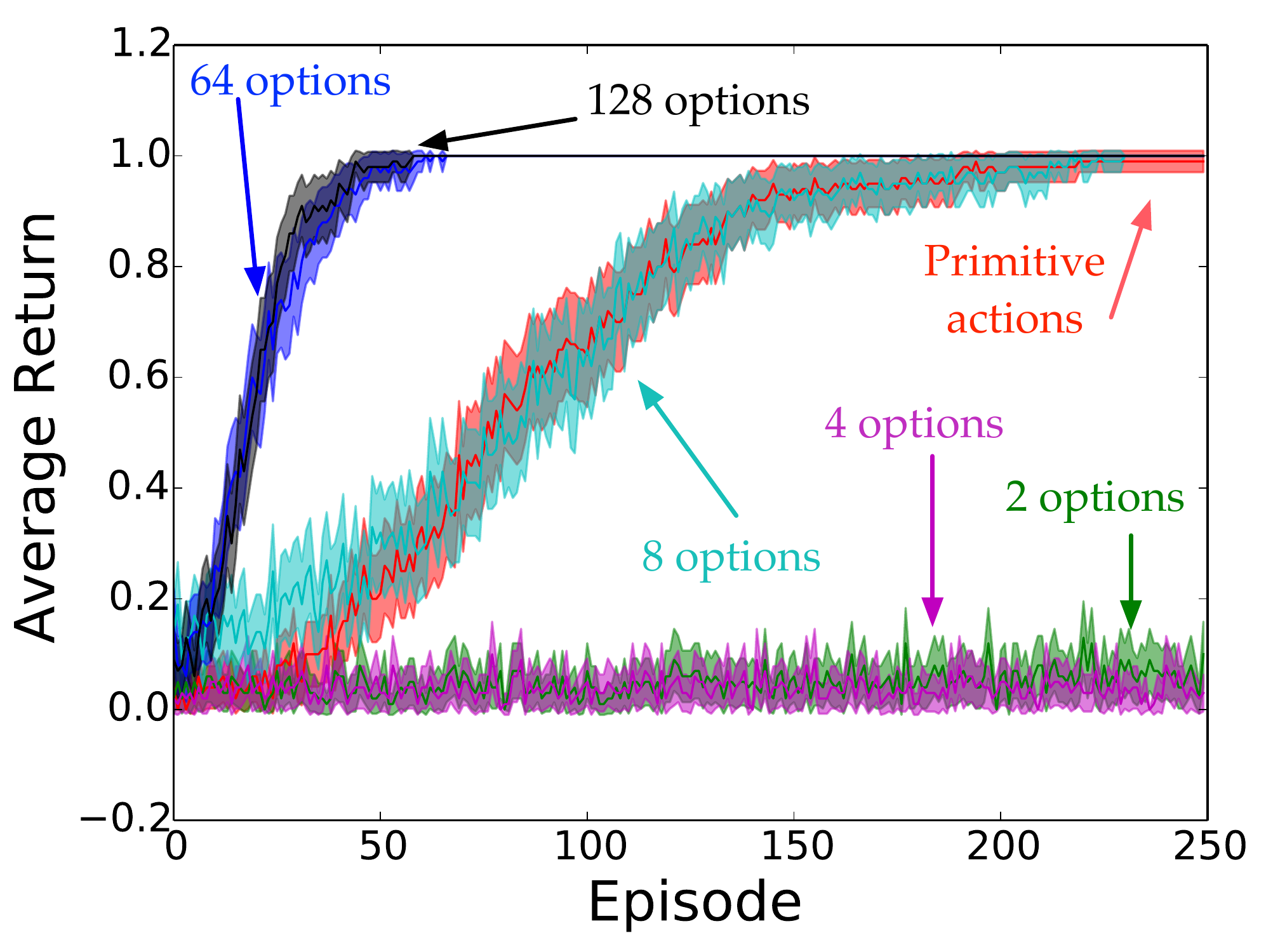}
        \caption{I-Maze}
    \end{subfigure}
    \begin{subfigure}[b]{0.28\textwidth}
        \includegraphics[width=\columnwidth]{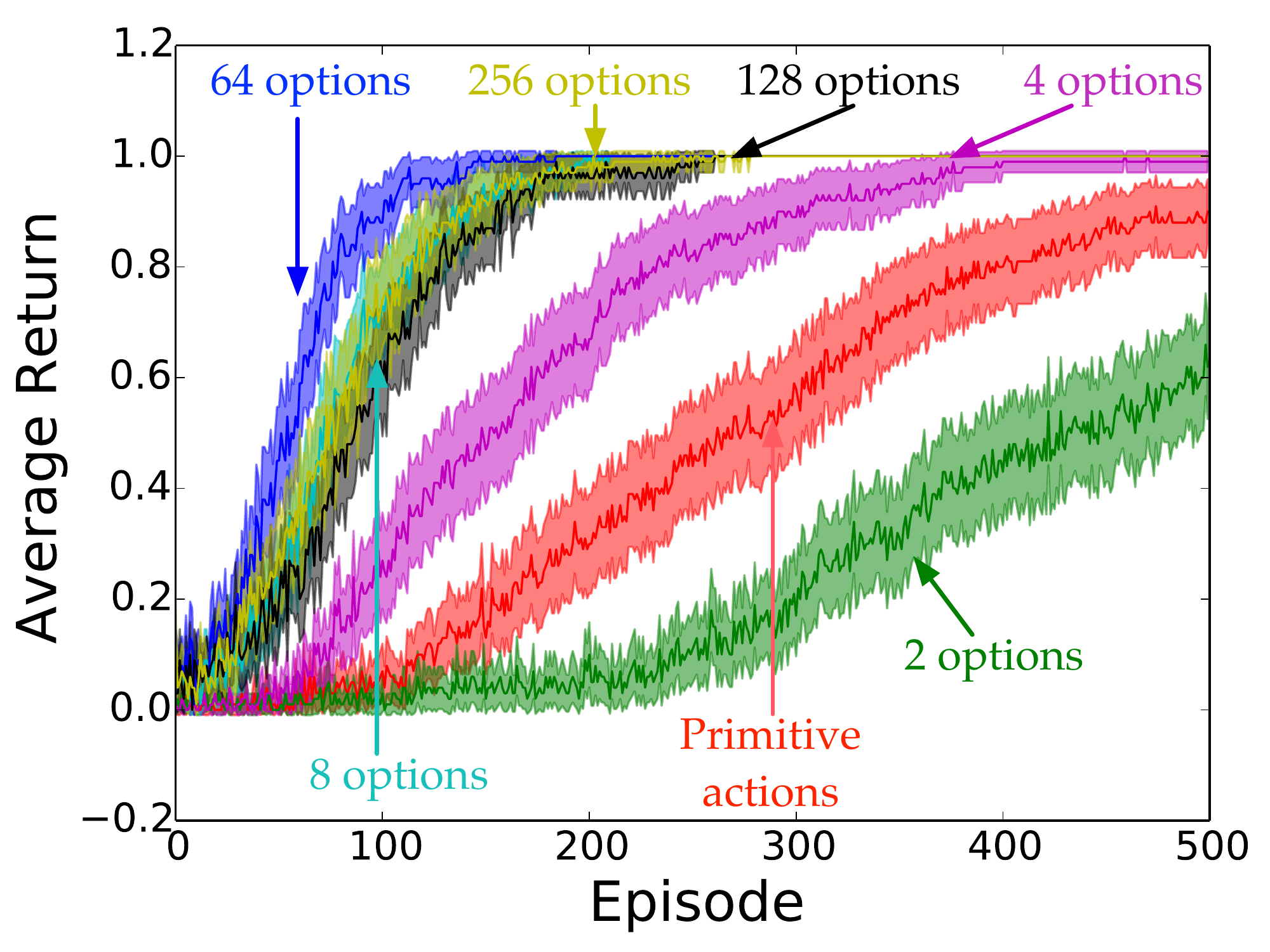}
        \caption{4-room domain}
    \end{subfigure}
    \caption{The agents' performance accumulating reward as options are added to the action set in their behavior policy. These results use the eigenpurposes directly obtained from the eigendecomposition as well as their negation.}\label{fig:acc_reward_neg}
\end{figure*}

Figure~\ref{fig:avg_distance} depicts, for our the three environments, the diffusion time with options and the diffusion time using only primitive actions. We add options incrementally in order of increasing eigenvalue when computing the diffusion time for different sets of options.

The first options added hurt exploration, but when enough options are added, exploration is greatly improved when compared to a random walk using only primitive actions. The fact that few options hurt exploration may be surprising at first, based on the fact that few useful options are generally sought after in the literature. However, this is a major difference between using options for planning and for learning. In planning, options shortcut the agents' trajectories, pruning the search space. All other actions are still taken into consideration. When exploring, a uniformly random policy over options and primitive actions skews where agents spend their time. Options that are much longer than primitive actions reduce the likelihood that an agent will deviate much from the options' trajectories, since sampling an option may undo dozens of primitive actions. This biasing is often observed when fewer options are available.

The discussion above can be made clearer with an example. In the 4-room domain, if the only options available are those leading the agent to doorways (\emph{c.f.} Appendix), it is less likely the agent will reach the outer corners. To do so the agent would have to select enough consecutive primitive actions without sampling an option. Also, it is very likely agents will be always moving between rooms, never really exploring inside a room. These issues are mitigated with eigenoptions. The first eigenoptions lead agents to individual rooms, but other eigenoptions operate in different time scales, allowing agents to explore different parts of rooms.

Figure~\ref{fig:expl_bottlenecks} supports the intuition that options leading to bottleneck states are not sufficient, by themselves, for exploration. It shows how the diffusion time in the 4-room domain is increased when only bottleneck options are used. As in the PVF literature, the ideal number of options to be used by an agent can be seen as a model selection problem.

\subsection{Accumulating Rewards}~\label{sec:acc_reward}
\vspace{-0.6cm}

We now illustrate the usefulness of our options when the agent's goal is to accumulate reward. We also study the impact of an increasing number of options in such a task. In these experiments, the agent starts at the bottom left corner and its goal is to reach the top right corner. The agent observes a reward of $0$ until the goal is reached, when it observes a reward of $+1$. We used Q-Learning~\cite{Watkins92} ($\alpha = 0.1$, $\gamma=0.9$) to learn a policy over primitive actions. The behavior policy chooses uniformly over primitive actions and options, following them until termination. Figure~\ref{fig:acc_reward_neg} depicts, after learning for a given number of episodes, the average over 100 trials of the agents' final performance. Episodes were $100$ time steps long, and we learned for $250$ episodes in the $10 \times 10$ grid and in the I-Maze, and for $500$ episodes in the 4-room domain.

In most scenarios eigenoptions improve performance. As in the previous section, exceptions occur when only a few options are added to the agent's action set. The best results were obtained using $64$ options. Despite being an additional parameter, our results show that the agent's performance is fairly robust across different numbers of options.

Eigenoptions are task-independent by construction. Additional results in the appendix show how the same set of eigenoptions is able to speed-up learning in different tasks. In the appendix we also compare eigenoptions to random options, that is, options that use a random state as subgoal.

\section{Approximate Option Discovery}~\label{sec:ale}
\vspace{-0.6cm}

So far we have assumed that agents have access to the adjacency matrix representing the underlying MDP. However, in practical settings this is generally not true. In fact, the number of states in these settings is often so large that agents rarely visit the same state twice. These problems are generally tackled with sample-based methods and some sort of function approximation.

In this section we propose a sample-based approach for option discovery that asymptotically discovers eigenoptions. We then extend this algorithm to linear function approximation. We provide anecdotal evidence in Atari 2600 games that this relatively na\"ive sample-based approach to function approximation discovers purposeful options.

\subsection{Sample-based Option Discovery}

In the online setting, agents must sample trajectories. Naturally, one can sample trajectories until one is able to perfectly construct the MDP's adjacency matrix, as suggested by \citet{Mahadevan07}. However, this approach does not easily extend to linear function approximation. In this section we provide an approach that does not build the adjacency matrix allowing us to extend the concept of eigenpurposes to linear function approximation.

In our algorithm, a sample transition is added to a matrix $T$ if it was not previously encountered. The transition is added as the difference between the current and previous observations, \emph{i.e.}, $\bm{\phi}(s') - \bm{\phi}(s)$. In the tabular case we define $\bm{\phi}(s)$ to be the one-hot encoding of state $s$. Once enough transitions have been sampled, we perform a singular value decomposition on the matrix $T$ such that $T = U \Sigma V^{\top}$. We use the columns of $V$, which correspond to the right-eigenvectors of $T$, to generate the eigenpurposes. The intrinsic reward and the termination criterion for an eigenbehavior are the same as before.

Matrix $T$ is known as the \emph{incidence matrix}. If all transitions in the graph are sampled once, for tabular representations, this algorithm discovers the same options we obtain with the combinatorial Laplacian. The theorem below states the equivalence between the obtained eigenpurposes.


\begin{theorem}
Consider the SVD of $T = U_T \Sigma_T V_T^{\top}$, with each row of $T$ consisting of the difference between observations, \emph{i.e.}, $\bm{\phi}(s') - \bm{\phi}(s)$. In the tabular case, if all transitions in the MDP have been sampled once, the orthonormal eigenvectors of $L$ are the columns of $V_T^{\top}$.
\end{theorem}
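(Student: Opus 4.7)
The plan is to reduce this to a standard fact about the signed incidence matrix of an unweighted graph: if $B$ is the matrix whose rows are $\bm{\phi}(s')-\bm{\phi}(s)$, one per undirected edge, then $B^{\top}B$ equals the combinatorial Laplacian $L = D - A$. Once this identity is in hand, the theorem follows directly from comparing the SVD of $T$ to the spectral decomposition of $L$.

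First I would fix the setup and compute $T^{\top}T$ entrywise. In the tabular case $\bm{\phi}(s) = \mathbf{e}_s$, so each sampled transition contributes a row $\mathbf{e}_{s'}-\mathbf{e}_{s}$, and
\begin{equation*}
T^{\top}T \;=\; \sum_{(s,s')} (\mathbf{e}_{s'}-\mathbf{e}_{s})(\mathbf{e}_{s'}-\mathbf{e}_{s})^{\top}.
\end{equation*}
Each rank-one summand places $+1$ at the diagonal positions $(s,s)$ and $(s',s')$ and $-1$ at $(s,s')$ and $(s',s)$, which is exactly the local Laplacian contribution of the edge $\{s,s'\}$. Summing over all sampled transitions, the diagonal of $T^{\top}T$ collects the degree of each vertex (with a factor of $2$ if both directed transitions $(s,s')$ and $(s',s)$ are sampled, as is the norm for the symmetric MDPs considered here), and the off-diagonal entry at $(i,j)$ is negative the number of edges between $i$ and $j$. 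Hence $T^{\top}T = cL$ for some positive integer $c$ (with $c=1$ if each undirected edge is recorded once and $c=2$ if both orientations are recorded).

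Now I would invoke the SVD. Writing $T = U_T \Sigma_T V_T^{\top}$ gives
\begin{equation*}
T^{\top}T \;=\; V_T \,\Sigma_T^{\top}\Sigma_T\, V_T^{\top},
\end{equation*}
an eigendecomposition of $T^{\top}T$ by an orthonormal basis, namely the columns of $V_T$, with nonnegative eigenvalues $\sigma_k^2$. Combining with $T^{\top}T = cL$ and the fact that $L$ is symmetric positive semidefinite with a complete orthonormal eigenbasis, the columns of $V_T$ are orthonormal eigenvectors of $L$ with eigenvalues $\sigma_k^2/c$, which is the claim.

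The main obstacle I anticipate is not the linear algebra, which is routine, but the bookkeeping around what ``sampled once'' means in an MDP setting: different actions may induce the same state transition, and one needs each undirected edge recorded a consistent number of times so that $T^{\top}T$ is a positive scalar multiple of $L$ rather than an arbitrary weighted Laplacian. Once one fixes the convention (each directed edge logged exactly once, with duplicates excluded as the paper specifies), the argument goes through; a secondary, cosmetic point is that the eigenvalue ordering of $V_T$ (by descending singular value) is reversed relative to the PVF convention (ascending Laplacian eigenvalue), but this only affects indexing, not the set of eigenvectors.
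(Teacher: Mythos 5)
Your proposal is correct and follows essentially the same route as the paper: reduce the claim to the identity $T^\top T = cL$ and then invoke the standard SVD fact that the columns of $V_T$ are the eigenvectors of $T^\top T$. The paper fixes $c = 2$ (Lemma~5.1 in the appendix) because in an MDP with reversible transitions, ``sampling all transitions once'' records both orientations of each undirected edge; your discussion of $c=1$ versus $c=2$ correctly captures this bookkeeping. The only real difference is presentational: you derive the Laplacian identity via the rank-one outer-product expansion $T^\top T = \sum_{(s,s')}(\mathbf{e}_{s'}-\mathbf{e}_{s})(\mathbf{e}_{s'}-\mathbf{e}_{s})^\top$, which is the cleaner textbook route, whereas the paper does an entrywise computation by splitting $T^\top T$ into its diagonal part ($=2D$) and off-diagonal part ($=-2W$). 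Both are sound; the outer-product form makes positive semidefiniteness and the per-edge contribution more transparent, while the paper's version explicitly tracks the role of the assumption that each transition is sampled exactly once.
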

\vspace{-0.15cm}
\begin{proof}
Given the SVD decomposition of a matrix $A = U \Sigma V^\top$, the columns of $V$ are the eigenvectors of $A^\top A$~\cite{Strang05}. We know that $T^\top T = 2 L$, where $L~=~D - W$ (Lemma~5.1, \emph{c.f.} Appendix). Thus, the columns of $V_T$ are the eigenvectors of $T^\top T$, which can be rewritten as $2 (D - W)$. Therefore, the columns of $V_T$ are also the eigenvectors of $L$.
\end{proof}

\vspace{-0.05cm}

There is a trade-off between reconstructing the adjacency matrix and constructing the incidence matrix. In MDPs in which states are sparsely connected, such as the I-Maze, the latter is preferred since it has fewer transitions than states. However, what makes this result interesting is the fact that our algorithm can be easily generalized to linear function approximation.

\subsection{Function Approximation}

An adjacency matrix is not very useful when the agent has access only to features of the state. However, we can use the intuition about the incidence matrix to propose an algorithm compatible with linear function approximation.

In fact, to apply the algorithm proposed in the previous section, we just need to define what constitutes a new transition. We define two vectors, ${\bf t}$ and ${\bf t'}$, to be identical if and only if ${\bf t - t' = 0}$. We then use a \emph{set} data structure to avoid duplicates when storing $\bm{\phi}(s') - \bm{\phi}(s)$. This is a na\"ive approach, but it provides encouraging evidence eigenoptions generalize to linear function approximation. We expect more involved methods to perform even better.


\begin{figure}[t]
    \centering
    \includegraphics[width=0.7\columnwidth]{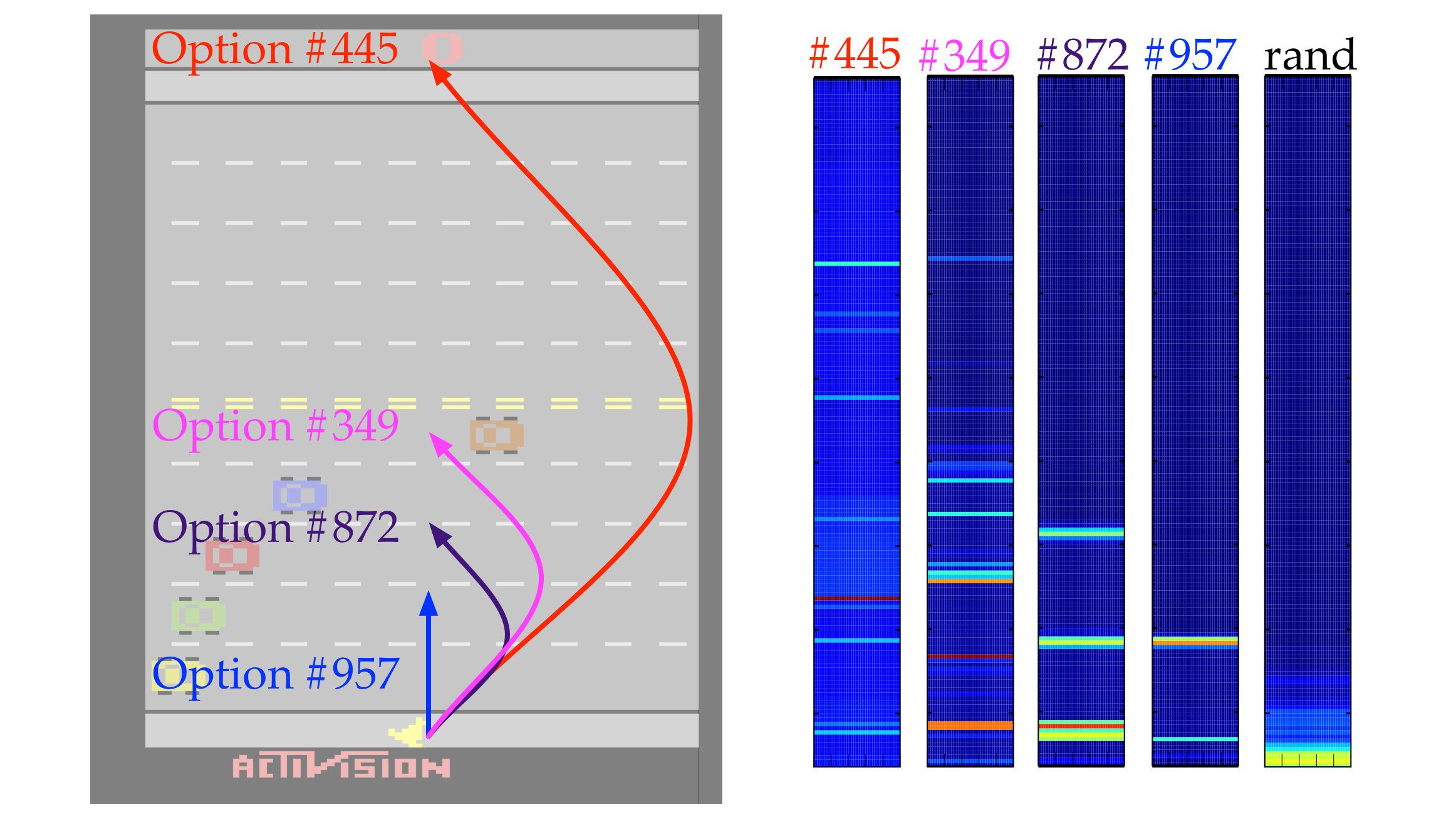}
    \caption{Options in \textsc{Freeway} (\emph{c.f.} text for details).}
    \label{fig:freeway}    
\end{figure}

We tested our method in the ALE~\cite{Bellemare13}. The agent's representation consists of the emulator's RAM state ($1{,}024$ bits). The final incidence matrix in which we ran the SVD had $25{,}000$ rows, which we sampled uniformly from the set of observed transitions. We provide further details of the experimental setup in the appendix.

In the tabular case we start selecting eigenpurposes generated by the eigenvectors with smallest eigenvalue, because these are the ``smoothest'' ones. However, it is not clear such intuition holds here because we are in the function approximation setting and the matrix of transitions does not contain all possible transitions. Therefore, we analyzed, for each game, all $1{,}024$ discovered options.

We approximate these options greedily ($\gamma\!=\!0$) with the ALE emulator's look-ahead. The next action $a'$ for an eigenpurpose ${\bf e}$ is selected as $\argmax_{b \in \mathscr{A}} \int_{s'} p(s'|s,b)\ r^{\bf e}_i(s, s')$.

Even with such a myopic action selection mechanism we were able to obtain options that clearly demonstrate intent. In \textsc{Freeway}, a game in which a chicken is expected to cross the road while avoiding cars, we observe options in which the agent clearly wants to reach a specific lane in the street. Figure~\ref{fig:freeway} (left) depicts where the chicken tends to be when the option is executed. On the right we see a histogram representing the chicken's height during an episode. We can clearly see how the chicken's height varies for different options, and how a random walk over primitive actions (\emph{rand}) does not explore the environment properly. Remarkably, option \emph{\#445} scores $28$ points at the end of the episode, without ever explicitly taking the reward signal into consideration. This performance is very close to those obtained by state-of-the-art algorithms.

In \textsc{Montezuma's Revenge}, a game in which the agent needs to navigate through a room to pickup a key so it can open a door, we also observe the agent having the clear intent of reaching particular positions on the screen, such as staircases, ropes and doors (Figure~\ref{fig:montezuma_revenge}). Interestingly, the options we discover are very similar to those handcrafted by \citet{Kulkarni16} when evaluating the usefulness of options to tackle such a game. A video of the highlighted options can be found online.\footnote{\url{https://youtu.be/2BVicx4CDWA}}

\begin{figure}[t]
    \centering
    \includegraphics[width=0.52\columnwidth]{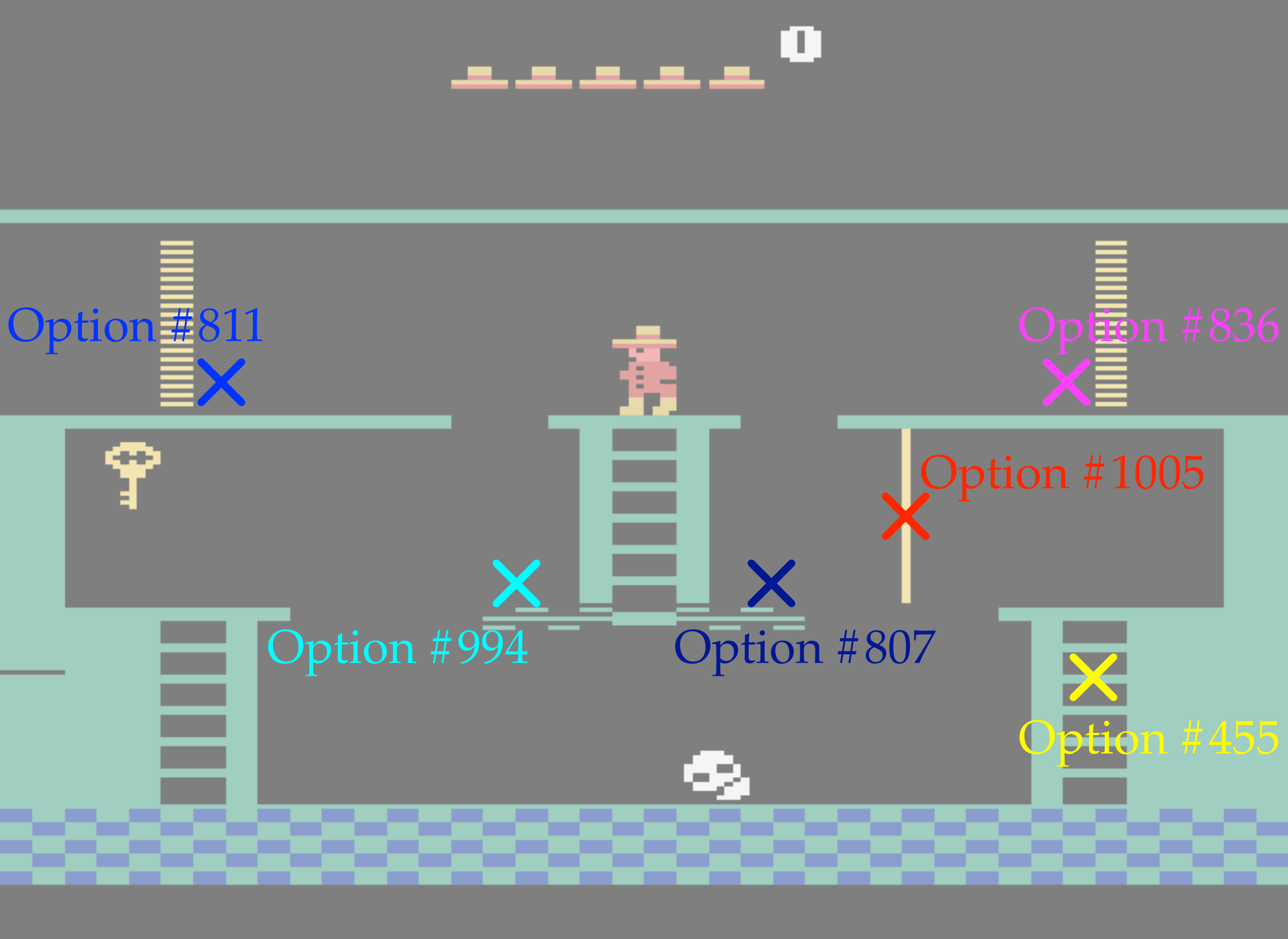}
    \caption{Options in \textsc{Montezuma's Rev.} (\emph{c.f.} text for details).}
    \label{fig:montezuma_revenge}
\end{figure}

\section{Related Work}

Most algorithms for option discovery can be seen as \emph{top-down} approaches. Agents use trajectories leading to informative rewards\footnote{We define an informative reward to be the signal that informs the agent it has reached a goal. For example, when trying to escape from a maze, we consider $0$ to be an informative reward if the agent observes rewards of value $-1$ in every time step it is inside the maze. A different example is a positive reward observed by an agent that typically observes rewards of value $0$.} as a starting point, decomposing and refining them into options. There are many approaches based on this principle, such as methods that use the observed rewards to generate intrinsic rewards leading to new value functions (\emph{e.g.}, \citeeg{McGovern01}; \citeeg{Menache02}; \citeeg{Konidaris09}), methods that use the observed rewards to climb a gradient (\emph{e.g.}, \citeeg{Mankowitz16}; \citeeg{Vezhnevets16}; \citeeg{Bacon17}), or to do probabilistic inference~\cite{Daniel16}. However, such approaches are not applicable in large state spaces with sparse rewards. If informative rewards are unlikely to be found by an agent using only primitive actions, requiring long or specific sequences of actions, options are equally unlikely to be discovered.

Our algorithm can be seen as a \emph{bottom-up} approach, in which options are constructed before the agent observes any informative reward. These options are composed to generate the desired policy. Options discovered this way tend to be independent of an agent's intention, and are potentially useful in many different tasks \cite{Gregor16}. Such options can also be seen as being useful for exploration by allowing agents to commit to a behavior for an extended period of time~\cite{Machado16}. Among the approaches to discover options without using extrinsic rewards are the use of global or local graph centrality measures~\cite{Simsek04,Simsek05,Simsek08} and clustering of states~\cite{Mannor04,Bacon13, Lakshminarayanan16}. Interestingly, \citet{Simsek05} and \citet{Lakshminarayanan16} also use the graph Laplacian in their algorithm, but to identify bottleneck states.

\citet{Baranes13} and \citet{Moulin-Frier13} show how one can build policies to explicitly assist agents to explore the environment. The proposed algorithms self-generate subgoals in order to maximize learning progress. The policies built can be seen as options. Recently, \citet{Solway14} proved that ``optimal hierarchy minimizes the geometric mean number of trial-and-error attempts necessary for the agent to discover the optimal policy for any selected task (...)''. Our experiments confirm this result, although we propose \emph{diffusion time} as a different metric to evaluate how options improve exploration.

The idea of discovering options by learning to control parts of the environment is also related to our work. Eigenpurposes encode different rates of change in the agent’s representation of the world, while the corresponding options aim at maximizing such change. Others have also proposed ways to discover options based on the idea of learning to control the environment. \citet{Hengst02}, for instance, proposes an algorithm that explicitly models changes in the variables that form the agent's representation. Recently, \citet{Gregor16} proposed an algorithm in which agents discover options by maximizing a notion of empowerment~\cite{Salge14}, where the agent aims at getting to states with a maximal set of available intrinsic options.

Continual Curiosity driven Skill Acquisition (CCSA) \cite{Kompella15} is the closest approach to ours. CCSA also discovers skills that maximize an intrinsic reward obtained by some extracted representation. While we use PVFs, CCSA uses Incremental Slow Feature Analysis (SFA)~\cite{Kompella11} to define the intrinsic reward function. 
\citet{Sprekeler11} has shown that, given a specific choice of adjacency function, PVFs are equivalent to SFA~\cite{WiskottS02}. SFA becomes an approximation of PVFs if the function space used in the SFA does not allow arbitrary mappings from the observed data to an embedding. 
Our method differs in how we define the initiation and termination sets, as well as in the objective being maximized. CCSA acquires skills that produce a large variation in the slow-feature outputs, leading to options that seek for bottlenecks. Our approach does not seek for  bottlenecks, focusing on traversing different directions of the learned representation.

\section{Conclusion}

Being able to properly abstract MDPs into SMDPs can reduce the overall expense of learning~\cite{Sutton99,Solway14}, mainly when the learned options are reused in multiple tasks. On the other hand, the wrong hierarchy can hinder the agents' learning process, moving the agent away from desired goal states. 
Current algorithms for option discovery
often depend on an initial informative reward signal, which may not be readily available in large MDPs. In this paper, we introduced an approach that is effective in different environments, for a multitude of tasks.

Our algorithm uses the graph Laplacian, being directly related to the concept of proto-value functions. The learned representation informs the agent what are meaningful options to be sought after. The discovered options can be seen as traversing each one of the dimensions in the learned representation. We believe successful algorithms in the future will be able to simultaneously discover representations and options. Agents will use their learned representation to discover options, which will be used to further explore the environment, improving the agent's representation.

Interestingly, the options first discovered by our approach do not necessarily find bottlenecks, which are commonly sought after. In this paper we showed how bottleneck options can hinder exploration strategies if naively added to the agent's action set, and how the options we discover can help an agent to explore. Also, we have shown how the discovered options can be used to accumulate reward in a multitude of tasks, leveraging their exploratory properties.

There are several exciting avenues for future work. As noted, SFA can be seen as an approximation to PVFs. It would be interesting to compare such an approach to eigenoptions. It would also be interesting to see if the options we discover can be generated incrementally and with incomplete graphs.
Finally, one can also imagine extensions to the proposed algorithm where a hierarchy of options is built. 

\section*{Acknowledgements}
The authors would like to thank Will Dabney, R\'emi Munos and Csaba Szepesv\'ari for useful discussions. This work was supported by grants from Alberta Innovates Technology Futures and the Alberta Machine Intelligence Institute (Amii). Computing resources were provided by Compute Canada through CalculQu\'ebec.

\balance

\bibliography{option_discovery}

\begin{thebibliography}{41}
\providecommand{\natexlab}[1]{#1}
\providecommand{\url}[1]{\texttt{#1}}
\expandafter\ifx\csname urlstyle\endcsname\relax
  \providecommand{\doi}[1]{doi: #1}\else
  \providecommand{\doi}{doi: \begingroup \urlstyle{rm}\Url}\fi

\bibitem[Bacon(2013)]{Bacon13}
Bacon, Pierre-Luc.
\newblock {On the Bottleneck Concept for Options Discovery: Theoretical
  Underpinnings and Extension in Continuous State Spaces}.
\newblock Master's thesis, McGill University, 2013.

\bibitem[Bacon et~al.(2017)Bacon, Harb, and Precup]{Bacon17}
Bacon, Pierre{-}Luc, Harb, Jean, and Precup, Doina.
\newblock The option-critic architecture.
\newblock In \emph{Proceedings of the National Conference on Artificial
  Intelligence (AAAI)}, 2017.

\bibitem[Baranes \& Oudeyer(2013)Baranes and Oudeyer]{Baranes13}
Baranes, Adrien and Oudeyer, Pierre{-}Yves.
\newblock Active learning of inverse models with intrinsically motivated goal
  exploration in robots.
\newblock \emph{{Robotics and Autonomous Systems}}, 61\penalty0 (1):\penalty0
  49--73, 2013.

\bibitem[Bellemare et~al.(2013)Bellemare, Naddaf, Veness, and
  Bowling]{Bellemare13}
Bellemare, Marc~G., Naddaf, Yavar, Veness, Joel, and Bowling, Michael.
\newblock {The Arcade Learning Environment: An Evaluation Platform for General
  Agents}.
\newblock \emph{Journal of Artificial Intelligence Research}, 47:\penalty0
  253--279, 2013.

\bibitem[Bellman(1957)]{Bellman57}
Bellman, Richard~E.
\newblock \emph{{Dynamic Programming}}.
\newblock Princeton University Press, Princeton, NJ, 1957.

\bibitem[\c{S}im\c{s}ek \& Barto(2004)\c{S}im\c{s}ek and Barto]{Simsek04}
\c{S}im\c{s}ek, \"{O}zg\"{u}r and Barto, Andrew~G.
\newblock {Using Relative Novelty to Identify Useful Temporal Abstractions in
  Reinforcement Learning}.
\newblock In \emph{Proceedings of the International Conference on Machine
  Learning (ICML)}, 2004.

\bibitem[\c{S}im\c{s}ek \& Barto(2008)\c{S}im\c{s}ek and Barto]{Simsek08}
\c{S}im\c{s}ek, {\"{O}}zg{\"{u}}r and Barto, Andrew~G.
\newblock {Skill Characterization Based on Betweenness}.
\newblock In \emph{Proceedings of Advances in Neural Information Processing
  Systems (NIPS)}, 2008.

\bibitem[\c{S}im\c{s}ek et~al.(2005)\c{S}im\c{s}ek, Wolfe, and Barto]{Simsek05}
\c{S}im\c{s}ek, \"Ozg\"ur, Wolfe, Alicia~P., and Barto, Andrew~G.
\newblock {Identifying Useful Subgoals in Reinforcement Learning by Local Graph
  Partitioning}.
\newblock In \emph{Proceedings of the International Conference on Machine
  Learning (ICML)}, 2005.

\bibitem[Daniel et~al.(2016)Daniel, van Hoof, Peters, and Neumann]{Daniel16}
Daniel, Christian, van Hoof, Herke, Peters, Jan, and Neumann, Gerhard.
\newblock {Probabilistic Inference for Determining Options in Reinforcement
  Learning}.
\newblock \emph{Machine Learning}, 104\penalty0 (2):\penalty0 337--357, 2016.

\bibitem[Dietterich(2000)]{Dietterich00}
Dietterich, Thomas~G.
\newblock {Hierarchical Reinforcement Learning with the {MAXQ} Value Function
  Decomposition}.
\newblock \emph{Journal of Artificial Intelligence Research {(JAIR)}},
  13:\penalty0 227--303, 2000.

\bibitem[Gregor et~al.(2016)Gregor, Rezende, and Wierstra]{Gregor16}
Gregor, Karol, Rezende, Danilo, and Wierstra, Daan.
\newblock {Variational Intrinsic Control}.
\newblock \emph{CoRR}, abs/1611.07507, 2016.

\bibitem[Gross \& Yellen(2006)Gross and Yellen]{Gross06}
Gross, Jonathan~L. and Yellen, Jay.
\newblock \emph{{Graph Theory and Its Applications}}.
\newblock Chapman and Hall/CRC, 2 edition, 2006.

\bibitem[Hengst(2002)]{Hengst02}
Hengst, Bernhard.
\newblock {Discovering Hierarchy in Reinforcement Learning with HEXQ}.
\newblock In \emph{Proceedings of the International Conference on Machine
  Learning (ICML)}, 2002.

\bibitem[Kompella et~al.(2011)Kompella, Luciw, and Schmidhuber]{Kompella11}
Kompella, Varun~Raj, Luciw, Matthew~D., and Schmidhuber, J{\"{u}}rgen.
\newblock {Incremental Slow Feature Analysis}.
\newblock In \emph{Proceedings of the International Joint Conference on
  Artificial Intelligence (IJCAI)}, pp.\  1354--1359, 2011.

\bibitem[Kompella et~al.(In Press)Kompella, Stollenga, Luciw, and
  Schmidhuber]{Kompella15}
Kompella, Varun~Raj, Stollenga, Marijn, Luciw, Matthew, and Schmidhuber,
  Juergen.
\newblock {Continual Curiosity-Driven Skill Acquisition from High-Dimensional
  Video Inputs for Humanoid Robots}.
\newblock \emph{{Artificial Intelligence}}, In Press.
\newblock ISSN 0004-3702.
\newblock Available online 12 February 2015.

\bibitem[Konidaris \& Barto(2009)Konidaris and Barto]{Konidaris09}
Konidaris, George and Barto, Andrew.
\newblock {Skill Discovery in Continuous Reinforcement Learning Domains using
  Skill Chaining}.
\newblock In \emph{Proceedings of Advances in Neural Information Processing
  Systems (NIPS)}, pp.\  1015--1023, 2009.

\bibitem[Kulkarni et~al.(2016)Kulkarni, Narasimhan, Saeedi, and
  Tenenbaum]{Kulkarni16}
Kulkarni, Tejas~D., Narasimhan, Karthik~R., Saeedi, Ardavan, and Tenenbaum,
  Joshua~B.
\newblock {Hierarchical Deep Reinforcement Learning: Integrating Temporal
  Abstraction and Intrinsic Motivation}.
\newblock \emph{ArXiv e-prints}, 2016.

\bibitem[Lakshminarayanan et~al.(2016)Lakshminarayanan, Krishnamurthy, Kumar,
  and Ravindran]{Lakshminarayanan16}
Lakshminarayanan, Aravind, Krishnamurthy, Ramnandan, Kumar, Peeyush, and
  Ravindran, Balaraman.
\newblock {Option Discovery in Hierarchical Reinforcement Learning using
  Spatio-Temporal Clustering}.
\newblock \emph{CoRR}, abs/1605.05359, 2016.
\newblock Presented at the ICML-16 Workshop on Abstraction in Reinforcement
  Learning.

\bibitem[Machado \& Bowling(2016)Machado and Bowling]{Machado16}
Machado, Marlos~C. and Bowling, Michael.
\newblock {Learning Purposeful Behaviour in the Absence of Rewards}.
\newblock \emph{CoRR}, abs/1410.4604, 2016.
\newblock Presented at the ICML-16 Workshop on Abstraction in Reinforcement
  Learning.

\bibitem[Mahadevan(2005)]{Mahadevan05}
Mahadevan, Sridhar.
\newblock {Proto-Value Functions: Developmental Reinforcement Learning}.
\newblock In \emph{Proceedings of the International Conference on Machine
  Learning (ICML)}, pp.\  553--560, 2005.

\bibitem[Mahadevan \& Maggioni(2007)Mahadevan and Maggioni]{Mahadevan07}
Mahadevan, Sridhar and Maggioni, Mauro.
\newblock {Proto-value Functions: {A} Laplacian Framework for Learning
  Representation and Control in Markov Decision Processes}.
\newblock \emph{{Journal of Machine Learning Research (JMLR)}}, 8:\penalty0
  2169--2231, 2007.

\bibitem[Mankowitz et~al.(2016)Mankowitz, Mann, and Mannor]{Mankowitz16}
Mankowitz, Daniel~J., Mann, Timothy~Arthur, and Mannor, Shie.
\newblock {Adaptive Skills Adaptive Partitions {(ASAP)}}.
\newblock In \emph{Proceedings of Advances in Neural Information Processing
  Systems (NIPS)}, pp.\  1588--1596, 2016.

\bibitem[Mannor et~al.(2004)Mannor, Menache, Hoze, and Klein]{Mannor04}
Mannor, Shie, Menache, Ishai, Hoze, Amit, and Klein, Uri.
\newblock {Dynamic Abstraction in Reinforcement Learning via Clustering}.
\newblock In \emph{Proceedings of the International Conference on Machine
  Learning (ICML)}, 2004.

\bibitem[McGovern \& Barto(2001)McGovern and Barto]{McGovern01}
McGovern, Amy and Barto, Andrew~G.
\newblock {Automatic Discovery of Subgoals in Reinforcement Learning using
  Diverse Density}.
\newblock In \emph{Proceedings of the International Conference on Machine
  Learning (ICML)}, 2001.

\bibitem[Menache et~al.(2002)Menache, Mannor, and Shimkin]{Menache02}
Menache, Ishai, Mannor, Shie, and Shimkin, Nahum.
\newblock {Q-Cut~-~Dynamic Discovery of Sub-goals in Reinforcement Learning}.
\newblock In \emph{Proceedings of the European Conference on Machine Learning
  (ECML)}, 2002.

\bibitem[Moulin{-}Frier \& Oudeyer(2013)Moulin{-}Frier and
  Oudeyer]{Moulin-Frier13}
Moulin{-}Frier, Cl\'ement and Oudeyer, Pierre{-}Yves.
\newblock {Exploration Strategies in Developmental Robotics: A Unified
  Probabilistic Framework}.
\newblock In \emph{Proceedings of the Joint {IEEE} International Conference on
  Development and Learning and Epigenetic Robotics (ICDL-EpiRob)}, pp.\  1--6,
  2013.

\bibitem[Oh et~al.(2016)Oh, Chockalingam, Singh, and Lee]{Oh16}
Oh, Junhyuk, Chockalingam, Valliappa, Singh, Satinder~P., and Lee, Honglak.
\newblock {Control of Memory, Active Perception, and Action in Minecraft}.
\newblock In \emph{Proceedings of the International Conference on Machine
  Learning (ICML)}, pp.\  2790--2799, 2016.

\bibitem[Osband et~al.(2016)Osband, Roy, and Wen]{Osband16}
Osband, Ian, Roy, Benjamin~Van, and Wen, Zheng.
\newblock {Generalization and Exploration via Randomized Value Functions}.
\newblock In \emph{Proceedings of the International Conference on Machine
  Learning (ICML)}, pp.\  2377--2386, 2016.

\bibitem[Precup(2000)]{Precup00}
Precup, Doina.
\newblock \emph{{Temporal Abstraction in Reinforcement Learning}}.
\newblock PhD thesis, University of Massachusetts Amherst, 2000.

\bibitem[Puterman(1994)]{Puterman94}
Puterman, Martin~L.
\newblock \emph{{Markov Decision Processes: Discrete Stochastic Dynamic
  Programming}}.
\newblock John Wiley \& Sons, Inc., New York, NY, USA, 1994.

\bibitem[Salge et~al.(2014)Salge, Glackin, and Polani]{Salge14}
Salge, Christoph, Glackin, Cornelius, and Polani, Daniel.
\newblock {Empowerment -- An Introduction}.
\newblock In \emph{{Guided Self-Organization: Inception}}, pp.\  67--114.
  Springer, 2014.

\bibitem[Solway et~al.(2014)Solway, Diuk, C\'ordova, Yee, Barto, Niv, and
  Botvinick]{Solway14}
Solway, Alec, Diuk, Carlos, C\'ordova, Natalia, Yee, Debbie, Barto, Andrew~G.,
  Niv, Yael, and Botvinick, Matthew~M.
\newblock {Optimal Behavioral Hierarchy}.
\newblock \emph{{PLOS Computational Biology}}, 10\penalty0 (8):\penalty0 1--10,
  2014.

\bibitem[Sprekeler(2011)]{Sprekeler11}
Sprekeler, Henning.
\newblock {On the Relation of Slow Feature Analysis and Laplacian Eigenmaps}.
\newblock \emph{{Neural Computation}}, 23\penalty0 (12):\penalty0 3287--3302,
  2011.

\bibitem[Strang(2005)]{Strang05}
Strang, Gilbert.
\newblock \emph{{Linear Algebra and Its Applications}}.
\newblock Brooks Cole, 2005.

\bibitem[Sutton \& Barto(1998)Sutton and Barto]{Sutton98}
Sutton, Richard~S. and Barto, Andrew~G.
\newblock \emph{{Reinforcement Learning: An Introduction}}.
\newblock MIT Press, 1998.

\bibitem[Sutton et~al.(1999)Sutton, Precup, and Singh]{Sutton99}
Sutton, Richard~S., Precup, Doina, and Singh, Satinder.
\newblock {Between MDPs and semi-MDPs: A Framework for Temporal Abstraction in
  Reinforcement Learning}.
\newblock \emph{Artificial Intelligence}, 112\penalty0 (1–2):\penalty0 181 --
  211, 1999.

\bibitem[Szepesv\'ari(2010)]{Szepesvari10}
Szepesv\'ari, Csaba.
\newblock \emph{Algorithms for Reinforcement Learning}.
\newblock Synthesis Lectures on Artificial Intelligence and Machine Learning.
  Morgan \& Claypool, 2010.

\bibitem[Vezhnevets et~al.(2016)Vezhnevets, Mnih, Osindero, Graves, Vinyals,
  Agapiou, and Kavukcuoglu]{Vezhnevets16}
Vezhnevets, Alexander, Mnih, Volodymyr, Osindero, Simon, Graves, Alex, Vinyals,
  Oriol, Agapiou, John, and Kavukcuoglu, Koray.
\newblock {Strategic Attentive Writer for Learning Macro-Actions}.
\newblock In \emph{Proceedings of Advances in Neural Information Processing
  Systems (NIPS)}, pp.\  3486--3494, 2016.

\bibitem[Watkins \& Dayan(1992)Watkins and Dayan]{Watkins92}
Watkins, Christopher J. C.~H. and Dayan, Peter.
\newblock {Technical Note: \cal Q-Learning}.
\newblock \emph{Machine Learning}, 8\penalty0 (3-4), May 1992.

\bibitem[Weber et~al.(2004)Weber, Rungsarityotin, and Schliep]{Weber04}
Weber, Marcus, Rungsarityotin, Wasinee, and Schliep, Alexander.
\newblock {Perron Cluster Analysis and Its Connection to Graph Partitioning for
  Noisy Data}.
\newblock Technical Report 04-39, ZIB, Takustr.7, 14195 Berlin, 2004.

\bibitem[Wiskott \& Sejnowski(2002)Wiskott and Sejnowski]{WiskottS02}
Wiskott, Laurenz and Sejnowski, Terrence~J.
\newblock {Slow Feature Analysis: Unsupervised Learning of Invariances}.
\newblock \emph{{Neural Computation}}, 14\penalty0 (4):\penalty0 715--770,
  2002.

\end{thebibliography}
\bibliographystyle{icml2017}

\onecolumn


\section*{Appendix: Supplementary Material}

This supplementary material contains details omitted from the main text due to space constraints. The list of contents is below:
\begin{itemize}
    \item Supporting lemmas and their respective proofs, as well as a more detailed proof of Theorem 3.1;
    \item Description of how to easily compute the \emph{diffusion time} in tabular MDPs;
    \item The options leading to bottleneck states (doorways) we used in our experiments;
    \item Performance comparisons between eigenoptions and options generated to reach randomly selected states;
    \item Demonstration of the applicability of eigenoptions in multiple tasks with a new set of experiments;
    \item Further details on the empirical setting used in the Arcade Learning Environment.
\end{itemize} 

\section*{A. Lemmas and Proofs}

\addtocounter{section}{4}

\begin{lemma}
 Suppose $(I + A)$ is a non-singular matrix, with $||A|| \leq 1$. We have:
$$||(I + A)^{-1}|| \leq \frac{1}{1 - ||A||}.$$
\end{lemma}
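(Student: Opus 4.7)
The plan is to derive the inequality from the identity $(I+A)(I+A)^{-1} = I$ by rearranging and taking norms, which avoids having to invoke Neumann series convergence separately. First I would write
\[
(I+A)^{-1} = I - A(I+A)^{-1},
\]
which follows by expanding $(I+A)(I+A)^{-1}=I$ and solving for the inverse. Then, applying the triangle inequality and submultiplicativity of the operator norm gives
\[
\|(I+A)^{-1}\| \le \|I\| + \|A\|\,\|(I+A)^{-1}\| = 1 + \|A\|\,\|(I+A)^{-1}\|.
\]

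Next I would collect terms to obtain
\[
(1-\|A\|)\,\|(I+A)^{-1}\| \le 1,
\]
and conclude by dividing by $(1-\|A\|)$ in the non-trivial regime $\|A\|<1$. The edge case $\|A\|=1$ is handled separately: the right-hand side $1/(1-\|A\|)$ is $+\infty$, so the stated inequality is vacuously true (assuming $(I+A)^{-1}$ has finite norm, which is implied by the non-singularity hypothesis in finite dimensions).

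I expect no real obstacle here; the only subtle point is making sure the division by $1-\|A\|$ is justified, which is why I separate out the $\|A\|=1$ case. An alternative route would be the Neumann series $(I+A)^{-1}=\sum_{k\ge 0}(-A)^k$, whose norm is bounded by the geometric series $\sum_{k\ge 0}\|A\|^k = 1/(1-\|A\|)$, but this requires establishing convergence of the series, whereas the rearrangement argument uses only the already-assumed existence of $(I+A)^{-1}$.
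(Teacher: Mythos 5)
Your proof is correct and follows essentially the same route as the paper's: rewrite $(I+A)^{-1} = I - A(I+A)^{-1}$, take norms, apply the triangle inequality and submultiplicativity, and collect the $\|(I+A)^{-1}\|$ terms. Your explicit treatment of the boundary case $\|A\|=1$ (where the bound is vacuous) is slightly more careful than the paper, which writes the hypothesis as $\|A\|\le 1$ while the division step really requires $\|A\|<1$.
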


\begin{proof}\footnote{Our proof follows closely the proof of Parnell in lecture notes available at \url{http://www-solar.mcs.st-and.ac.uk/~clare/Lectures/num-analysis.html}.}
\begin{align*}
(I + A)(I + A)^{-1} &= I\\
I(I + A)^{-1} + A(I + A)^{-1} &= I\\
(I + A)^{-1} &= I - A(I + A)^{-1}\\
||(I + A)^{-1}|| &= ||I - A(I+A)^{-1}||\\
                 &\leq ||I|| + ||A (I + A)^{-1}|| && \text{because} \ \ ||A + B|| \leq ||A|| + ||B||\\
                 &\leq 1 + ||A||||(I + A)^{-1}|| && \text{because} \ \ ||A B|| \leq ||A|| \cdot ||B||\\
||(I + A)^{-1}|| - ||A||||(I + A)^{-1}||&\leq 1\\
(1-||A||) ||(I + A)^{-1}|| &\leq 1\\
||(I + A)^{-1}|| &\leq \frac{1}{1 - ||A||} && \text{if} \ \ ||A|| \leq 1.
\end{align*}
\end{proof}

\begin{lemma}
The induced infinity norm of $(I - \gamma T)^{-1}T$ is bounded by
$$||(I - \gamma T)^{-1}T||_\infty \leq \frac{1}{(1 - \gamma)}.$$
\end{lemma}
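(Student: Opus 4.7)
The plan is to chain Lemma 5.1 together with the fact that the transition matrix $T$ is row-stochastic, so that its induced infinity norm equals $1$. Concretely, I would first peel off $T$ from the product using submultiplicativity of the induced infinity norm: $\|(I - \gamma T)^{-1} T\|_\infty \le \|(I - \gamma T)^{-1}\|_\infty \cdot \|T\|_\infty$. Since $T$ is a stochastic matrix (its rows sum to $1$ and its entries are nonnegative), the $\infty$-norm of $T$ is exactly $1$, and this factor disappears.

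The remaining task is to bound $\|(I - \gamma T)^{-1}\|_\infty$, and this is precisely the setup of Lemma 5.1 after the substitution $A = -\gamma T$. Under this substitution $I + A = I - \gamma T$, and $\|A\|_\infty = \gamma\,\|T\|_\infty = \gamma$. Since $\gamma \in [0, 1)$, the hypothesis $\|A\|_\infty < 1$ of Lemma 5.1 is satisfied; moreover $I - \gamma T$ is invertible (its Neumann series converges), so Lemma 5.1 applies and yields $\|(I - \gamma T)^{-1}\|_\infty \le \frac{1}{1 - \gamma}$.

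Combining the two steps gives
\begin{equation*}
\|(I - \gamma T)^{-1} T\|_\infty \;\le\; \|(I - \gamma T)^{-1}\|_\infty \cdot \|T\|_\infty \;\le\; \frac{1}{1-\gamma} \cdot 1 \;=\; \frac{1}{1-\gamma},
\end{equation*}
which is the desired inequality. There is no real obstacle here; the only mildly delicate point is making the invocation of Lemma 5.1 legitimate, which requires noting both that $\gamma < 1$ (to get the strict bound $\|A\|_\infty < 1$) and that $I - \gamma T$ is nonsingular (which follows from $\gamma < 1$ and $\|T\|_\infty = 1$ via the convergence of $\sum_{k \ge 0} (\gamma T)^k$). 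Everything else is routine submultiplicativity.
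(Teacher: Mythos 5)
Your argument is exactly the one in the paper: peel off $T$ by submultiplicativity, note $\|T\|_\infty = 1$ for a stochastic matrix, and invoke the preceding lemma with $A = -\gamma T$ to bound $\|(I-\gamma T)^{-1}\|_\infty$ by $\frac{1}{1-\gamma}$. You are somewhat more careful than the paper in verifying the hypotheses of that lemma (nonsingularity of $I - \gamma T$ and the strict inequality $\|A\|_\infty < 1$, which the paper's lemma statement loosely writes as $\le 1$), but the route is the same.
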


\begin{proof}
\begin{align*}
||(I - \gamma T)^{-1}T||_\infty & \leq ||(I - \gamma T)^{-1}||_\infty||T||_\infty  && \text{because} \ \  ||AB||_{\infty} \leq ||A||_{\infty} \cdot ||B||_{\infty}\\
||(I - \gamma T)^{-1}T||_\infty & \leq \frac{1}{1 - ||-\gamma T||_\infty} ||T||_\infty && \text{Lemma 3.1} \\
||(I - \gamma T)^{-1}T||_\infty & \leq \frac{1}{1 - \gamma ||T||_\infty} ||T||_\infty && \text{because}  \ \ ||\lambda B|| = |\lambda| ||B|| \\
||(I - \gamma T)^{-1}T||_\infty & \leq \frac{1}{(1 - \gamma)}
\end{align*}
\end{proof}

\begin{theorem}[Option's Termination]
Consider an eigenoption $o = \langle\mathcal{I}_o, \pi_o, \mathcal{T}_o\rangle$ and $\gamma~<~1$. Then, in an MDP with finite state space, $\mathcal{T}_o$ is nonempty.
\end{theorem}

\begin{proof}
This proof is more detailed than the one presented in the main paper. We can write the Bellman equation in the matrix form: ${\bf v} = {\bf r} + \gamma T \bf{v}$, where $\bf{v}$ is a \emph{finite} column vector with one entry per state encoding its value function. From equation (1) in the main paper we have ${\bf r} = T{\bf w - w}$ with ${\bf w} = \bm{\phi}(s)^\top {\bf e}$, where ${\bf e}$ denotes the eigenpurpose of interest. Therefore:

\begin{align*}
{\bf v}                                      &=      T{\bf w - w} + \gamma T {\bf v}\\
{\bf v + w}                                  &=      T{\bf w} + \gamma T {\bf v}\\
                                             &=      T{\bf w} + \gamma T {\bf v} + \gamma T {\bf w} - \gamma T {\bf w}\\
                                             &=      (1- \gamma) T {\bf w} + \gamma T ({\bf v + w})\\
{\bf v + w} - \gamma T ({\bf v + w})         &=      (1 - \gamma) T {\bf w}\\
(I - \gamma T) ({\bf v + w})                 &=      (1 - \gamma) T {\bf w}\\
 \bf{v} + \bf{w}                             &=      (1 - \gamma) (I - \gamma T)^{-1} T {\bf w} && (I - \gamma T)^{-1} \ \text{is guaranteed to be nonsigular because}\\
                                             &      && ||T|| \leq 1 \text{, where } \ ||T|| = \sup_{\mathbf{v}:||\mathbf{v}||_\infty = 1} ||T {\bf v}||_\infty \text{. By }\\
                                             &      && \text{Neumann series we have } (I - \gamma T)^{-1} = \sum_{n=0}^\infty \gamma^nT^n\\
 ||{\bf v + w}||_\infty                      &=      (1 - \gamma)||(I - \gamma T)^{-1} T {\bf w}||_\infty && \text{using the induced norm}\\
 ||{\bf v + w}||_\infty                      &\le    (1 - \gamma)||(I - \gamma T)^{-1} T||_\infty ||{\bf w}||_\infty && \text{because $||A{\bf x}||\leq||A||\cdot||{\bf x}||$}\\
 ||{\bf v + w}||_\infty                      &\le    (1 - \gamma) \frac{1}{(1-\gamma)} ||{\bf w}||_\infty && \text{Lemma~3.2}\\
 ||{\bf v + w}||_\infty                      &\le    ||{\bf w}||_\infty\\
\end{align*}
We can shift ${\bf w}$ by any finite constant without changing the reward, \emph{i.e.} $T{\bf w - w} = T({\bf w} + \bm{\delta}) - ({\bf w} + \bm{\delta})$ because $T{\bf 1}\bm{\delta} = {\bf 1}\bm{\delta}$ since $\sum_j T_{i,j} = 1$. Therefore, we can assume ${\bf w} \ge {\bf 0}$. Let $s^* = \argmax_s {\bf w}_{s^*}$, so that ${\bf w}_{s^*} = ||{\bf w}||_\infty$. Clearly ${\bf v}_{s^*} \le {\bf 0}$, otherwise $||{\bf v + w}||_\infty \ge |{\bf v}_{s^*} + {\bf w}_{s^*}| = {\bf v}_{s^*} + {\bf w}_{s^*} > {\bf w}_{s^*} = ||{\bf w}||_\infty$, arriving at a contradiction.\\
\end{proof}

\addtocounter{section}{1}
\addtocounter{lemma}{-2}

\begin{lemma}
In the tabular case, if all transitions in the MDP have been sampled once, $T^\top T = 2 L$.
\end{lemma}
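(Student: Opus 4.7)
The plan is to unfold $T^{\top}T$ as a sum of rank-one outer products, one per sampled transition, and then match the result to the standard formula $L = D - W$. Concretely, because each row of $T$ corresponding to a sampled transition $(s,s')$ is the vector $\bm{\phi}(s') - \bm{\phi}(s) = {\bf e}_{s'} - {\bf e}_{s}$ (with ${\bf e}_{i}$ the $i$-th standard basis vector), I would write
\[
T^{\top}T \;=\; \sum_{(s,s') \in \mathcal{E}} ({\bf e}_{s'} - {\bf e}_{s})({\bf e}_{s'} - {\bf e}_{s})^{\top},
\]
where $\mathcal{E}$ is the multiset of sampled transitions. Each summand is a sparse $|\mathscr{S}|\times|\mathscr{S}|$ matrix with entries $+1$ at positions $(s,s)$ and $(s',s')$, and $-1$ at the two off-diagonal positions $(s,s')$ and $(s',s)$.

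Next I would compute the entries of the sum. For an off-diagonal entry $(i,j)$ with $i\neq j$, only transitions with $\{s,s'\}=\{i,j\}$ contribute, and each such transition contributes $-1$. Under the hypothesis that every MDP transition has been sampled once and that the graph underlying $W$ symmetrizes forward/backward transitions, each undirected adjacency edge between $i$ and $j$ yields two directed transitions (one in each direction), so the $(i,j)$ entry equals $-2W_{ij}$. For a diagonal entry $(i,i)$, each transition incident to $i$ (as either endpoint) contributes $+1$, and again symmetrization gives two contributions per incident edge, so the diagonal entry equals $2\sum_{j}W_{ij} = 2D_{ii}$. Assembling these gives $T^{\top}T = 2D - 2W = 2L$.

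The only subtle point, and therefore what I would take extra care to state, is the bookkeeping that turns the directed sampled transitions into the symmetric $W$ used in the combinatorial Laplacian: it relies on the convention (used implicitly throughout the paper when going from the MDP to the graph) that each adjacency edge corresponds to one forward and one backward sampled transition. Given that convention, the proof is a direct coefficient-matching calculation and requires no further machinery, which is why the statement can be isolated as a short lemma supporting Theorem~5.1.
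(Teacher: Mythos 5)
Your proof is correct and follows essentially the same route as the paper: the paper computes the entries of $T^{\top}T$ componentwise by splitting into the diagonal (giving $2D$) and off-diagonal (giving $-2W$) parts, and you obtain the identical bookkeeping by expanding $T^{\top}T$ as a sum of rank-one outer products $({\bf e}_{s'}-{\bf e}_{s})({\bf e}_{s'}-{\bf e}_{s})^{\top}$, which is just a cleaner way to organize the same calculation. You also correctly flag the one assumption the paper leaves implicit, namely that each undirected edge of $W$ is witnessed by exactly one forward and one backward sampled transition, which is where the factor of $2$ comes from.
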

\begin{proof}
Let $t_{ij}$ and $tt_{ij}$ denote the entries in the $i$-th row and $j$-th column of matrices $T$ and $T^\top T$. We can write $tt_{ij}$ as:
\begin{equation}
tt_{ij} = \sum_k t_{ik} \times t_{jk}.
\end{equation}
In the tabular case, $t_{ij}$ has three possible values:
\begin{itemize}
    \item $t_{ij} = +1$, meaning that the agent arrived in state $j$ at time step $i$, 
    \item $t_{ij} = -1$, meaning that the agent left state $j$ at time step $i$,
    \item $t_{ij} = 0$, meaning that the agent did not arrive nor leave state $j$ at time step~$i$.
\end{itemize}
We decompose $T^\top T$ in two matrices, $K$ and $Z$, such that $T^\top T = K + Z$. Here $Z$ is a diagonal matrix such that $z_{ii} = tt_{ii}$, for all $i$; and $K$ contains all elements from $T^\top T$ that lie outside the main diagonal.

When computing the elements of $Z$ we have $i = j$. Thus $z_{ii} = \sum_k t_{ik}^2$. Because we square all elements, we are in fact summing over all transitions leaving ($-1^2$) \underline{and} arriving ($1^2$) in state $i$, counting the node's degree twice. Thus, $Z = 2D$.

When not computing the elements in the main diagonal, for the element $tt_{ij}$, we add all transitions that leave state $i$ arriving in state $j$ ($-1 \times 1$), \underline{and} those that leave state $j$ arriving in state $i$ ($1 \times -1$). We assume each transition has been sampled once, thus:
$$tt_{ij} = \left\{\begin{array}{rl}
-2, &\mbox{if the transition between states $i$ and $j$ exists},\\
0, &\mbox{otherwise}.
\end{array}\right.
$$
Therefore, we have $K = -2 W$ and $T^\top T = K + Z = 2 (D - W)$.
\end{proof}

\section*{B. Diffusion Time Computation}

In the main paper we introduced \emph{diffusion time} as a new metric to evaluate exploration, but we did not discuss how it can be computed. Diffusion time encodes the expected number of time steps required to navigate between any two states in the MDP when following a random walk. In tabular domains, we can easily compute the diffusion time with dynamic programming. To do so we define a new MDP such that the value function of a state $s$, under a uniform random policy, encodes the expected number of steps required to navigate between state $s$ and a chosen goal state. We can then compute the expected number of steps between any two states by averaging, for each possible goal, the value of all other states.

The MDP in which the value function of state $s$ encodes the expected number of time steps from $s$ to a goal state has $\gamma = 1$ and a reward function where the agent observes $+1$ at every time step in which it is not in the goal state. Policy evaluation in this case encodes the expected number of time steps the agent will take before arriving to the goal state. To compute the diffusion time we iterate over all possible states, defining them as terminal states, and averaging the value function of the other states in that MDP.

\section*{C. Options Leading to Doorways in the 4-room Domain}

Figure~\ref{fig:bottleneck_options} depicts the four options we refer to in Section~4 as the options leading to bootleneck states, \emph{i.e.}, doorways. Each option is defined in a room and it moves the agent toward the closest doorway. These options were inspired by Solway et al. (2014)'s discussion about the optimal options discovered by their algorithm.
\begin{figure*}[h]
    \centering
    \begin{subfigure}[b]{0.24\textwidth}
    \center
        \includegraphics[width=0.8\columnwidth]{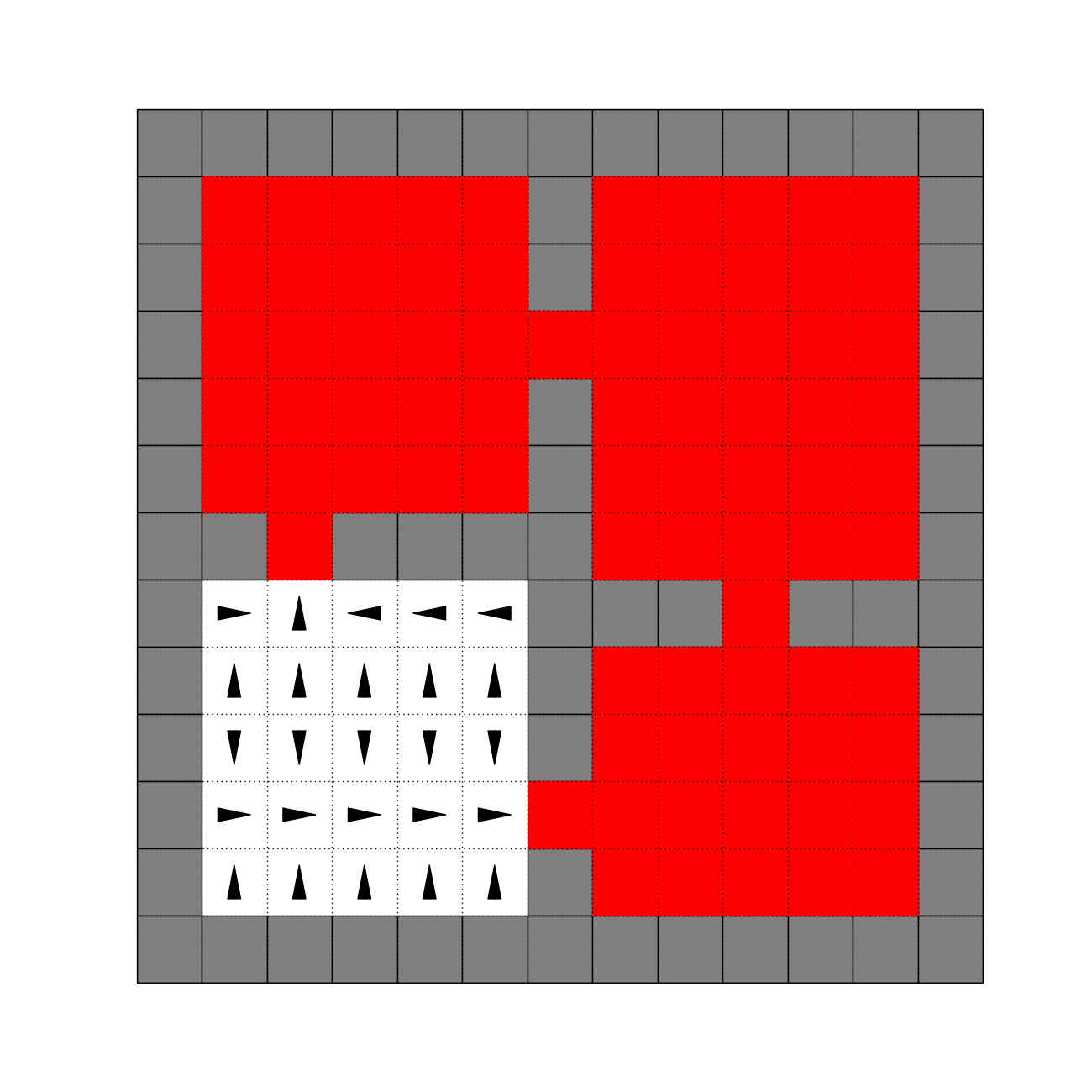}
    \end{subfigure}
    \begin{subfigure}[b]{0.24\textwidth}
    \center
        \includegraphics[width=0.8\columnwidth]{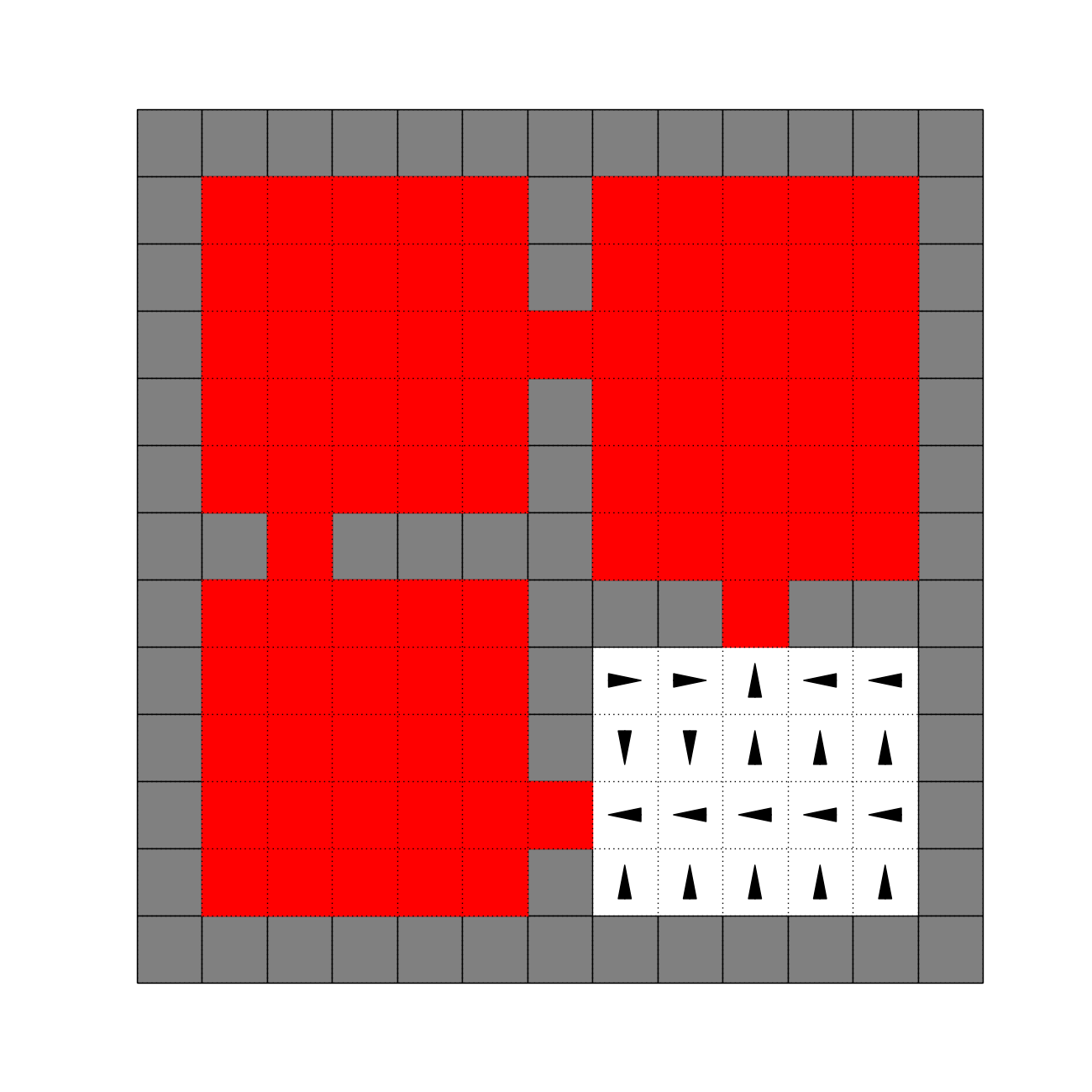}
    \end{subfigure}
    \begin{subfigure}[b]{0.24\textwidth}
    \center
        \includegraphics[width=0.8\columnwidth]{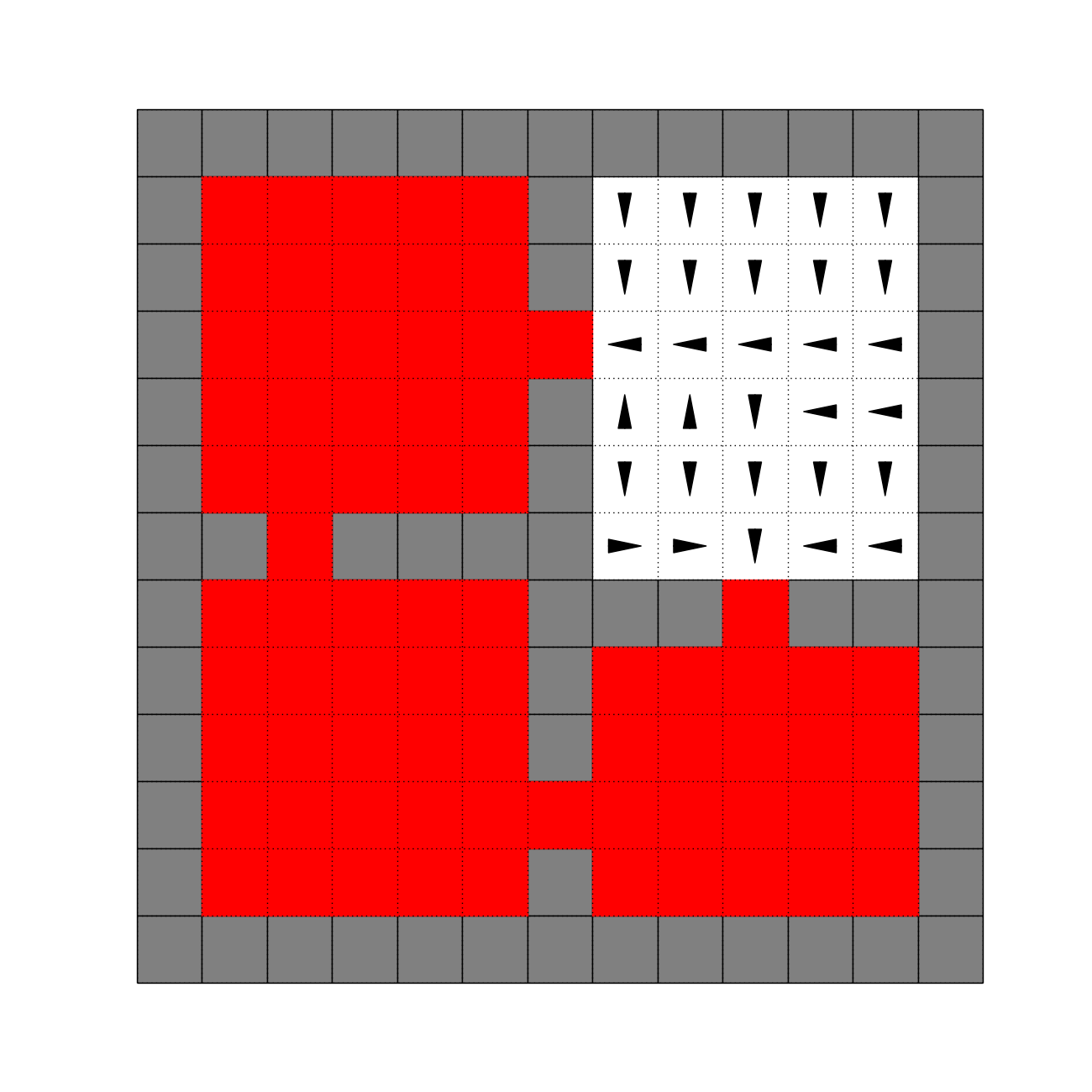}
    \end{subfigure}
    \begin{subfigure}[b]{0.24\textwidth}
    \center
        \includegraphics[width=0.8\columnwidth]{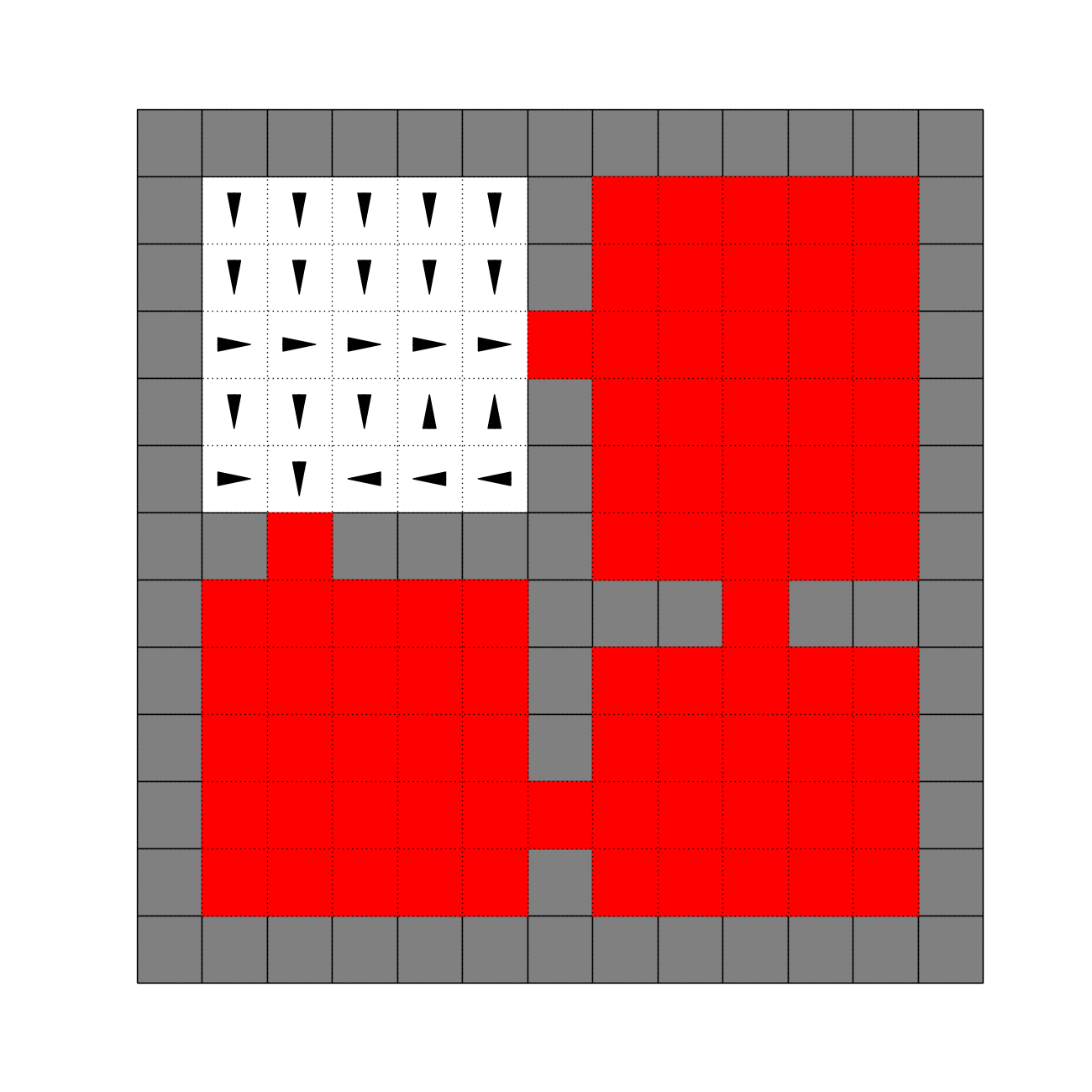}
    \end{subfigure}
    \caption{Options leading to bottleneck states. Each option is defined in a single room, moving the agent to the closest doorway.}
    \label{fig:bottleneck_options}
\end{figure*}

\section*{D. Comparison to Random Options}

In this section we show the importance of using information about diffusion in the environment to define the option's purposes. This information impacts the sequence of subgoal locations the options' seek after, as well as the time scales they operate at. The ordering in which the eigenoptions are discovered and the different time scales they operate at can have a major impact on the agents' performance. 

We demonstrate the importance of using the environment's diffusion information by comparing our approach to \emph{random options}, a simple baseline that does not use such information. This baseline defines an option to be the policy, defined in the whole state space, that terminates in a randomly selected state of the environment. We performed our experiments in the tabular case because it is not clear how we can extend this baseline to settings in which states cannot be enumerated.

Figure~\ref{fig:random_diffusion_time} depicts the diffusion time (\emph{c.f.} Section B) of random options and eigenoptions in the 4-room domain. We used the same method described in Section~4.2 to obtain the eigenoptions' performance. For the random options results, we added them incrementally to the agent's action set until having added all possible options. We repeated this process $24$ times to verify the impact of adding random options in a different order. Each blue line represents the performance of one of the evaluated sequences. The results clearly show that eigenoptions do more than going to a randomly selected state. Most of the obtained sequences of random options fail to reduce the agent's diffusion time. They increase it by several orders of magnitude (notice the $y$-axis is in logarithmic scale) until having enough options available to the point that the graph is almost fully connected, that is, when the agent basically has an option leading it to each possible state in the MDP.

Figure~\ref{fig:random_learning_curve} was generated following the protocol described in Section 4.3. It depicts the learning curve of agents equipped with eigenoptions and of agents equipped with random options. As before, the blue lines indicate the agent's performance in individual runs. We can see that no individual run is competitive to eigenoptions. When fewer options are used (not shown), the variance across individual runs is even larger, depending on whether one of the random options terminates near the goal state. In some runs the agent never even learns to reach the goal. Therefore, as in the diffusion time, on average, random options are not competitive to eigenoptions, demonstrating the importance of the diffusion model we use.

\begin{figure}[t]
    \centering
    \begin{subfigure}[b]{0.4\columnwidth}
        \includegraphics[width=\columnwidth]{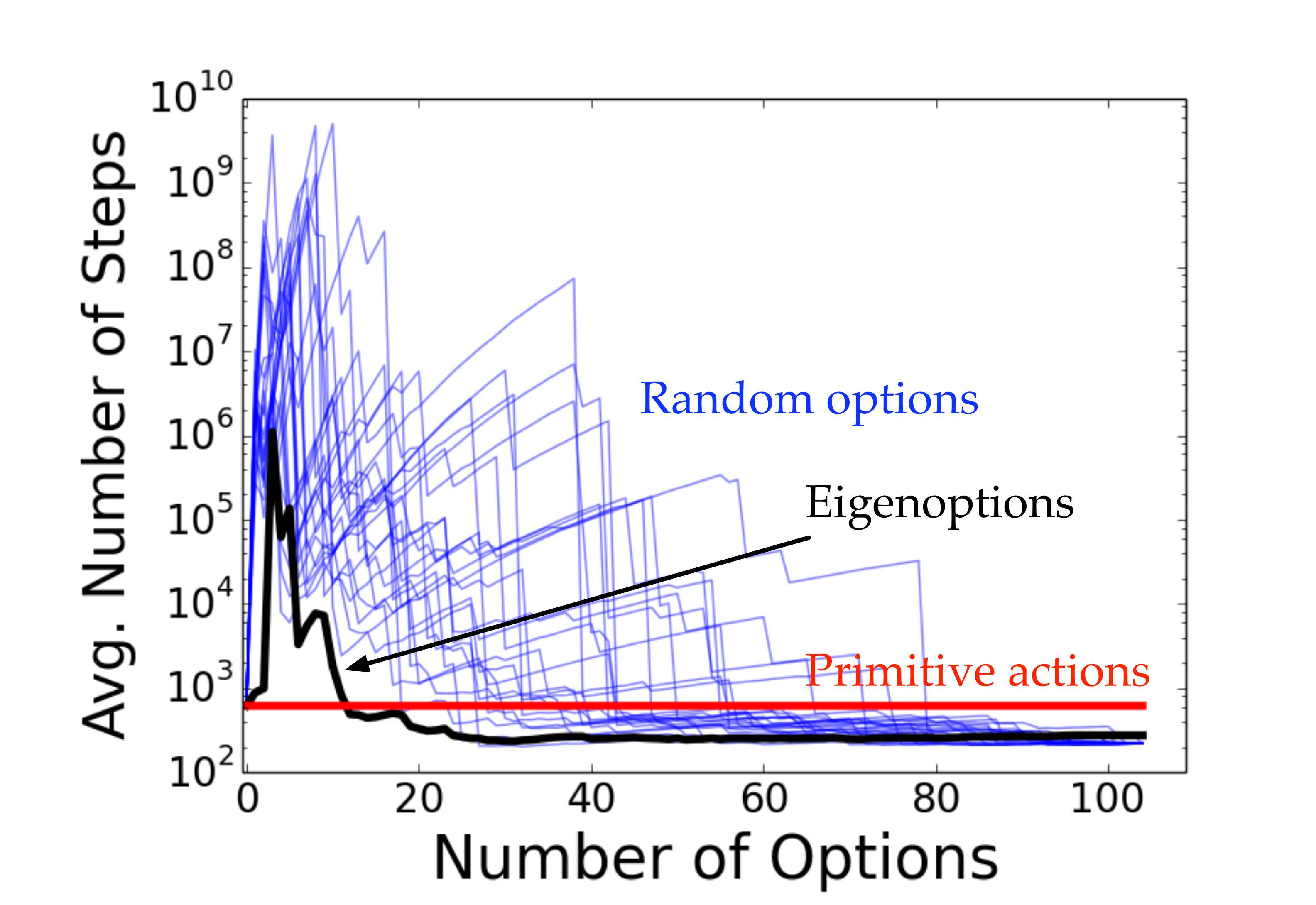}
        \caption{Diffusion time. The $y$-axis is in logarithmic scale.}
        \label{fig:random_diffusion_time}
    \end{subfigure}
    ~~~~~~~~
    \begin{subfigure}[b]{0.4\columnwidth}
        \raisebox{1.6mm}{\includegraphics[width=0.84\columnwidth]{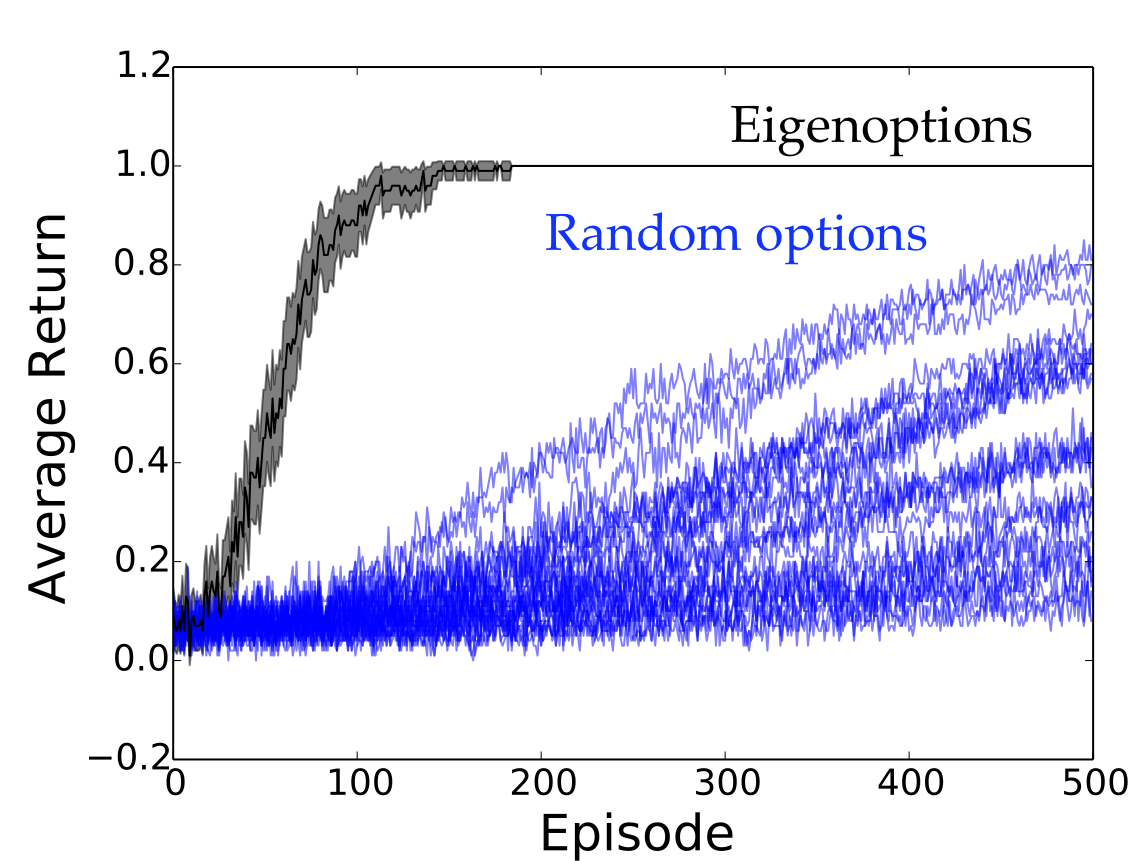}}
        \caption{Learning curve using 64 options.}
        \label{fig:random_learning_curve}
    \end{subfigure}
    \caption{Diffusion time and learning performance of eigenoptions and of random options in the 4-room domain.}\label{fig:random_baseline}
\end{figure}

\section*{D. Empirical Evaluation of the Agent's Performance in Multiple Tasks}

In Section~4 we argued that eigenoptions are useful for multiple tasks, based on results showing that eigenoptions allow us to find and to accumulated rewards faster. Here we explicit demonstrate the uselfuness of eigenoptions to multiple tasks. We evaluate the agents' performance for different starting and goal states in the 4-room domain. As in Section~4.3, we use Q-Learning ($\alpha = 0.1, \gamma = 0.9$) to learn a policy over primitive actions. The behavior policy chooses uniformly over primitive actions and options, following them until termination. Episodes were $100$ time steps long, and we learned for $250$ episodes. For clarity, we zoom in the plots on the interval in which agents are still learning.

Figure~\ref{fig:multi} depicts, after learning for a pre-determined number of episodes, the average over 100 trials of the agents' final performance, as well as the starting (S) and goal (G) states. Based on our previous results, we fixed the number of used eigenoptions to $64$ ($32$ options and their negations). In this set of experiments we also compare our approach to traditional bottleneck options (Figure~\ref{fig:bottleneck_options}).

The obtained results show that switching the positions of the starting and goal states have no effect in the performance of our algorithm. Also, in almost all settings, the agents augmented by eigenoptions outperfom those equipped only with primitive actions. The comparison between eigenoptions and options that look for bottleneck states is more subtle. As expected, agents equipped with eigenoptions outperform agents equipped with options leading to bottleneck states in settings in which the goal state is far from the doorways, as discussed in the main paper. In scenarios where the goal state is closer to bottleneck states, the options leading to doorways are more competitive. Importantly, this analysis is based on the results when using 64 eigenoptions, which may not encode all options required to go to a specific region of the state space.

\clearpage

\section*{E. Experimental Setup in the Arcade Learning Environment}

We defined six different starting states in each Atari 2600 game, letting the agent take random actions from that point until termination. The agent follows a pre-determined sequence of actions leading it to each starting state. We store the observed transitions leading the agent to the start states as well as those obtained from the random actions. In the main paper we provided results for \textsc{Freeway} and \textsc{Montezuma's Revenge}. In this section we also provide results for \textsc{Ms Pac-Man}. The starting states for all three games are depicted in Figure~\ref{fig:start_states}.

\begin{figure}[t]
    \centering
    \begin{subfigure}[b]{0.3\columnwidth}
        \includegraphics[width=\columnwidth]{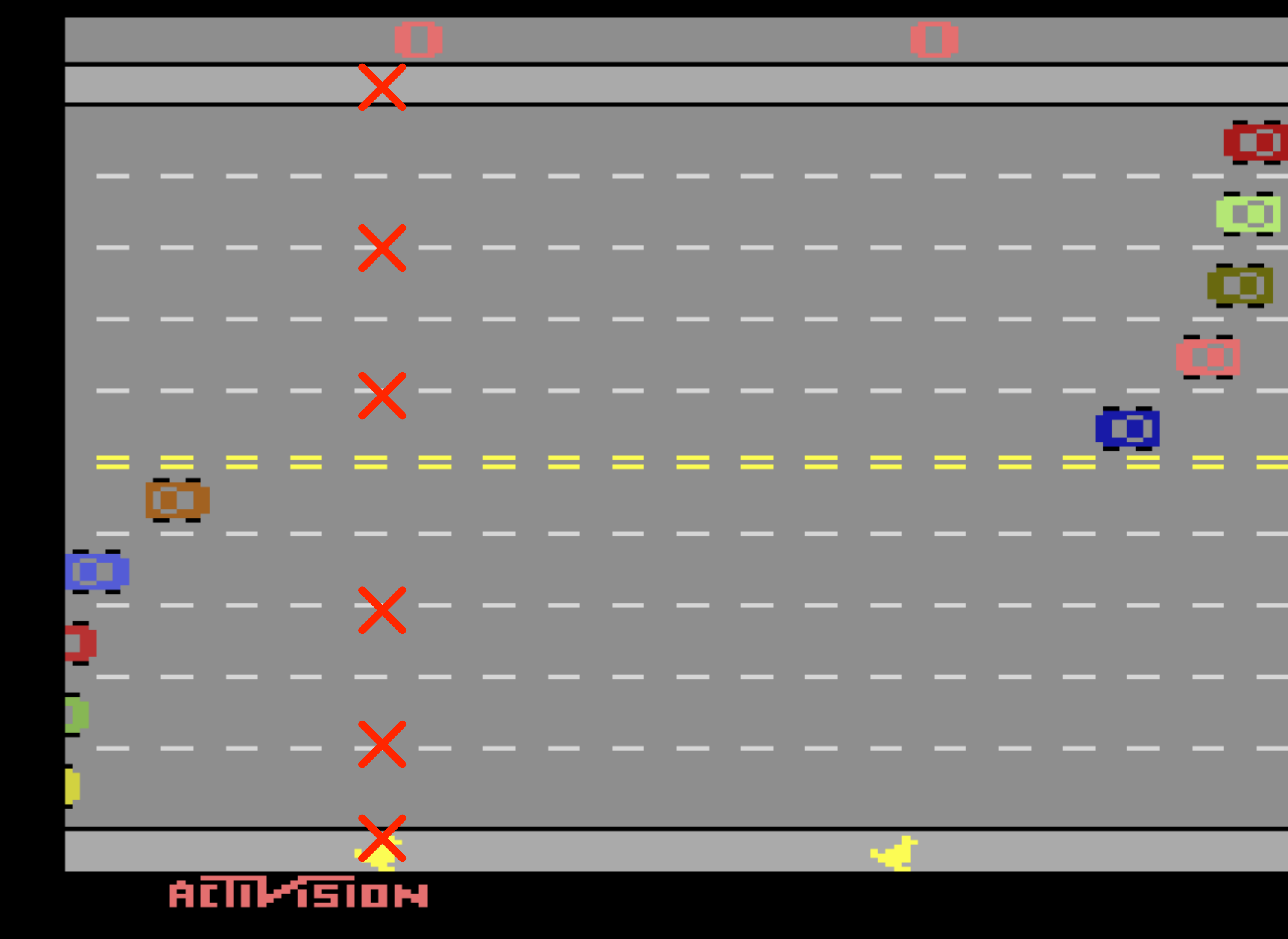}
        \caption{\textsc{Freeway}}
    \end{subfigure}
    ~
    \begin{subfigure}[b]{0.3\columnwidth}
        \includegraphics[width=\columnwidth]{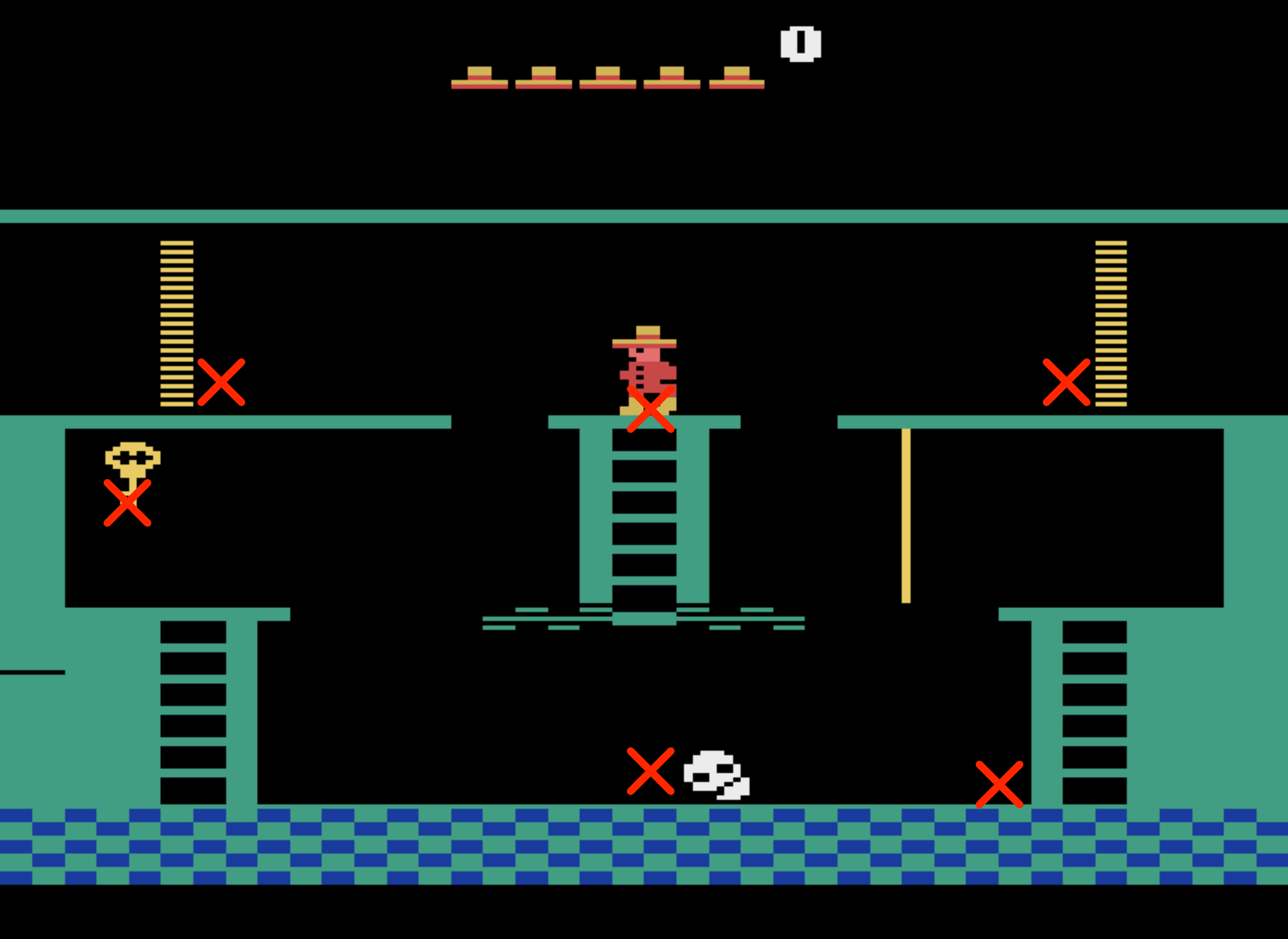}
        \caption{\textsc{Montezuma's Revenge}}
    \end{subfigure}
    ~
    \begin{subfigure}[b]{0.3\columnwidth}
        \includegraphics[width=\columnwidth]{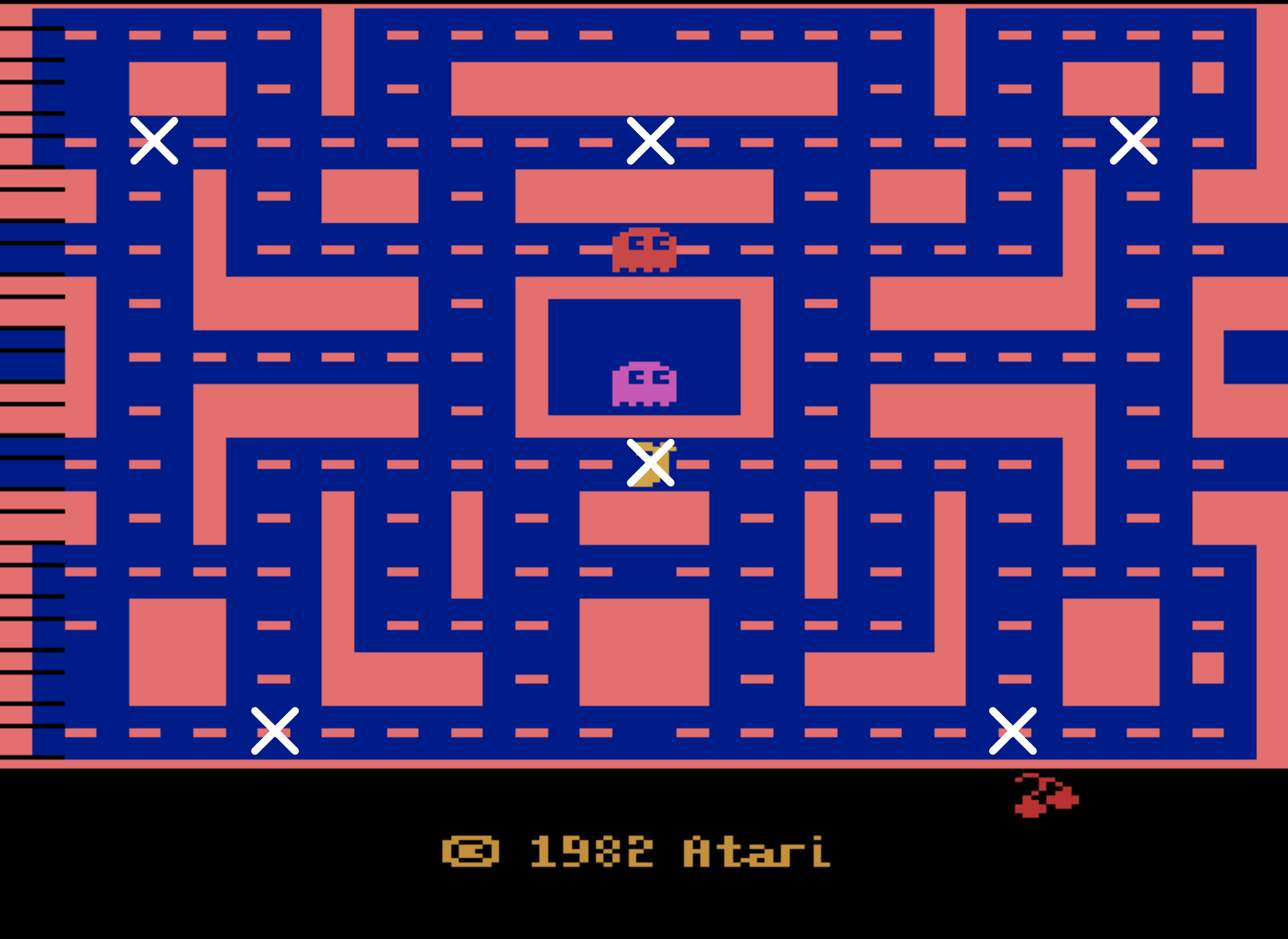}
        \caption{\textsc{Ms Pac-Man}}
    \end{subfigure}
    \caption{Pre-defined start states in Atari 2600 games.}\label{fig:start_states}
\end{figure}

The agent plays rounds of six episodes, with each episode starting from a different start state, until it observes at least $25{,}000$ new transitions. The final incidence matrix in which we ran the SVD had $25{,}000$ rows, which we sampled uniformly from the set of observed transitions. The agent used the deterministic version of the Arcade Learning Environment (ALE), the games' minimal action set and, a frame skip of $1$.

\begin{wrapfigure}{r}{0.42\textwidth}
    \begin{center}
    \includegraphics[width=0.35\columnwidth]{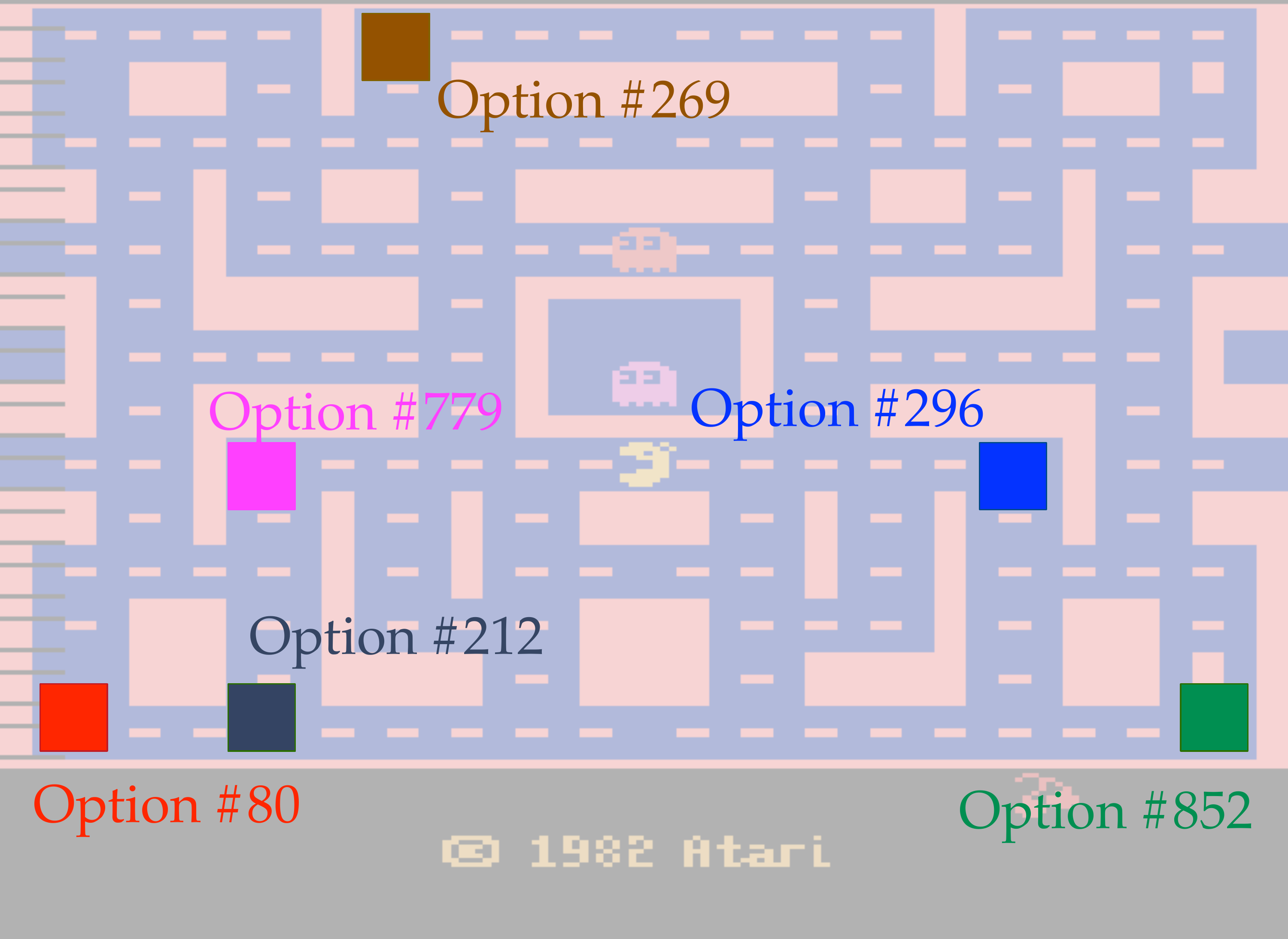}
    \caption{Options in \textsc{Ms. Pac-Man} (\emph{c.f.} text for details).}
    \end{center}
    \label{fig:ms_pacman}
\end{wrapfigure}

We used three games to evaluate the options we discover in the sample-based setting with linear function approximation. We discussed the results for \textsc{Freeway} and \textsc{Montezuma's Revenge} in the main paper. The results we obtained in \textsc{Ms. Pac-Man} are similar to those we already discussed. \textsc{Ms. Pac-Man} is a game in which the agent needs to navigate through a maze eating pellets while avoiding ghosts. As in the other games, the agent has the clear intent of reaching particular positions in the screen, such as corners and intersections. Figure~4 depicts the positions in which agents tend to spend most of their time on.  A video of the highlighted options can be found online.\footnote{\url{https://youtu.be/2BVicx4CDWA}}

\begin{figure}[t]
    \centering
    \begin{subfigure}[b]{0.24\columnwidth}
        \includegraphics[width=\columnwidth]{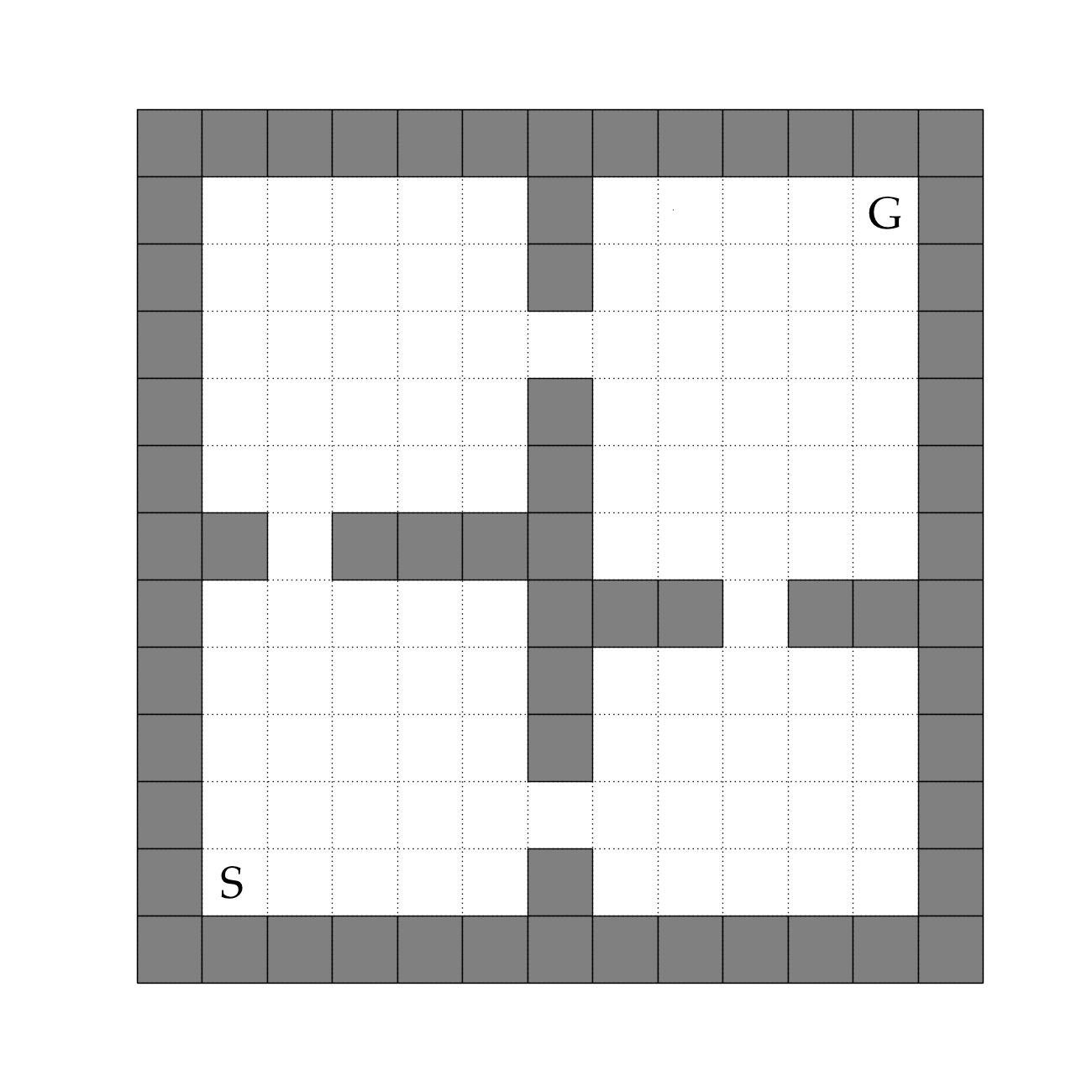}
    \end{subfigure}
    \begin{subfigure}[b]{0.24\columnwidth}
        \raisebox{4.00mm}{\includegraphics[width=\columnwidth]{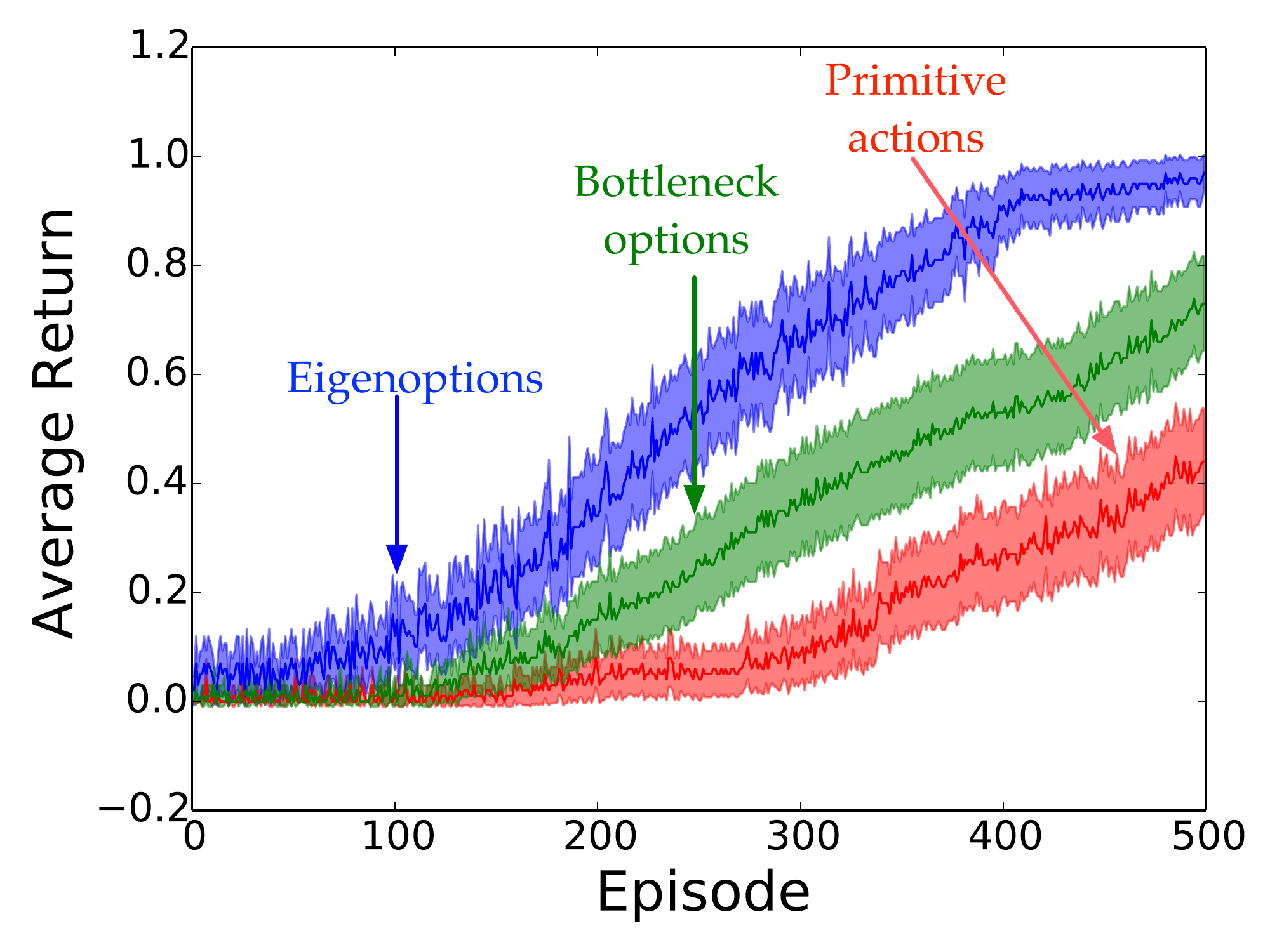}}
    \end{subfigure}
    \begin{subfigure}[b]{0.24\columnwidth}
        \includegraphics[width=\columnwidth]{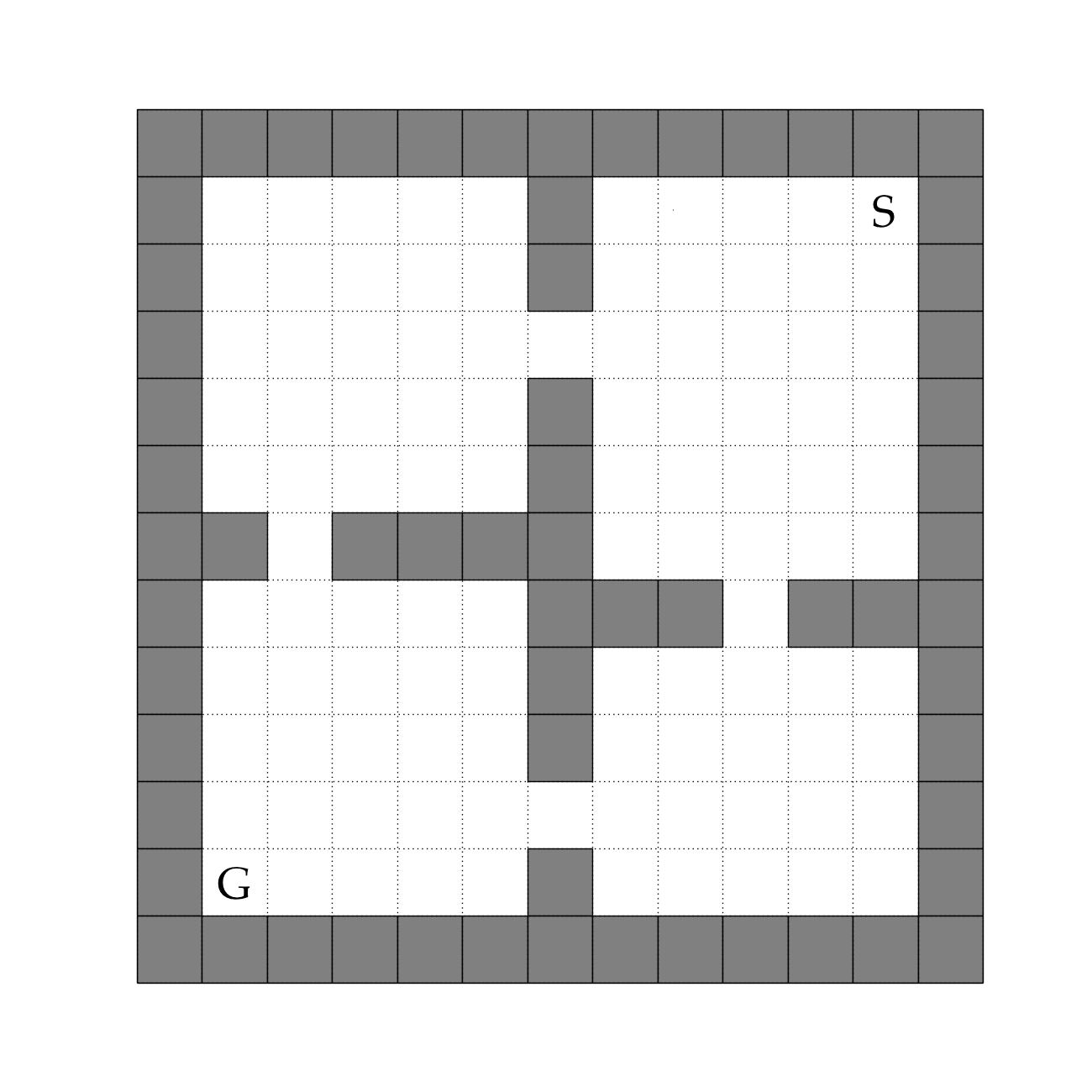}
    \end{subfigure}
    \begin{subfigure}[b]{0.24\columnwidth}
        \raisebox{4.00mm}{\includegraphics[width=\columnwidth]{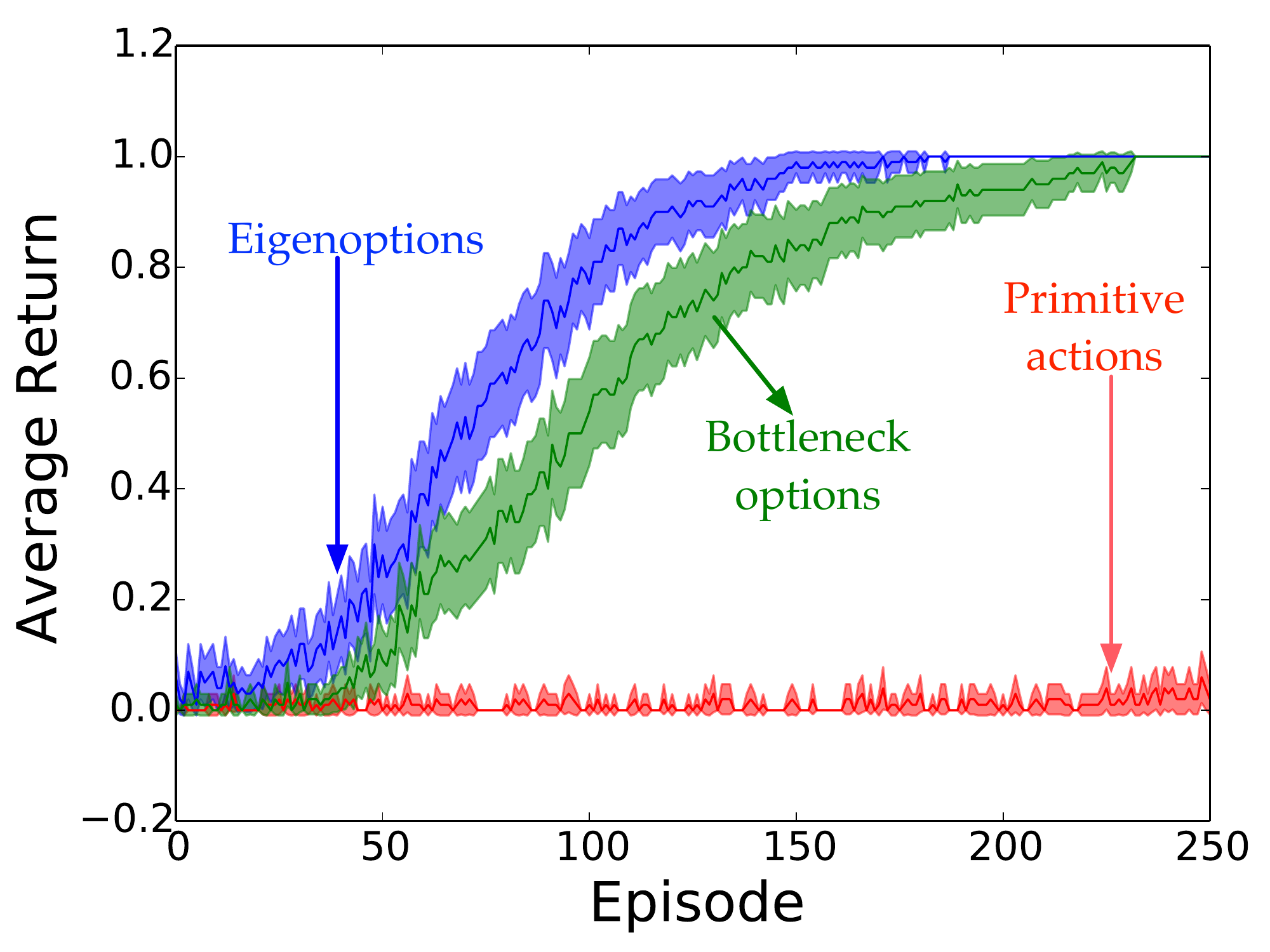}}
    \end{subfigure}
    \begin{subfigure}[b]{0.24\columnwidth}
        \includegraphics[width=\columnwidth]{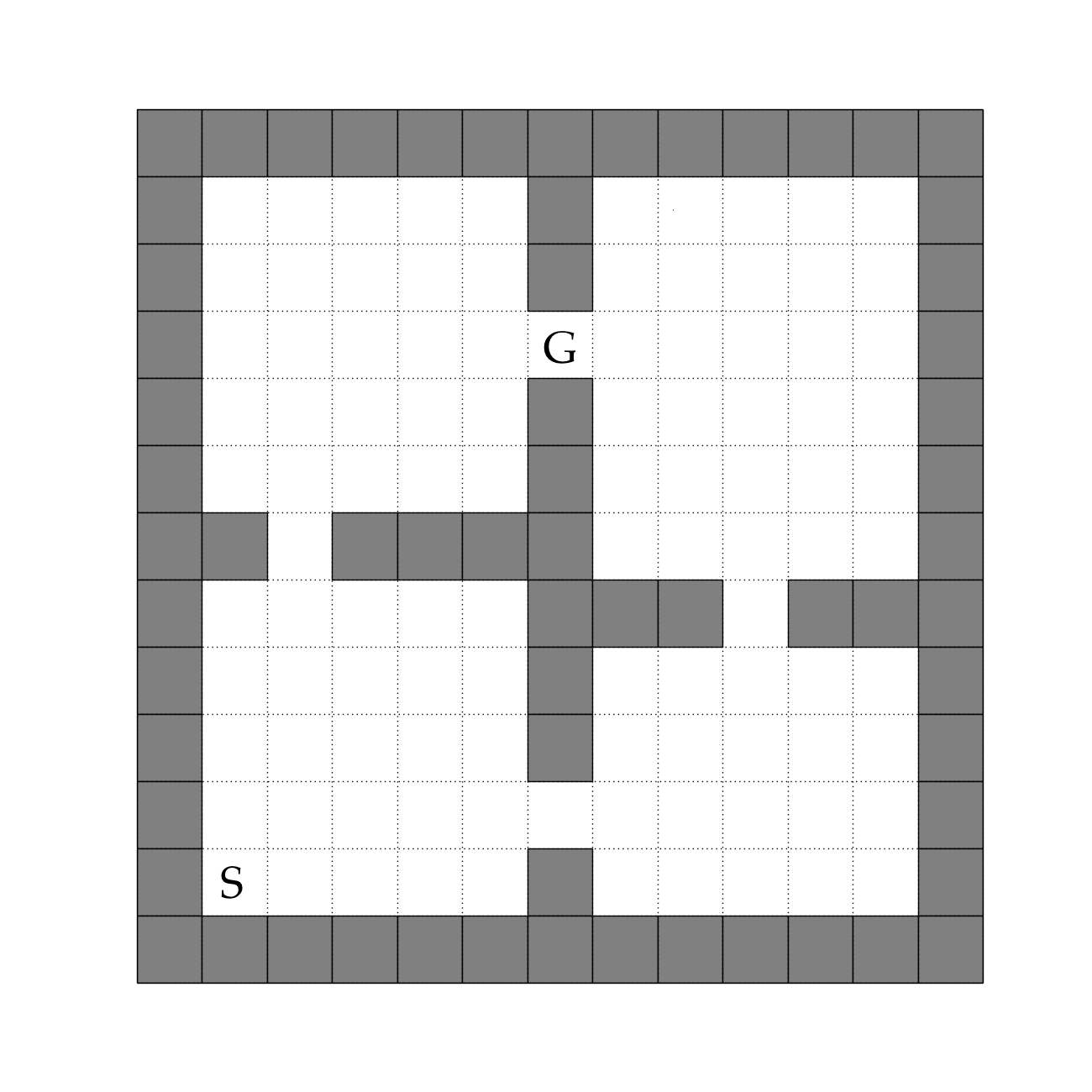}
    \end{subfigure}
    \begin{subfigure}[b]{0.24\columnwidth}
        \raisebox{4.00mm}{\includegraphics[width=\columnwidth]{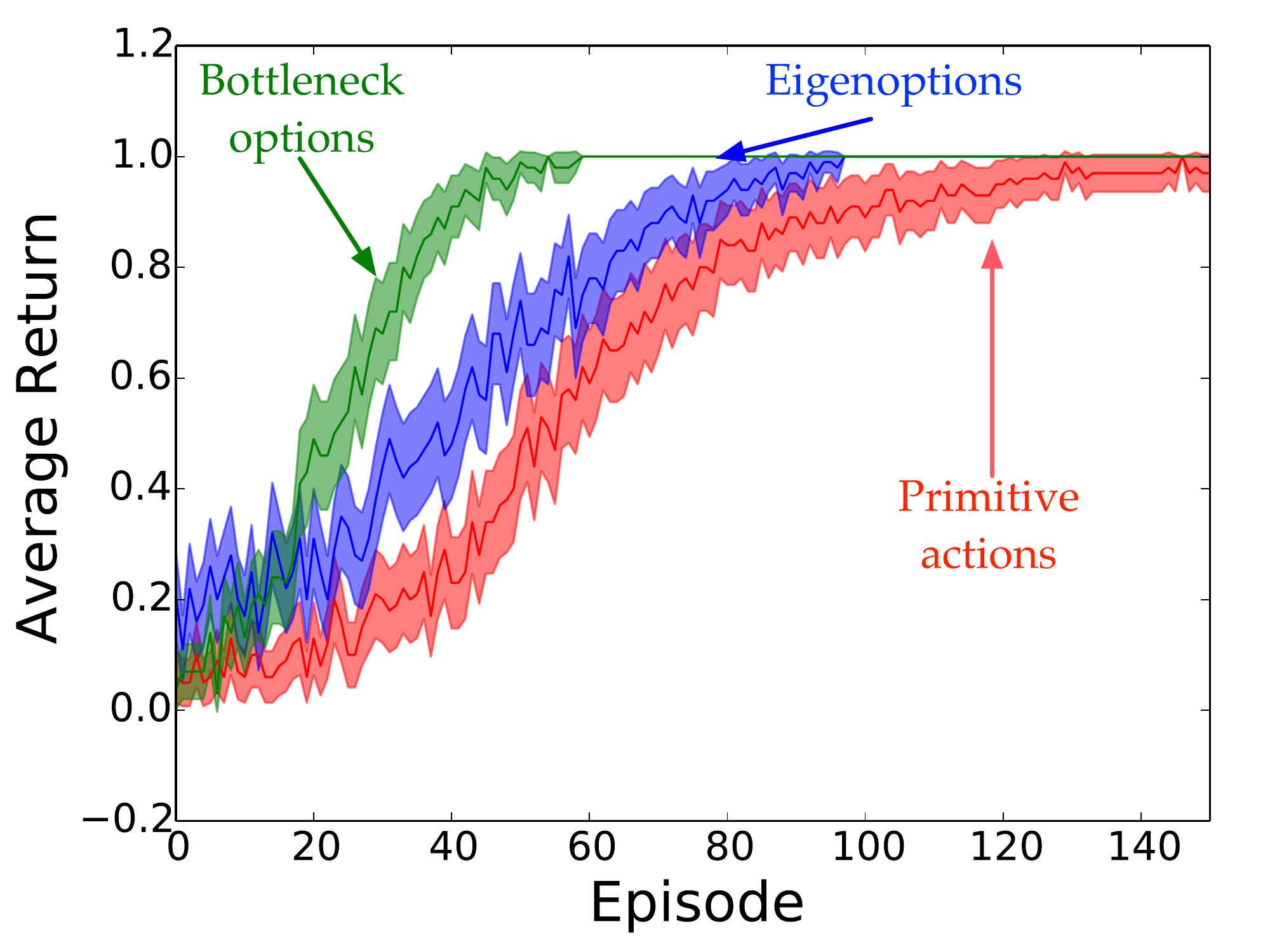}}
    \end{subfigure}
    \begin{subfigure}[b]{0.24\columnwidth}
        \includegraphics[width=\columnwidth]{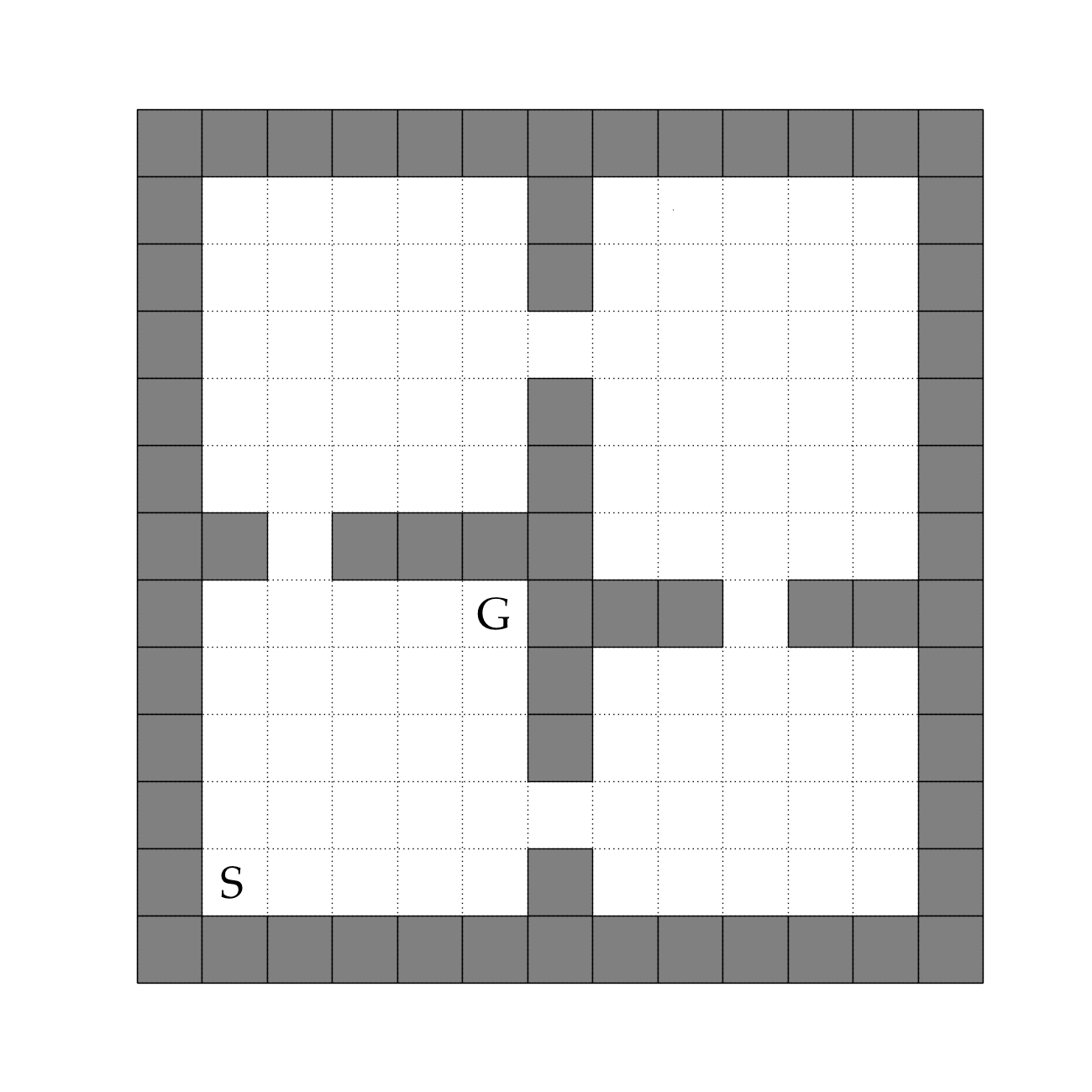}
    \end{subfigure}
    \begin{subfigure}[b]{0.24\columnwidth}
        \raisebox{4.00mm}{\includegraphics[width=\columnwidth]{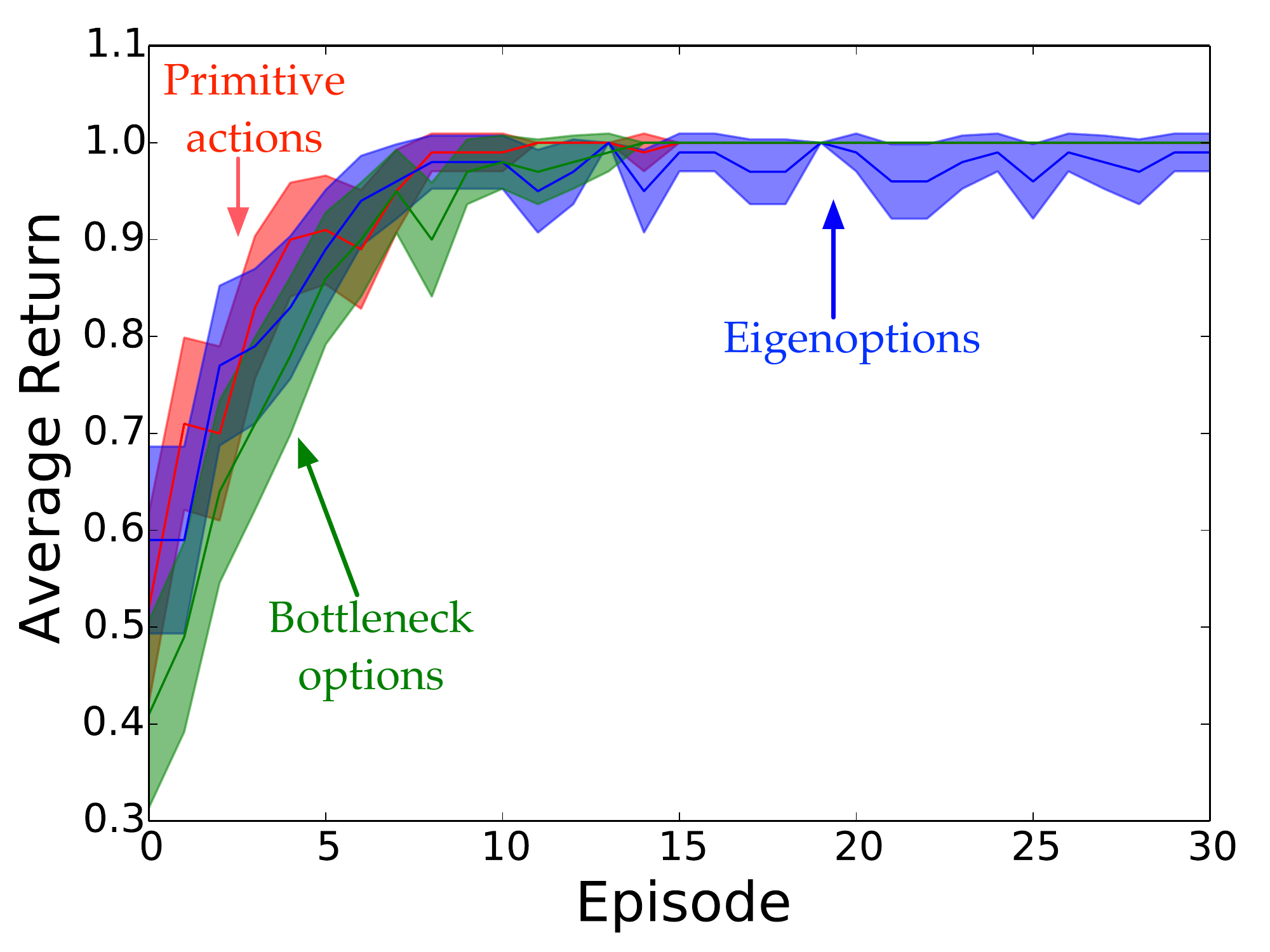}}
    \end{subfigure}
    \begin{subfigure}[b]{0.24\columnwidth}
        \includegraphics[width=\columnwidth]{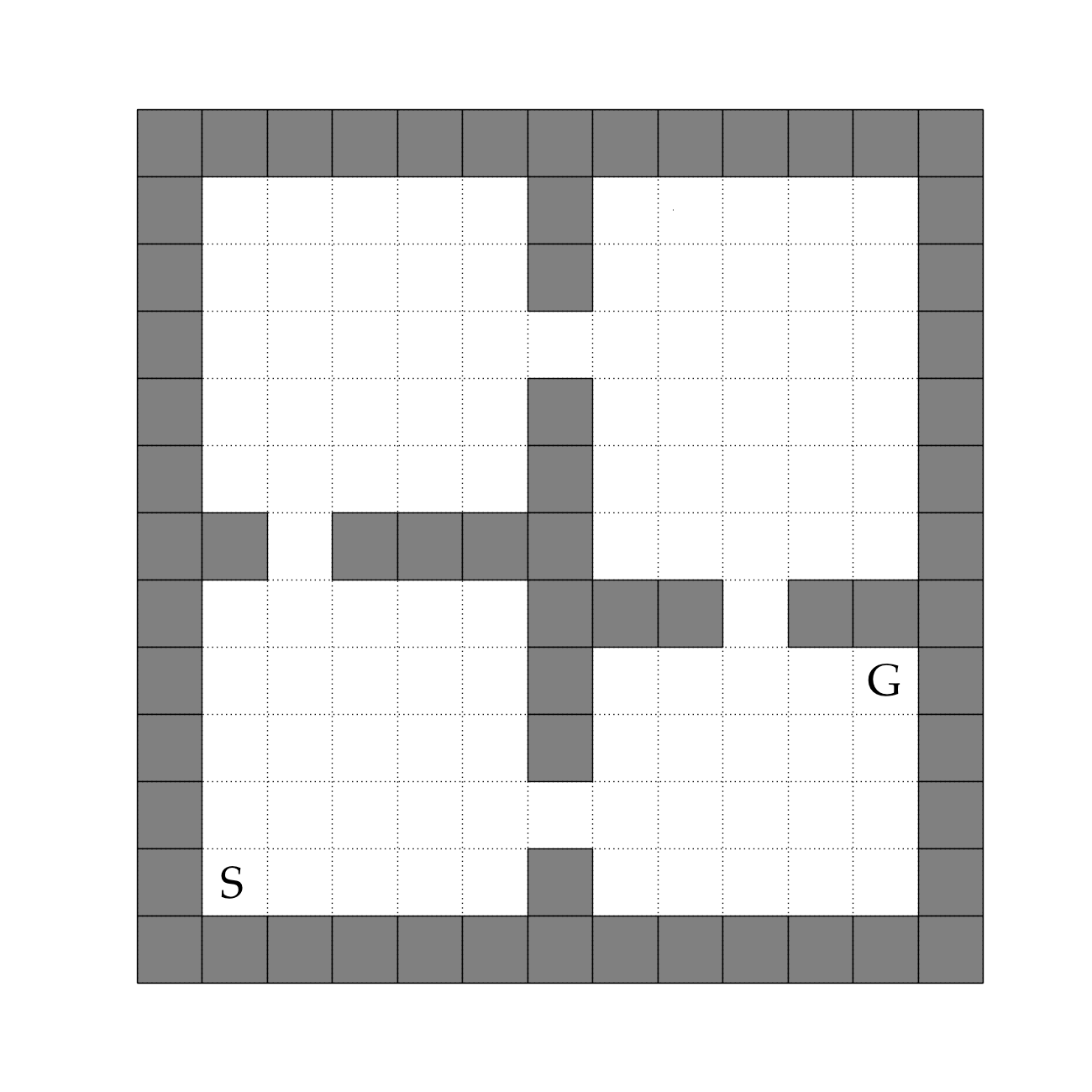}
    \end{subfigure}
    \begin{subfigure}[b]{0.24\columnwidth}
        \raisebox{4.00mm}{\includegraphics[width=\columnwidth]{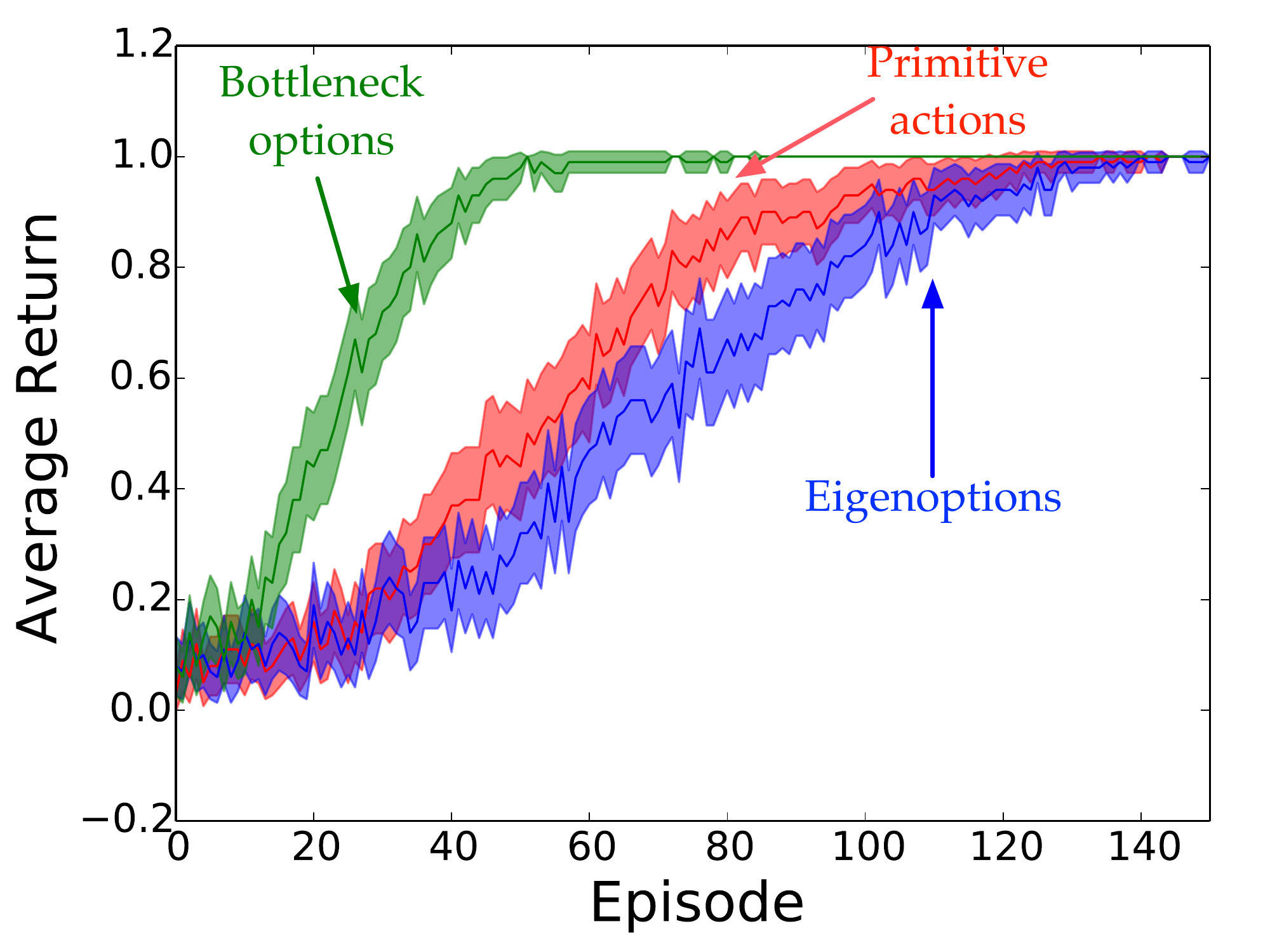}}
    \end{subfigure}
    \begin{subfigure}[b]{0.24\columnwidth}
        \includegraphics[width=\columnwidth]{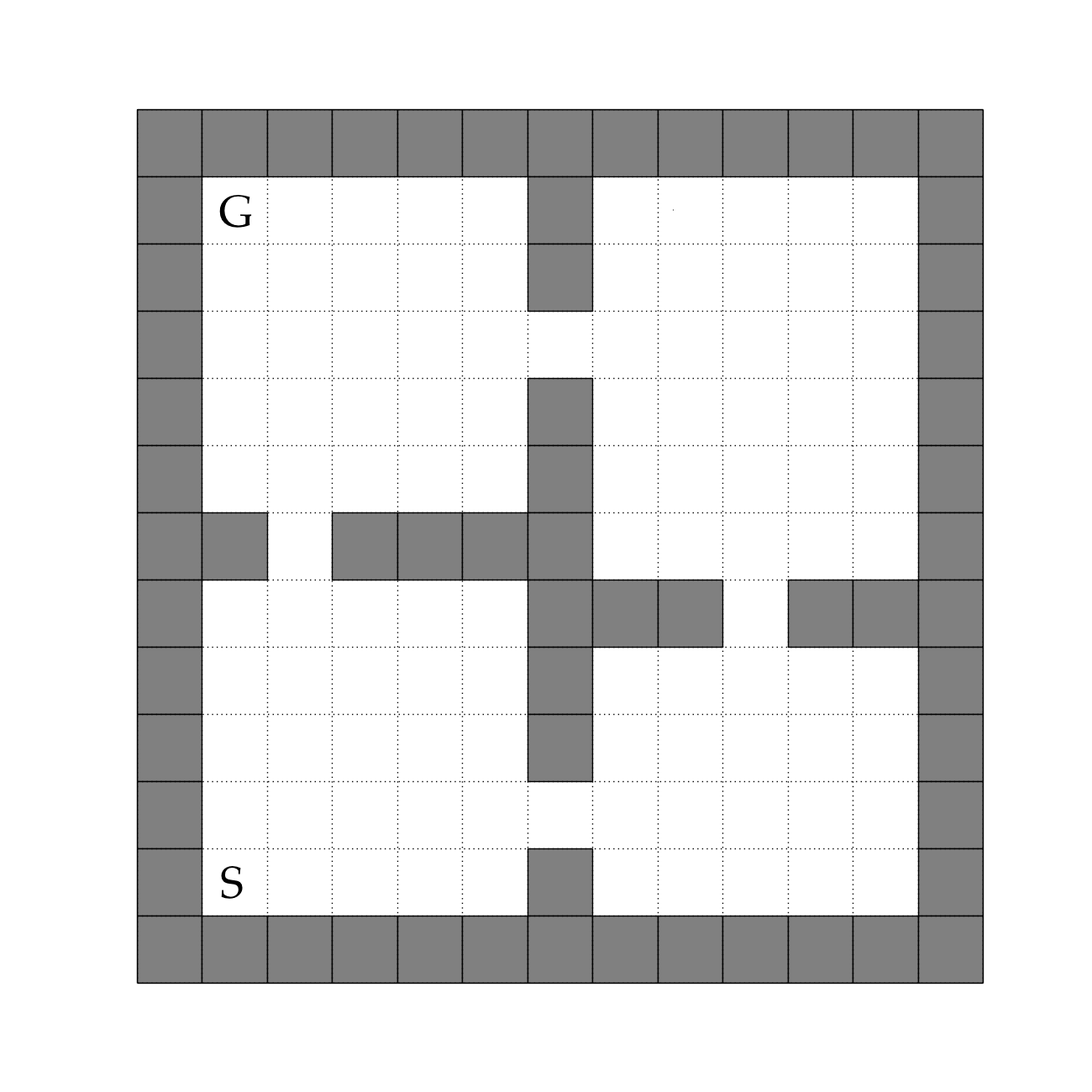}
    \end{subfigure}
    \begin{subfigure}[b]{0.24\columnwidth}
        \raisebox{4.00mm}{\includegraphics[width=\columnwidth]{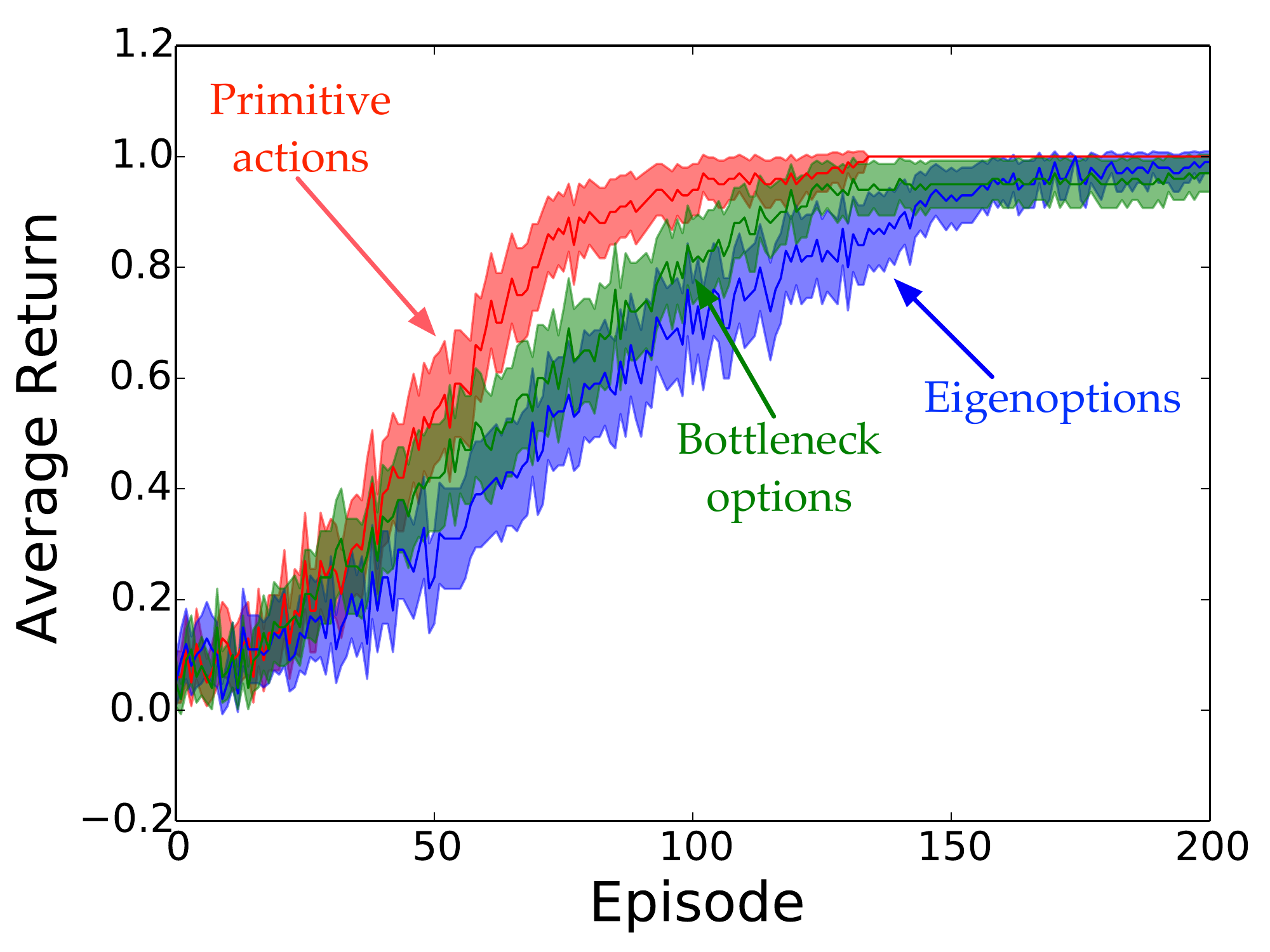}}
    \end{subfigure}
    \begin{subfigure}[b]{0.24\columnwidth}
        \includegraphics[width=\columnwidth]{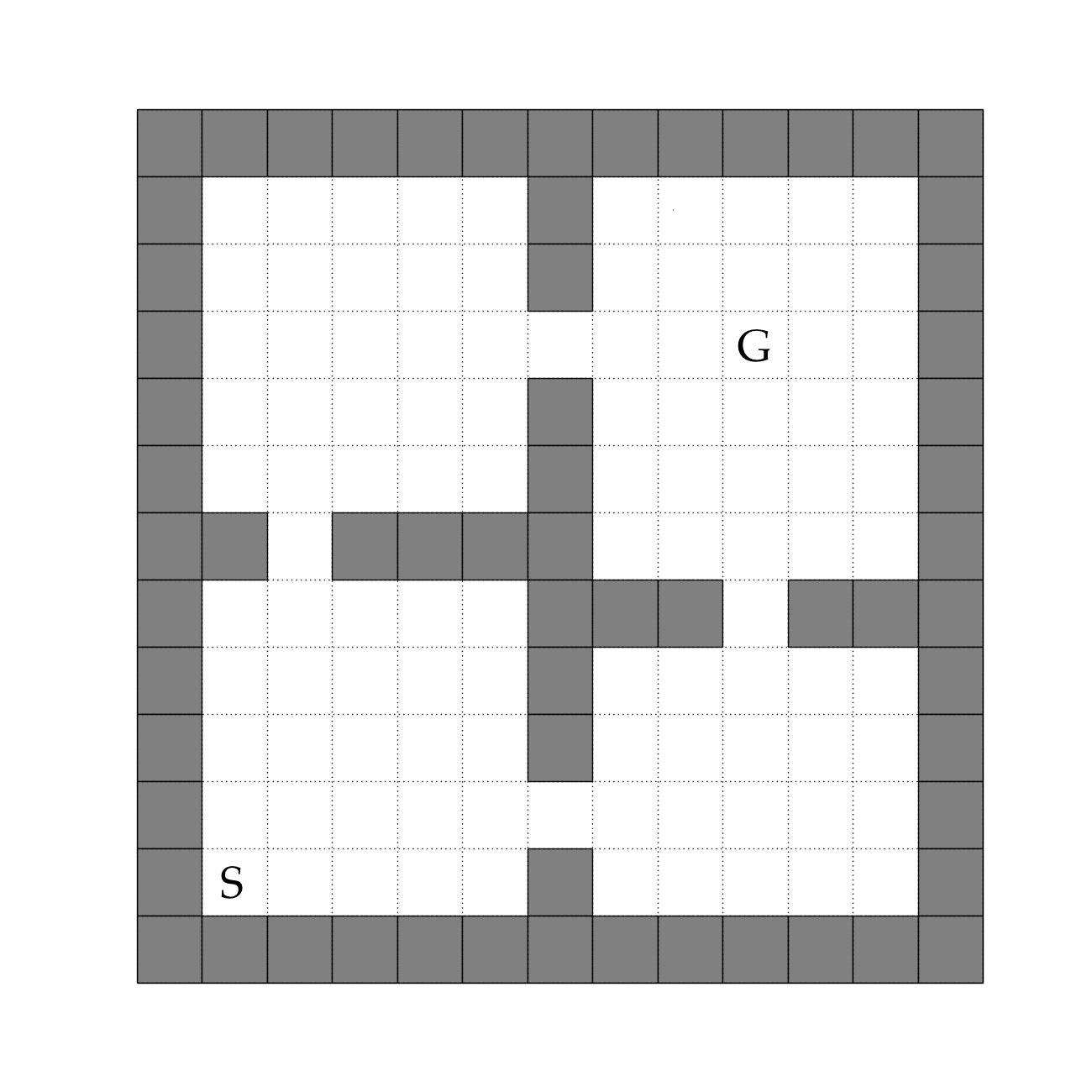}
    \end{subfigure}
    \begin{subfigure}[b]{0.24\columnwidth}
        \raisebox{4.00mm}{\includegraphics[width=\columnwidth]{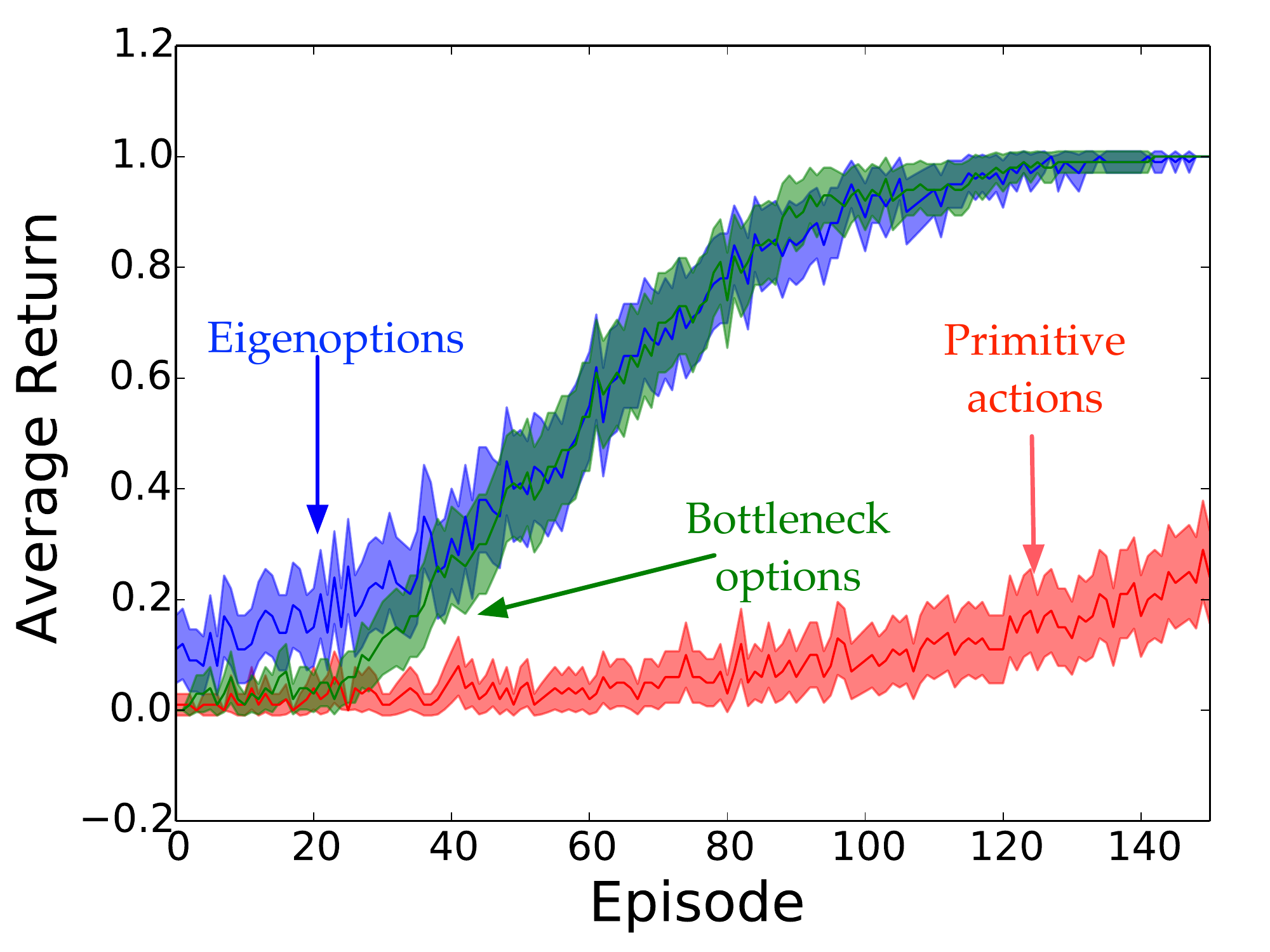}}
    \end{subfigure}
    \begin{subfigure}[b]{0.24\columnwidth}
        \includegraphics[width=\columnwidth]{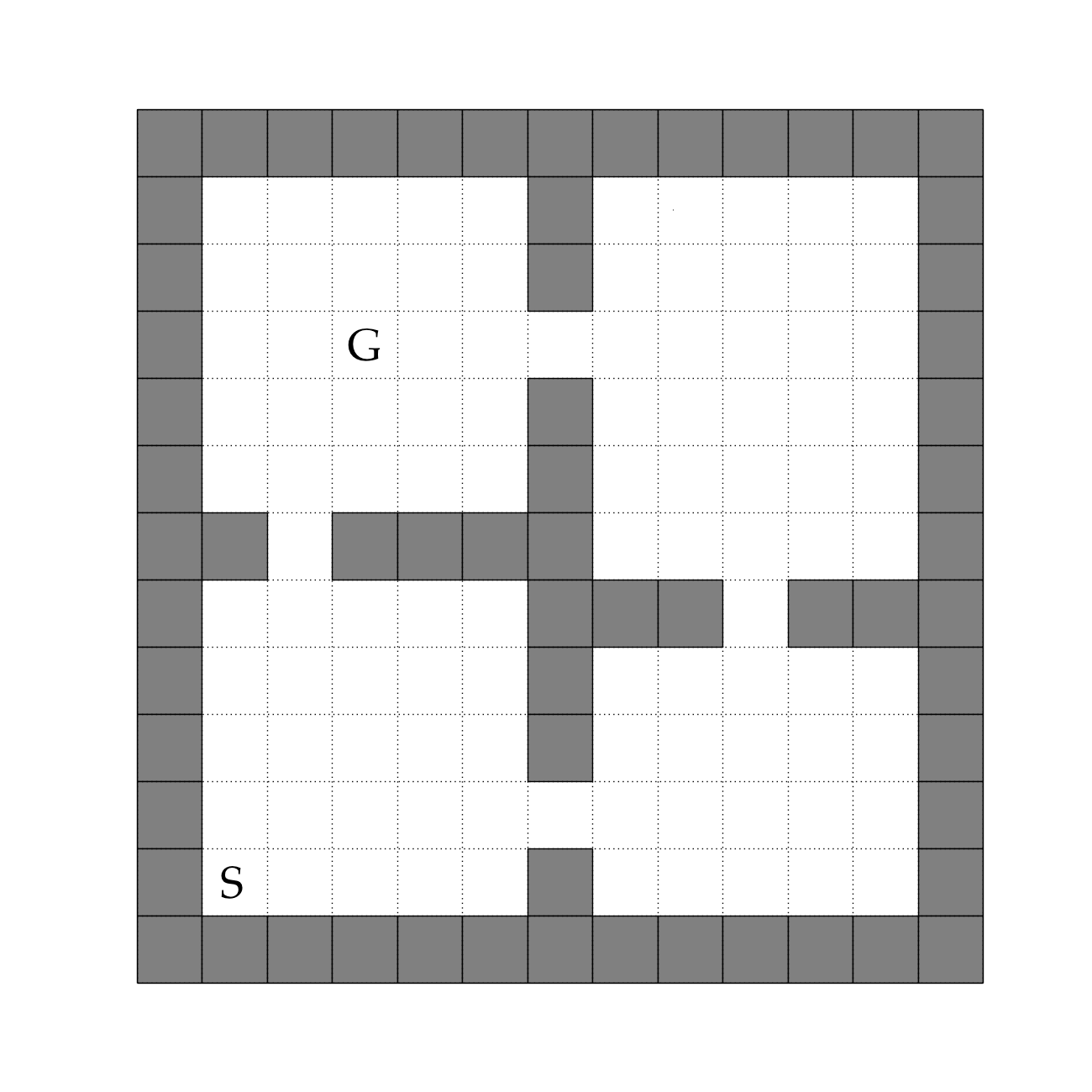}
    \end{subfigure}
    \begin{subfigure}[b]{0.24\columnwidth}
        \raisebox{4.00mm}{\includegraphics[width=\columnwidth]{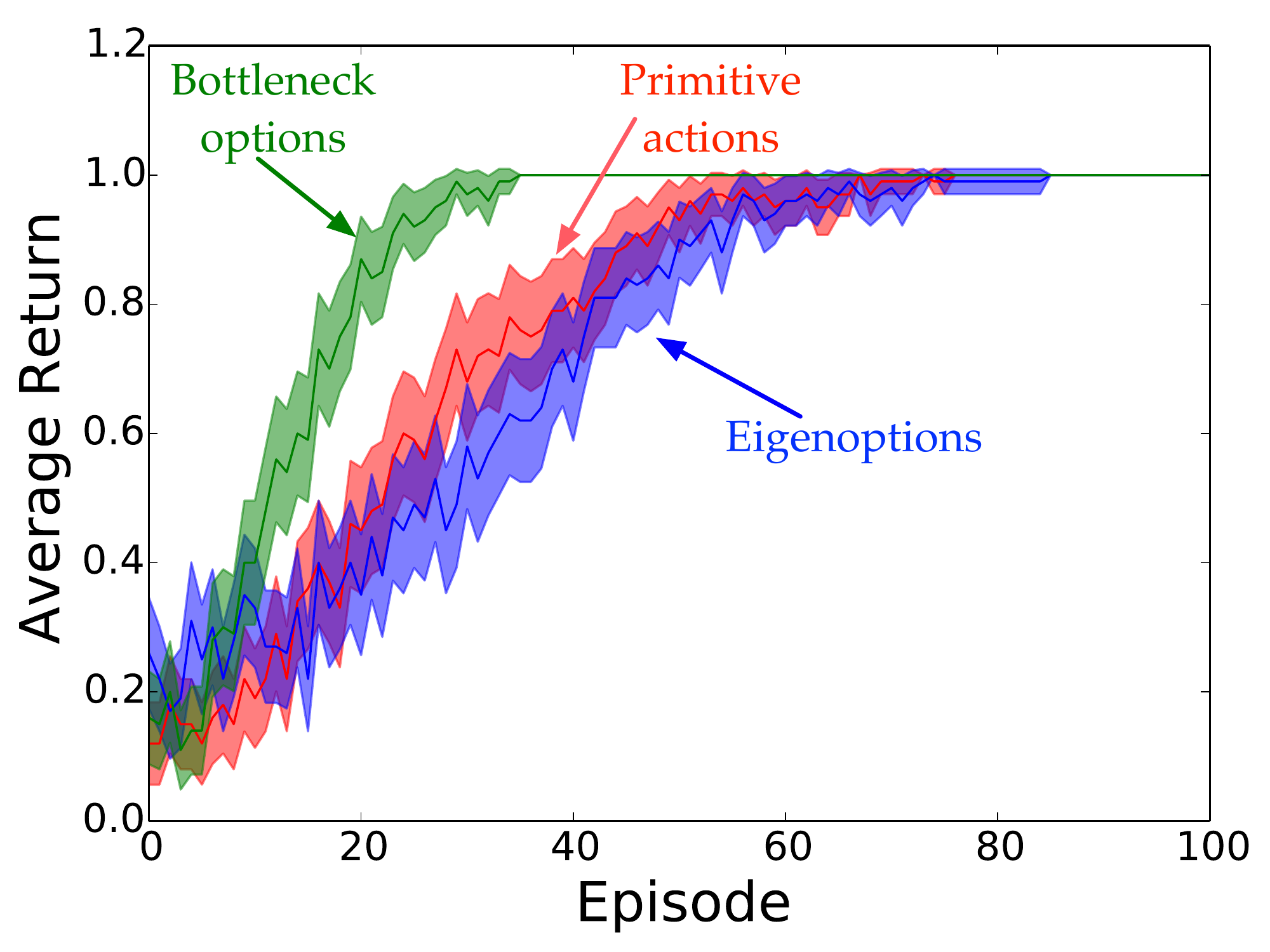}}
    \end{subfigure}
    \begin{subfigure}[b]{0.24\columnwidth}
        \includegraphics[width=\columnwidth]{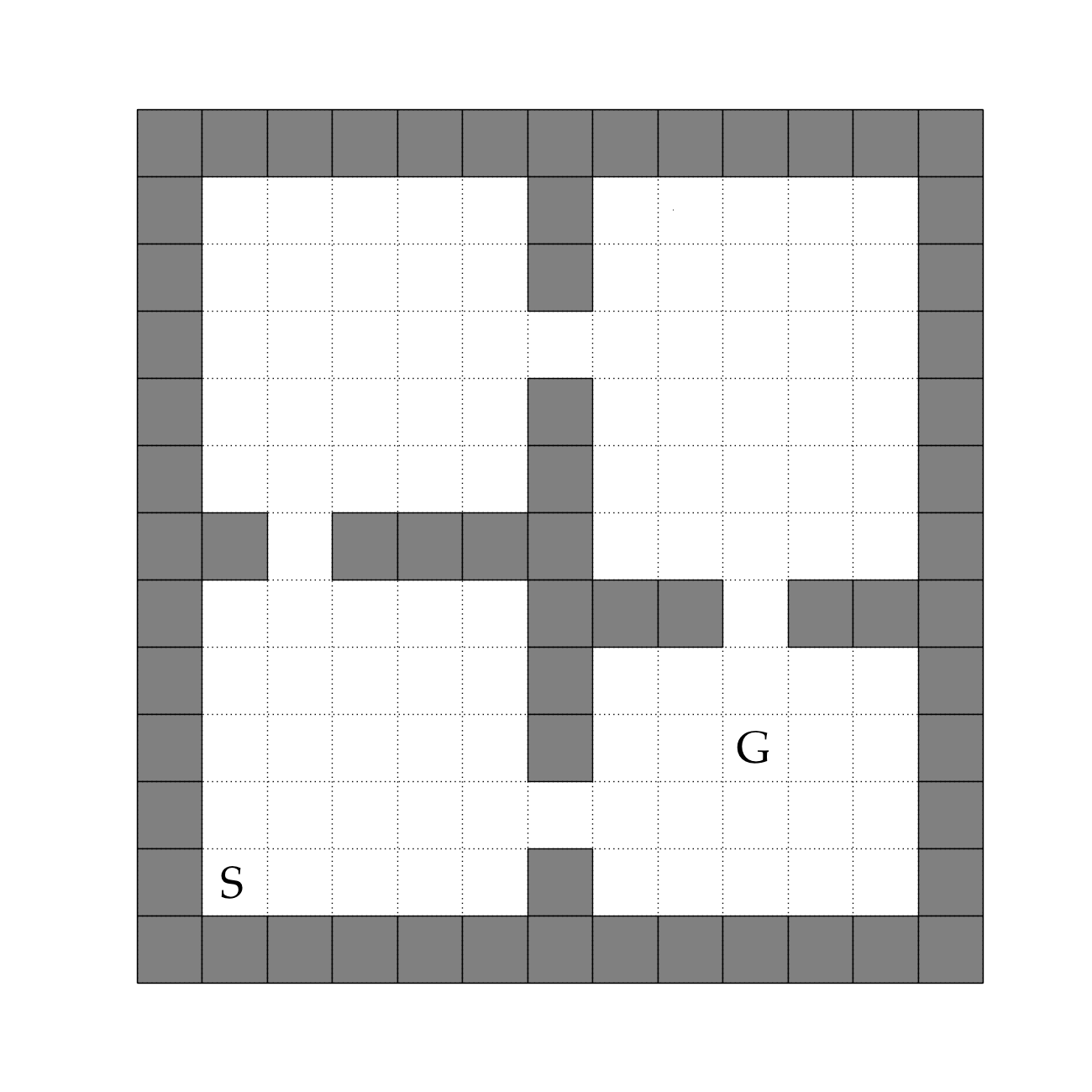}
    \end{subfigure}
    \begin{subfigure}[b]{0.24\columnwidth}
        \raisebox{4.00mm}{\includegraphics[width=\columnwidth]{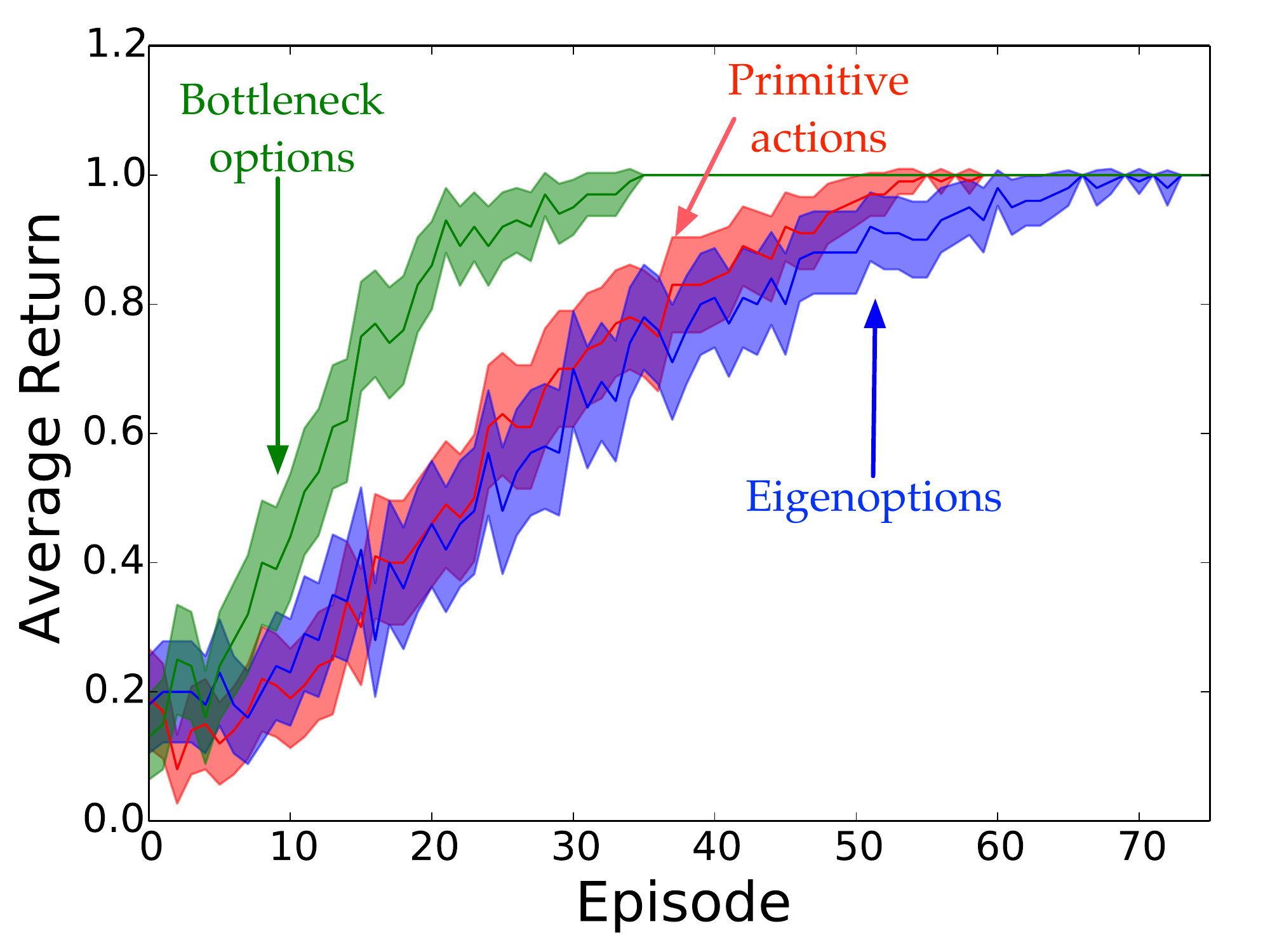}}
    \end{subfigure}
    \begin{subfigure}[b]{0.24\columnwidth}
        \includegraphics[width=\columnwidth]{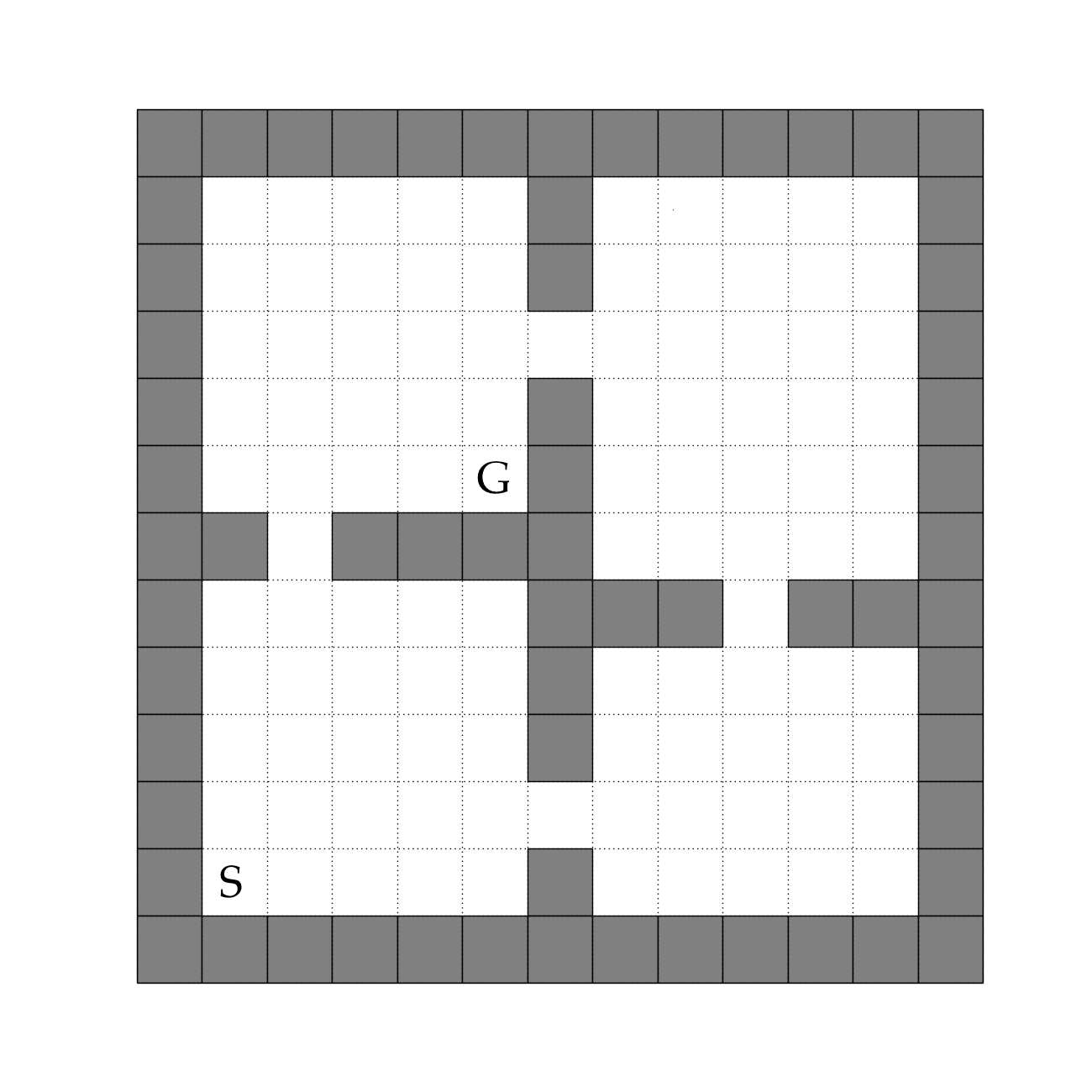}
    \end{subfigure}
    \begin{subfigure}[b]{0.24\columnwidth}
        \raisebox{4.00mm}{\includegraphics[width=\columnwidth]{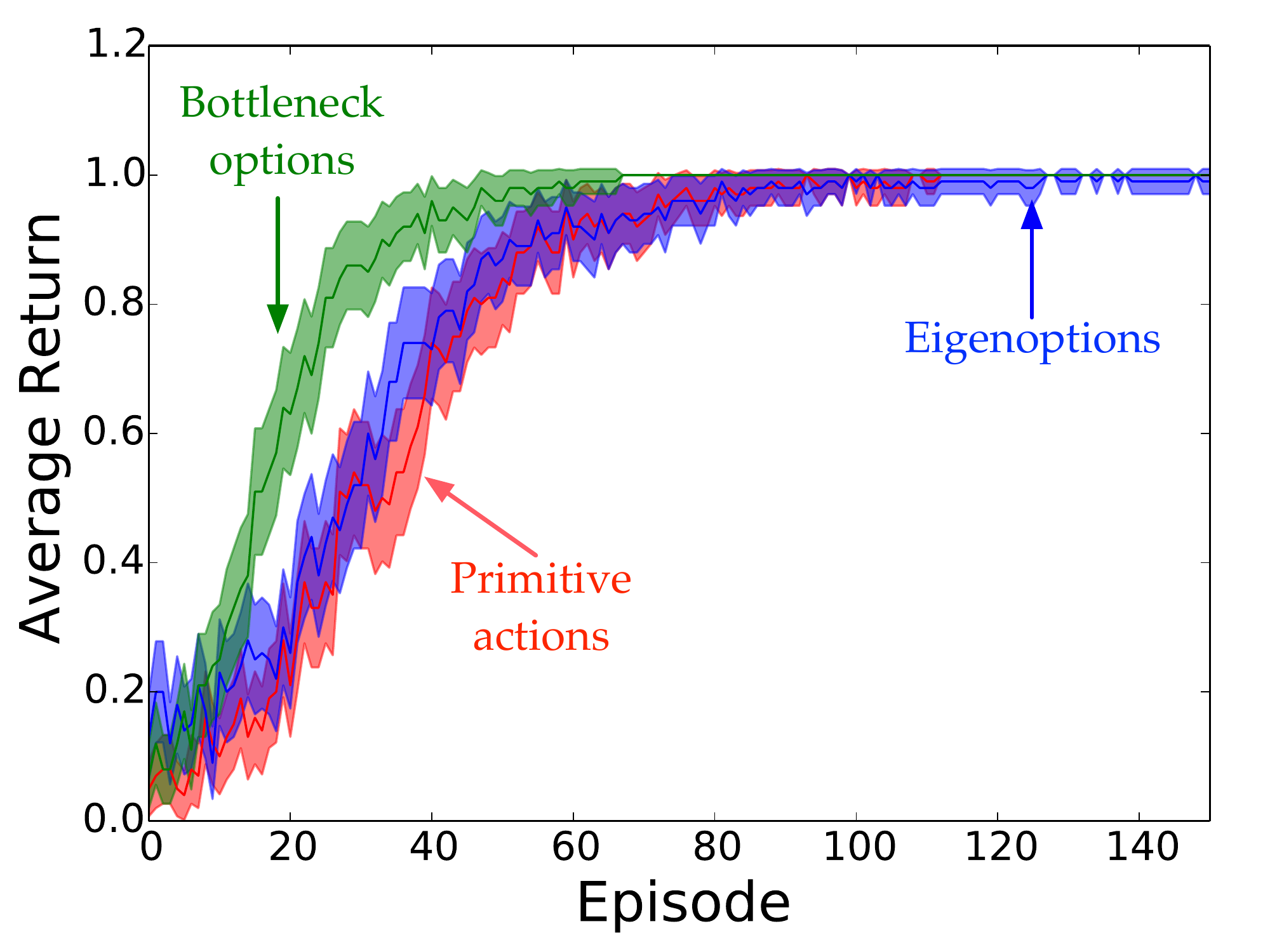}}
    \end{subfigure}
    \caption{Agents performance in different tasks when using eigenoptions, bottleneck options, and primitive actions.}\label{fig:multi}
\end{figure}

\end{document}